\documentclass[lettersize,journal]{IEEEtran}
\usepackage[noadjust]{cite}

\usepackage{amsmath,amsfonts}
\usepackage{algorithmic}
\usepackage{algorithm}
\usepackage{array}
\usepackage[caption=false,font=normalsize,labelfont=sf,textfont=sf]{subfig}
\usepackage{textcomp}
\usepackage{stfloats}
\usepackage{url}
\usepackage{verbatim}
\usepackage{graphicx}
\usepackage{cite}
\usepackage{amsmath,amsfonts,amssymb,amsthm}
\usepackage{bbm}
\usepackage{algorithmic}
\usepackage{algorithm}
\usepackage{array}
\usepackage[caption=false,font=normalsize,labelfont=sf,textfont=sf]{subfig}
\usepackage{textcomp}
\usepackage{stfloats}
\usepackage{url}
\usepackage{verbatim}
\usepackage{graphicx}
\usepackage{cite}
\usepackage{float}
\usepackage{balance}
\usepackage{xcolor}
\usepackage{placeins}
\usepackage{varwidth}
\usepackage{algorithmic}
\hyphenation{op-tical net-works semi-conduc-tor IEEE-Xplore}

\newtheorem{theorem}{Theorem}[section]

\newtheorem{lemma}[theorem]{Lemma}

\newtheorem{definition}[theorem]{Definition}
\newtheorem{assumption}[theorem]{Assumption}
\newtheorem{remark}[theorem]{Remark}

\def\de{=}

\DeclareMathOperator*{\esssup}{ess\,sup}

\newif\iftwocol
\twocolfalse

\iftwocol
\else
    \onecolumn
\fi
\newcommand{\maybeNewline}{\iftwocol\\\fi}

\newcommand\eqnum{\addtocounter{equation}{1}\tag{\theequation}}

\begin{document}

\title{Diffusion-Based Hypothesis Testing and Change-Point Detection}
\author{Sean Moushegian, Taposh Banerjee, Vahid Tarokh,~\IEEEmembership{Fellow,~IEEE}
\thanks{Sean Moushegian and Vahid Tarokh are with the Department of Electrical and Computer Engineering, Duke University, Durham, North Carolina 27708.  Taposh Banerjee is with the Department of Industrial Engineering, University of Pittsburgh, Pittsburgh, Pennsylvania 15213.}
}

\markboth{} 
{Shell \MakeLowercase{\textit{et al.}}: A Sample Article Using IEEEtran.cls for IEEE Journals}


\maketitle

\begin{abstract}
Score-based methods have recently seen increasing popularity in modeling and generation.  Methods have been constructed to perform hypothesis testing and change-point detection with score functions, but these methods are in general not as powerful as their likelihood-based peers.  
Recent works consider generalizing the score-based Fisher divergence into a diffusion-divergence by transforming score functions via multiplication with a matrix-valued function or a weight matrix.  
In this paper, we extend the score-based hypothesis test and change-point detection stopping rule into their diffusion-based analogs. Additionally, we theoretically quantify the performance of these diffusion-based algorithms and study scenarios where optimal performance is achievable.   We propose a method of numerically optimizing the weight matrix and present numerical simulations to illustrate the advantages of diffusion-based algorithms.
\end{abstract}

\begin{IEEEkeywords}
Diffusion-Based Models, Quickest Change Detection, Fisher Divergence, Diffusion Divergence
\end{IEEEkeywords}

\section{Introduction}
\IEEEPARstart{I}{n} many engineering problems, one seeks to infer whether some data $X$ was generated from a null distribution $P_\infty$ or an alternate distribution $P_1$.  
This question underpins the detection problems of simple hypothesis testing 
(\cite{optimality_binning_distributed_hypo_test, np_test_zero_rate_multiterminal_hypo_test, optimal_grouping_hypo_test, distributed_hypo_test, active_hypo_test_anomaly})
and change-point detection 
(\cite{unnikrishnan2011minimax, qcd_time_varying, qcd_change_time, byzantine_qcd, ar_model_qcd}).
These problems have been extensively studied under the assumption that the densities of $P_\infty$ and $P_1$ are known (or can be learned from data).  In this setting, the log-likelihood ratio (LLR) test \eqref{eq:test_llr} and cumulative sum (CUSUM) stopping rule \eqref{eq:cusum_stopping_rule} are commonly used and optimal under the metrics of \eqref{eq:objective_hypothesis_testing} and \eqref{eq:cpd_objective}.
Further background on hypothesis testing and change-point detection are provided in Section \ref{sec:hypo_cpd_background}.

Score-matching (\cite{song2019score, song_sliced_score_matching}) is a method of training for neural network that directly estimates the score $s(X) \de \nabla_X \log p(X) : \mathbb{R}^d \mapsto \mathbb{R}^d$ of a distribution $P$ by training over a dataset of independent and identically distributed (i.i.d.) samples of $P$.  Furthermore, Monte-Carlo Markov Chain methods  (\cite{roberts_rosenthal, gbrbm_without_tears, aloui2025scorebasedmetropolishastingsalgorithms}) have been developed to sample from $P$ using $\nabla_X \log p(X)$.  
Recent works have proposed \textit{score-based} (Definition~\ref{def:fisher_divergence}) methods of hypothesis testing (\cite{wu2022score}) and change-point detection (\cite{wu_IT_2024}) under the assumption that only the score functions of $P_\infty, P_1$ are known.  Further background on score-based models is provided in Section \ref{sec:score_background}.

Under a popular objective for simple hypothesis testing \eqref{eq:objective_hypothesis_testing} and change-point detection \eqref{eq:cpd_objective}, the LLR test and CUSUM stopping rule are provably optimal \textit{for any bound on false alarm} (and in the case of hypothesis testing, for any batch size).  Their score-based counterparts, however, are not in general optimal, and for specific choices of $P_\infty, P_1$, can be shown to strictly underperform the LLR-based algorithms.
Even so, the score-based methods are merely one set of algorithms that can test hypotheses and detect change-points using score functions.  We investigate whether some other choice of algorithms which use the score functions can better compete with the LLR-based methods.

To do so, we first present the \textit{diffusion divergence}, studied by \cite{barp2022minimumsteindiscrepancyestimators} and applied to detection tasks by \cite{altamirano2023robustscalablebayesianonline}.  The diffusion divergence generalizes the Fisher divergence by transforming scores $\nabla_X \log p_\infty(X), \nabla_X \log p_1(X)$ by multiplication with a matrix-valued function $m(X) : \mathbb{R}^d \mapsto \mathbb{R}^{d \times w}$ which we refer to as the \textit{diffusion matrix function}.  The score-based algorithms are one special case of the diffusion algorithms where the identity matrix $I \in \mathbb{R}^{d \times d}$ is chosen to be the diffusion matrix function $m(X)$.

The performance of the diffusion algorithms will depend upon their particular choice of $m(X)$.
Using the metrics of \eqref{eq:objective_hypothesis_testing}, \eqref{eq:cpd_objective}, a good choice of $m(X)$ will produce diffusion algorithms which perform at least as well as the score-based algorithms and no better than the  (provably optimal) LLR-based algorithms.  
Conversely, diffusion algorithms can perform worse than even score-based algorithms for poorly chosen $m(X)$.
Thus, the selection of $m(X)$ so that it is optimal in a well-defined sense is an important goal of our work.

We now summarize the main contributions of this paper:
\IEEEpubidadjcol
\begin{enumerate}
    \item We propose a hypothesis test and a change-point detection stopping rule that are based on the diffusion divergence, generalizing the score-based hypothesis test of \cite{wu2022score} and change-point detection stopping rule of \cite{wu_IT_2024} by way of a \textit{diffusion matrix function} $m(X)$.  These algorithms are proposed in Section \ref{sec:diffusion_detection}.
    \item We bound (or calculate) the error exponent, expected detection delay, and mean time to false alarm of the diffusion-based algorithms asymptotically and as a function of $m(X)$.  These properties are presented in Section \ref{sec:diffusion_detection}.
    \item For each detection task, we present an optimization objective over $m(X)$. 
    We present the optimization objective for change-point detection in Section \ref{sec:optimization_objective_cpd} and for  hypothesis testing in Section \ref{sec:optimization_objective_hypo}.
    \item We show that for special choices of $P_\infty, P_1$,  
    there exists an $m(X)$ such that $Z_m(X) = Z_\texttt{\textup{KL}}(X)$ for all $X \in \mathbb{R}^d$. Thus, for these choices of $P_\infty, P_1$, the diffusion-based tests are optimal and equal to the LLR-based counterparts.  
 We show that for other choices of $P_\infty, P_1$, there is no $m(X)$ for which $Z_m(X) = Z_{\texttt{\textup{KL}}}(X)$ with probability one under the measures of $P_\infty$ or $P_1$.  We argue that the best diffusion algorithm for either detection task will perform no worse than the score-based algorithm and no better than the LLR algorithm.
    We present this analysis in Section \ref{sec:optimization_analytical}.
\end{enumerate}

\medskip
The remainder of this paper is organized as follows: in Section \ref{sec:score_background}, we provide background on score-based models.  In Section \ref{sec:hypo_cpd_background}, we provide background on the detection problems of hypothesis testing and change-point detection, and discuss LLR-based and score-based approaches to these problems.  In Section~\ref{sec:optimization_numerical}, we propose a differentiable loss function for the numerical optimization of $m(X)$.  In Section~\ref{sec:simulations}, we present numerical simulations to illustrate the performance and implementability of our proposed algorithms, and we provide the implementation detials of these simulations in Appendix~\ref{appendix:details_sims}.  Finally, in Appendix \ref{appendix:proofs}, we provide proof for the Theorems and Lemmas presented throughout the paper.

Throughout this paper, for any distribution $P$, we shall refer to the density as $p(X)$, the corresponding probability measure as $\mathbb{P}_P$, and the expectation as $\mathbb{E}_P$.

\section{Score-Based Models}
\label{sec:score_background}

In this section, we provide background on score-based models, which utilize the score $\nabla_X \log p(X)$ of a distribution $P$.  
We begin by defining the Fisher divergence between two distributions:
\begin{definition}[Fisher Divergence]
\label{def:fisher_divergence}
The Fisher divergence between distributions $P$ and $Q$ is defined to be:
\begin{equation*}\label{eq:FisherDivDef}
\mathbb{D}_{\texttt{\textup{F}}}(P \| Q) \de\mathbb{E}_{X\sim P} \left[ \frac{1}{2} \left\| \nabla_{{X}} \log p(X)- \nabla_{{X}} \log q(X)\right \|_2^2 \right].
\end{equation*}
\end{definition} 
We next define the Hyv\"arinen score:
\begin{definition}[Hyv\"arinen Score]
\label{def:hyvarinen_score}
We define the Hyv\"arinen score of $X$ with respect to distribution $Q$ as:
\begin{equation}
    \label{eq:hyv_score}
    \mathcal{S}_{\texttt{\textup{H}}}(X, Q) \de \frac{1}{2} \left \| \nabla \log q(X) \right \|_2^2 + \Delta_{X} \log q(X),
\end{equation}  
where $\Delta_X$ is the Laplacian operator: $\Delta_X f(X) \de \sum_{i=1}^d {\partial^2 f(X)} / {\partial X_i^2}$.
\end{definition}

\begin{lemma}[Hyv\"arinen's Theorem]
\label{lemma:hyvarinen_fisher}
Let $s(X) : \mathbb{R}^d \mapsto \mathbb{R}^d$.  Under the assumption that $\nabla_X \log p(X)$ and $s(X)$ are differentiable with a continuous derivative, that $\nabla_X \log p(X) \rightarrow 0$ and $s(X) \rightarrow 0$ as $\|X\| \rightarrow \infty$, and that both $\mathbb{E}_P[\|\nabla_X \log p(X)\|^2]$ and $\mathbb{E}_P[\| s(X) \|^2]$ are finite, we have that:
    \begin{equation}
    \label{eq:hyv_res}
    \mathbb{E}_{P}\bigg[ \frac{1}{2} \| \nabla_X \log p(X) - s(X) \|^2 \bigg] = \mathbb{E}_{P}\bigg[ \frac{1}{2} \|\nabla_X \log p(X)\|^2 + \frac{1}{2}\| s(X)  \|^2 + tr(\nabla_X s(X)) \bigg].
    \end{equation}  
    If for some distribution $Q$ with density $q(X)$ we set $s(X) = \nabla_X \log q(X)$, then we have that
    \begin{equation}
    \label{eq:hyv_thm_2}
    \mathbb{D}_\texttt{\textup{F}}(P \| Q) = \mathbb{E}_P\bigg[ \frac{1}{2}\| \nabla_X \log p(X) \|^2\bigg] + \mathbb{E}_P \bigg[\mathcal{S}_\texttt{\textup{H}}(X, Q)\bigg] .
    \end{equation}
\end{lemma}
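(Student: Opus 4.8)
The plan is to reduce both identities to one integration-by-parts computation. First I would expand the quadratic form pointwise as $\tfrac{1}{2}\|\nabla_X \log p(X) - s(X)\|^2 = \tfrac{1}{2}\|\nabla_X \log p(X)\|^2 + \tfrac{1}{2}\|s(X)\|^2 - \langle \nabla_X \log p(X), s(X)\rangle$ and take expectations under $P$. Since $\mathbb{E}_P[\|\nabla_X \log p(X)\|^2]$ and $\mathbb{E}_P[\|s(X)\|^2]$ are assumed finite, the cross term is integrable by Cauchy--Schwarz, and $\mathrm{tr}(\nabla_X s(X))$ is integrable by continuity of $\nabla_X s$ together with the same moment bounds; hence the identity \eqref{eq:hyv_res} is equivalent to the single claim $\mathbb{E}_P[\langle \nabla_X \log p(X), s(X)\rangle] = -\,\mathbb{E}_P[tr(\nabla_X s(X))]$.

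Next I would establish that claim by integrating by parts coordinatewise. Writing the expectation as an integral and using $p(X)\,\nabla_X \log p(X) = \nabla_X p(X)$ gives $\mathbb{E}_P[\langle \nabla_X \log p(X), s(X)\rangle] = \int_{\mathbb{R}^d} \langle \nabla_X p(X), s(X)\rangle\, dX = \sum_{i=1}^d \int_{\mathbb{R}^d} \tfrac{\partial p}{\partial X_i}(X)\, s_i(X)\, dX$. Integrating the $i$-th summand by parts in $X_i$ produces a boundary term plus $-\int_{\mathbb{R}^d} p(X)\,\tfrac{\partial s_i}{\partial X_i}(X)\, dX$. The boundary terms vanish by the decay hypotheses $\nabla_X \log p(X)\to 0$ and $s(X)\to 0$ as $\|X\|\to\infty$ together with the integrability assumptions (the standard Hyv\"arinen regularity argument, forcing the flux of $p\,s$ through large spheres to zero). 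Summing over $i$ yields $\mathbb{E}_P[\langle \nabla_X \log p(X), s(X)\rangle] = -\int_{\mathbb{R}^d} p(X)\sum_{i=1}^d \tfrac{\partial s_i}{\partial X_i}(X)\, dX = -\,\mathbb{E}_P[tr(\nabla_X s(X))]$, which is exactly what \eqref{eq:hyv_res} requires after reassembling the expanded square.

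For the second statement I would specialize $s(X) = \nabla_X \log q(X)$. Then $tr(\nabla_X s(X)) = tr\big(\nabla_X^2 \log q(X)\big) = \Delta_X \log q(X)$, so the last two terms on the right-hand side of \eqref{eq:hyv_res} combine into $\tfrac{1}{2}\|\nabla_X \log q(X)\|^2 + \Delta_X \log q(X) = \mathcal{S}_{\texttt{\textup{H}}}(X, Q)$ by \eqref{eq:hyv_score}, while the left-hand side is $\mathbb{D}_{\texttt{\textup{F}}}(P \| Q)$ by Definition \ref{def:fisher_divergence}. Rearranging then gives \eqref{eq:hyv_thm_2}.

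I expect the only real obstacle to be the rigorous justification that the boundary terms vanish: the stated hypotheses control the pointwise decay of $\nabla_X \log p$ and $s$, but one still has to argue that $p(X)\,s_i(X) \to 0$ fast enough along a suitable sequence of radii, invoking the finite-second-moment assumptions and a Fubini/dominated-convergence argument to pass from the coordinatewise partial integrations to the full identity. The remaining steps are elementary algebra and the definitions \eqref{eq:hyv_score} and in Definition \ref{def:fisher_divergence}.
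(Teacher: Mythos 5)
Your proof is correct and follows essentially the same route as the paper: the paper formally proves Lemma~\ref{lemma:hyvarinen_fisher} as the $m(X)=I$ specialization of the diffusion version (Lemma~\ref{lemma:hyvarinen}), but the proof of that more general lemma is exactly your argument — expand the square, reduce to the cross term, integrate by parts coordinatewise via $p\,\nabla_X\log p = \nabla_X p$, kill the boundary terms using the decay hypotheses, and invoke Fubini (the paper's Lemma~\ref{lemma:fubini} plays the role of your Cauchy--Schwarz/integrability observation). The "obstacle" you flag about rigorously vanishing boundary terms is handled in the paper by building the flux-decay condition directly into Assumption~\ref{assumption:hyvarinen_regularity} (Part~\ref{item:goes_to_zero_large_x}) and noting that the hypotheses of the lemma imply it when $m=I$; that is the standard Hyv\"arinen-style resolution you anticipated.
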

\begin{proof}
    The proof is given in Appendix~\ref{appendix:proofs}.
\end{proof}

If we do not know the density or score function of some distribution $P$, but do have a dataset $(X_i)_{i=1}^n$ of i.i.d. samples of $P$, we can use score matching  (\cite{song2019score, song_sliced_score_matching, vincent2011connection}) to estimate $\nabla_X \log p(X)$ by $s_\theta(X) : \mathbb{R}^d \mapsto \mathbb{R}^d$, where $\theta$ parameterizes $s$.
Defining
\begin{equation}
    J_\theta(X) = \bigg(\frac{1}{2}\| s_\theta(X) \|^2 + tr(\nabla_X s_\theta(X))\bigg),
\end{equation}
score matching optimizes
\begin{equation}
\label{eq:score_matching_sample_mean}
    \min_\theta \sum_{i=1}^n J_\theta(X_i).
\end{equation}
We note that in the limit of large $n$, $\frac{1}{n} \sum_{i=1}^n J_\theta(X_i)$  converges to $\mathbb{E}_P [J_\theta(X)]$,
and that if there exists a distribution $Q_\theta$ for which $s_\theta(X) = \nabla_X \log q_\theta(X)$, then $\mathbb{E}_P[J_\theta(X)] = \mathbb{E}_P[\mathcal{S}_\texttt{\textup{H}}(X, Q_\theta)]$.  In this case, score matching produces the $\theta$ which minimizes $\mathbb{D}_\texttt{\textup{F}}(P \| Q_\theta)$ by \eqref{eq:hyv_thm_2}.  We note that score matching is analogous to maximum likelihood estimation when $\mathbb{D}_\texttt{\textup{F}}$ and $\mathcal{S}_\texttt{\textup{H}}$ are replaced by $\mathbb{D}_\texttt{\textup{KL}}$ and negative-log-likelihood, respectively.

\section{Score-Based Hypothesis Testing and Change-Point Detection}
\label{sec:hypo_cpd_background}

We now present background on hypothesis testing and change-point detection, which we shall refer to as our \textit{detection tasks}. Both detection tasks require us to discern between two competing hypotheses: hypothesis $\mathcal{H}_\infty$, under which the data was generated by $P_\infty$, and hypothesis $\mathcal{H}_1$, under which the data was generated by $P_1$.  For both tasks, we discuss LLR-based and score-based solutions.

In this section and throughout the paper, we shall make the following assumptions regarding $P_\infty$ and $P_1$:
\begin{assumption}
\label{assumption:p_inf_p_1}
We assume that:
    \begin{enumerate}
        \item $P_\infty$ and $P_1$ are both supported on $\mathbb{R}^d$,
        \item $P_\infty \neq P_1$,
        \item $P_\infty, P_1$ are absolutely continuous with respect to the Lebesgue measure, and
        \item The densities $p_\infty, p_1$ are twice differentiable with continuous derivatives.
    \end{enumerate}
\end{assumption}

\subsection{Simple Hypothesis Testing}
\label{sec:background_hypo_test}
Suppose $n$ samples $( X_i)_{i=1}^n $ are drawn independently from some distribution $P_*$: under the null hypothesis $\mathcal{H}_\infty$, $P_* = P_\infty$ and under the alternate hypothesis $\mathcal{H}_1$, $P_* = P_1$.
The task of simple hypothesis testing (hereafter referred to as hypothesis testing) is to decide whether or not to reject $\mathcal{H}_\infty$.

We define a \textit{test} $T$ to be a function that maps data $( X_i)_{i=1}^n$ to a decision: $T(( X_i)_{i=1}^n) \in \{0, 1\}$.  We reject $\mathcal{H}_\infty$ if $T(( X_i)_{i=1}^n) = 1$, and fail to reject $\mathcal{H}_\infty$ otherwise.  A type I (type II) error occurs when the test incorrectly rejects (incorrectly fails to reject) $\mathcal{H}_\infty$.  We denote the probabilities of type I and type II error as
\begin{align}
\label{eq:t1eprob}
    \alpha_n(T) &\de \mathbb{P}_{\infty}[T(( X_i)_{i=1}^n) = 1],\\
\label{eq:t2eprob}
    \quad\quad \beta_n(T) &\de \mathbb{P}_{1}[T(( X_i)_{i=1}^n) = 0],
\end{align}
respectively.  We refer to $n$ as the \textit{batch size} of test $T$.

The work of this paper is motivated by the Neyman-Pearson regime, in which one cannot tolerate a probability of type I error in excess of some threshold $\overline{\alpha}$.  Thus, we wish to find a test $T$ that is a solution of (or at least \textit{approximates} a solution of):
\begin{equation}
\label{eq:objective_hypothesis_testing}
    \min_{T} \beta_n(T) \; \text{ subject to } \; \alpha_n(T) \leq \overline{\alpha}.
\end{equation}

The \textit{type II error exponent} describes the relationship between batch size and type II error probability for large $n$.
\begin{definition}[Type II Error Exponent]
\label{def:error_exponent}
For a test $T$, the type II error exponent $E$ is given by
\begin{equation}
    E =  -\limsup_{n \rightarrow \infty}  \frac{\log(\beta_n(T))}{n}.
\end{equation}
We refer to the type II error exponent of any test $T$ as $\mathcal{B}(T)$.
\end{definition}

We next proceed to consider two solutions to the problem of hypothesis testing.

\subsubsection{Log-Likelihood Ratio Test}

When one has complete knowledge of the densities of $P_\infty, P_1$, one can use the log-likelihood ratio test:
\begin{equation}
\label{eq:test_llr}
T_{\texttt{\textup{KL}}}^c(( X_i)_{i=1}^n) \de \begin{cases} 
      0 & \text{ if } \; \sum_{i=1}^n Z_{\texttt{\textup{KL}}}(X_i) < c \\
      1 & \text{ else,}
\end{cases}
\end{equation}
where
\begin{equation}
\label{eq:cusum_inst_det_score}
Z_{\texttt{\textup{KL}}}(X) \de \log p_1(X) - \log p_\infty(X),
\end{equation}
and where $c \in \mathbb{R}$ is some pre-determined threshold. Under the assumption that $\sum_{i=1}^n Z_{\texttt{\textup{KL}}}(X_i) \neq c $ almost surely, the Neyman-Pearson Lemma proves that the log-likelihood ratio test of \eqref{eq:test_llr} is optimal under the objective of \eqref{eq:objective_hypothesis_testing} for all choices of $\overline{\alpha} \in (0, 1)$ and for all batch sizes $n \in \mathbb{N}$ (\cite{neyman_pearson, cover_thomas, vantrees_detection}).

\subsubsection{Hyv\"arinen Score Test}

In \cite{wu2022score}, a score-based hypothesis test which utilizes the Hyv\"arinen score was proposed.  It is defined as
\begin{equation}
\label{eq:test_hscore}
T_{\texttt{\textup{F}}}^c(( X_i)_{i=1}^n) \de \begin{cases} 
      0 & \text{ if } \;\sum_{i=1}^n Z_{\texttt{\textup{F}}}(X_i) < c \\
      1 & \text{ else,}
\end{cases}
\end{equation}
where
\begin{equation}
\label{eq:fisher_inst_det_score}
Z_{\texttt{\textup{F}}}(X) \de \mathcal{S}_{\texttt{\textup{H}}}(X, P_\infty) - \mathcal{S}_{\texttt{\textup{H}}}(X, P_1),
\end{equation}
and where $c \in \mathbb{R}$ is some pre-determined threshold.

A composite hypothesis test was proposed in \cite{wu2022score} for the case where only the null distribution is precisely known. Furthermore, \cite{wu2022score} derived limiting distributions on $\frac{1}{n} \sum_{i=1}^n Z_{\texttt{\textup{F}}}(X_i)$ for $X_i \overset{iid}{\sim} P_\infty$ as $n \rightarrow \infty$, and provides conditions under which this limiting distribution (after scaling by $\sqrt{n}$ and shifting) is the standard normal distribution.

\subsection{Online Change-Point Detection}
\label{sec:background_cpd}

\nocite{tartakovsky2005general}
\nocite{tartakovsky2014sequential}

We next consider a problem in which we must detect between $\mathcal{H}_\infty, \mathcal{H}_1$ under temporal constraints.
 Suppose we are given probability distributions $P_\infty, P_1$ and a natural number $\nu \in \mathbb{N}$.  A data-stream $(X_t)^\infty_{i=1}$ indexed by time $i \in \mathbb{N}$ can be generated by the following process:
\begin{equation}
\label{eq:data_stream_generation}
    X_i \overset{iid}{\sim} \begin{cases}
        P_\infty & \text{ if } i < \nu, \\
        P_1 & \text{ if } i \geq \nu.
    \end{cases}
\end{equation}
We observe the data stream in real-time: at any time $i$, we can see past and present data, $(X_j)_{j \leq i}$, but cannot see future data, $(X_j)_{j > i}$.  Our objective is to raise an alarm as soon as possible after time $\nu$, but we wish to avoid prematurely raising the alarm before time $\nu$.

While hypothesis testing used a \textit{function} $T$, here we employ a \textit{stopping rule} $R$.
A \textit{false alarm} is the event where $R$ detects the change-point prematurely: $R < \nu$. In hypothesis testing, we considered the probability of type I error; in this setting, for any non-trivial stopping rule, the probability of a false alarm depends upon $\nu$ and can be made arbitrarily close to one (zero) for high (low) values of $\nu$, so we instead consider the \textit{mean time to false alarm}, a term which we shall use interchangeably with \textit{average run length}.  This quantity is given by:
\begin{equation}
\label{eq:def_arl}
    \text{ARL}(\tau) \de \mathbb{E}_\infty[\tau].
\end{equation}

Most non-trivial stopping rules will accumulate evidence of a change for some duration of time after $\nu$.  In this setting, we consider the mean time to alarm following $\nu$ rather than the probability of instantaneous detection. 
We define the worst-case average detection delay (\cite{lorden1971procedures}), which we shall also refer to as expected detection delay, by
\begin{align}
\label{eq:wadd}
\mathcal{L}_{\texttt{\textup{WADD}}}(\tau) &\de \sup_{\nu \geq 1} \esssup \mathbb{E}_\nu [(\tau-\nu + 1)^+ | \mathcal{F}_{\nu-1}],
\end{align}
where $\mathbb{E}_\nu$ refers to the expectation over data streams following \eqref{eq:data_stream_generation} for a particular $\nu$.  We seek to minimize the expected detection delay subject to a constraint on acceptable average run length $\overline{\gamma}$, as in \cite{wu_IT_2024}.  Thus, we wish to find a stopping rule that is a solution of (or at least \textit{approximates} a solution of):
\begin{equation}
\label{eq:cpd_objective}
    \min_\tau \; \mathcal{L}_{\texttt{\textup{WADD}}}(\tau) \; \text{ subject to } \; \text{ARL}(\tau) \geq \overline{\gamma}.
\end{equation}

Next, we consider one likelihood-based and one score-based solution to change-point detection.

\subsubsection{Cumulative Sum (CUSUM) Algorithm}
The Cumulative Sum (CUSUM) algorithm (\cite{page1955test, lorden1971procedures, moustakides1986optimal}) is a popular solution to the problem of change-point detection which is optimal under the objective of \eqref{eq:cpd_objective}.
The algorithm defines an \textit{instantaneous detection score}, $Z_{\texttt{\textup{KL}}}(X)$, which follows  \eqref{eq:cusum_inst_det_score}.  It further defines a \textit{cumulative detection score} by
\begin{equation}
\label{eq:cusum_cum_det_score}
Y_{\texttt{\textup{KL}}}(t) = \begin{cases} 
      0 & t=0 \\
      \max(0, Y_{\texttt{\textup{KL}}}(t-1) + Z_{\texttt{\textup{KL}}}(X_t)) & t > 0,
\end{cases}
\end{equation}
and a \textit{stopping rule} given by
\begin{equation}
\label{eq:cusum_stopping_rule}
\tau_{\texttt{\textup{KL}}}^c \de \min \{t \in \mathbb{N} : Y_\texttt{\textup{KL}}(t) \geq c \}
\end{equation}
for some choice of threshold $c \geq 0$. 

\subsubsection{Score-Based Cumulative Sum (SCUSUM) Algorithm}

The Score-Based Cumulative Sum (SCUSUM) algorithm \cite{wu_IT_2024} replaces log-likelihood ratios by differences of Hyv\"arinen scores.
With instantaneous detection score $Z_{\texttt{\textup{F}}}(X)$ defined as in \eqref{eq:fisher_inst_det_score}, the SCUSUM algorithm defines a cumulative detection score as
\begin{equation}
\label{eq:scusum_cum_det_score}
Y_{\texttt{\textup{F}}}(t) = \begin{cases} 
      0 & t=0 \\
      \max(0, Y_{\texttt{\textup{F}}}(t-1) + Z_{\texttt{\textup{F}}}(X_t)) & t > 0,
\end{cases}
\end{equation}
and defines a stopping rule as
\begin{equation}
\label{eq:scusum_stopping_rule}
\tau_{\texttt{\textup{F}}}^c \de \min \{t \in \mathbb{N} : Y_\texttt{\textup{F}}(t) \geq c \}
\end{equation}
for some choice of stopping threshold $c > 0$.  
\begin{remark}
The presentation of SCUSUM in \cite{wu_IT_2024} includes a scaling factor $\lambda >0$ in the definition of $Z_{\texttt{\textup{F}}}(X)$.  For the limited purposes of this presentation, we can omit $\lambda$ without loss of generality.
\end{remark}

The pre-change drift  $\mathbb{E}_\infty[Z_\texttt{\textup{F}}(X)]$ and post-change drift $\mathbb{E}_1[Z_\texttt{\textup{F}}(X)]$ were calculated in \cite{wu_IT_2024}, where it was further demonstrated that the pre-change drift is strictly negative and the post-change drift strictly positive under certain conditions.  Furthermore, \cite{wu_IT_2024} provided theorems which bound the mean time to false alarm \eqref{eq:def_arl} and detection delay \eqref{eq:wadd} for any given choice of $P_\infty, P_1$.

\section{Diffusion-Based Detection}
\label{sec:diffusion_detection}

We have presented the Fisher divergence as well as a score-based test and stopping rule.  In this section, we first define the diffusion divergence, which generalizes the Fisher divergence. 
We next propose a diffusion-based test and stopping rule, generalizing the score-based algorithms introduced in Section~\ref{sec:score_background}. We conclude by providing theorems which bound the properties of the proposed diffusion-based test and algorithm.

Though the score-based methods are special cases of the diffusion-based ones (and though the diffusion-based methods still utilize the score $\nabla \log p(X)$), we shall refer to the Fisher divergence-based methods as \textit{score-based} and the methods utilizing $m(X)$ as \textit{diffusion-based} for clarity.

\subsection{Diffusion Divergence}
\label{sec:background_diffusion}
A generalization of the Fisher divergence was studied in  \cite{barp2022minimumsteindiscrepancyestimators} and applied to change-point detection in \cite{altamirano2023robustscalablebayesianonline}.  This generalization, called the \textit{diffusion divergence}, transforms the difference between scores in Definition~\ref{def:fisher_divergence} by multiplication with some matrix-valued function $m(X) : \mathbb{R}^d \mapsto \mathbb{R}^{d \times w}$:
\begin{definition}[Diffusion Divergence]
\label{def:diffusion_divergence}
The diffusion divergence between distributions $P$ and $Q$ is defined to be:
\begin{equation*}
    \mathbb{D}_m(P \| Q) = \mathbb{E}_{X\sim P} \left[ \frac{1}{2} \left\| m^T(X) \big( \nabla_{{X}} \log p(X)- \nabla_{{X}} \log q(X)\big)\right \|_2^2 \right]
\end{equation*}
for some matrix-valued function $m(X) : \mathbb{R}^d \mapsto \mathbb{R}^{d \times w}$.
\end{definition}

We impose conditions upon $m(X)$ in Section~\ref{sec:diffusion_detection}. 
 For ease of notation, we denote the \textit{function} $m(X)$ simply as $m$ in the subscript of $\mathbb{D}$.

We next define the \textit{diffusion-Hyv\"arinen score}, which generalizes the Hyv\"arinen score of Definition~\ref{def:hyvarinen_score}.
\begin{definition}[Diffusion-Hyv\"arinen Score]
\label{def:diffusion_hscore}
  \begin{equation*} 
    \mathcal{S}_m(X, P) \de \frac{1}{2} \left \|m^T(X) (\nabla_{X} \log p(X)) \right \|_2^2 
    \maybeNewline
    + \nabla \cdot m(X) m^T(X) \nabla \log p(X),
\end{equation*} 
where we denote the divergence of a function $f : \mathbb{R}^d \mapsto \mathbb{R}^d$ with Jacobian $J_f$ as $\nabla \cdot f(X) = tr(J_f) = \sum_i {\partial f_i(X)}/{\partial X_i}$.
\end{definition}

We conclude this section by presenting important identities:
\begin{lemma}
\label{lemma:hyvarinen}
For $m(X), P_\infty, P_1$ satisfying Assumptions~\ref{assumption:hyvarinen_regularity} and \ref{assumption:p_inf_p_1}, we have that:
\begin{align} \mathbb{E}_P\bigg[\frac{1}{2}\| m^T(X) \big( \nabla \log p(X) - \nabla \log q(X) \big)\|^2 \bigg]
= 
\mathbb{E}_P\bigg[ \frac{1}{2} \|m^T(X) \nabla \log p(X) \|^2 + \mathcal{S}_m(X, Q) \bigg]
\end{align}
\end{lemma}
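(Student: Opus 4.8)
The plan is to mimic the proof of Hyvärinen's classical theorem (Lemma~\ref{lemma:hyvarinen_fisher}), replacing the Laplacian/trace-of-Jacobian bookkeeping with the divergence operator applied to the vector field $X \mapsto m(X) m^T(X) \nabla \log q(X)$. First I would expand the squared norm on the left-hand side: writing $a(X) = m^T(X)\nabla \log p(X)$ and $b(X) = m^T(X)\nabla \log q(X)$, we have $\tfrac12\|a-b\|^2 = \tfrac12\|a\|^2 + \tfrac12\|b\|^2 - a^T b$, so the entire content of the lemma reduces to showing that the cross term satisfies
\begin{equation*}
\mathbb{E}_P\big[ a(X)^T b(X) \big] = \mathbb{E}_P\big[ (\nabla \log p(X))^T m(X) m^T(X) \nabla \log q(X) \big] = -\,\mathbb{E}_P\big[ \nabla \cdot \big(m(X) m^T(X) \nabla \log q(X)\big) \big].
\end{equation*}

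Second, I would establish this via integration by parts. Using $p(X) \nabla \log p(X) = \nabla p(X)$, the left-hand integrand against the density $p$ becomes $(\nabla p(X))^T\, v(X)$ where $v(X) \de m(X) m^T(X) \nabla \log q(X)$ is a vector field $\mathbb{R}^d \to \mathbb{R}^d$. Then $\int (\nabla p)^T v \, dX = \int \nabla \cdot (p\, v)\, dX - \int p\, (\nabla \cdot v)\, dX$, and the first term vanishes by the divergence theorem provided the boundary flux $p(X)\, v(X)$ decays at infinity — this is exactly what Assumption~\ref{assumption:hyvarinen_regularity} should be supplying (decay of the scores, and integrability/finiteness of the relevant second moments so all the integrals converge and Fubini-type interchanges are legitimate). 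The remaining term is $-\mathbb{E}_P[\nabla \cdot v(X)]$, which is precisely the divergence term appearing inside $\mathcal{S}_m(X,Q)$, and together with the $\tfrac12\|b\|^2 = \tfrac12\|m^T \nabla \log q\|^2$ term it assembles $\mathbb{E}_P[\mathcal{S}_m(X,Q)]$. Adding back $\tfrac12\|a\|^2 = \tfrac12\|m^T(X)\nabla\log p(X)\|^2$ gives the claimed right-hand side.

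The main obstacle is purely the regularity/justification step: verifying that the boundary term in the integration by parts genuinely vanishes and that all integrals are finite. This requires combining the decay and smoothness hypotheses on $m(X)$ and on $\log p_\infty, \log p_1$ (Assumptions~\ref{assumption:hyvarinen_regularity} and~\ref{assumption:p_inf_p_1}) to control the growth of $v(X) = m(X)m^T(X)\nabla \log q(X)$ and of $\nabla \cdot v(X)$ — in particular one needs $\mathbb{E}_P[\|m^T(X)\nabla\log p(X)\|^2]$, $\mathbb{E}_P[\|m^T(X)\nabla\log q(X)\|^2]$, and $\mathbb{E}_P[|\nabla\cdot v(X)|]$ all finite, and $p\, v \to 0$ suitably at infinity. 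Once these conditions are in hand the computation is the routine algebraic expansion above; since the problem is essentially the one-dimensional-per-coordinate integration by parts summed over $d$ coordinates, no new analytic difficulty beyond the classical Hyvärinen argument arises. I would therefore state the regularity assumptions carefully and then present the expansion-and-integration-by-parts in a few lines.
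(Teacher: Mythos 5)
Your proposal is correct and follows essentially the same route as the paper's proof: expand the squared norm, isolate the cross term $\mathbb{E}_P[(\nabla \log p)^T m m^T \nabla \log q]$, use the identity $p\,\nabla \log p = \nabla p$, integrate by parts so the cross term becomes $-\mathbb{E}_P[\nabla \cdot (m m^T \nabla \log q)]$, and invoke the decay condition in Assumption~\ref{assumption:hyvarinen_regularity} (Part~\ref{item:goes_to_zero_large_x}) to kill the boundary term. The only cosmetic difference is that you phrase the integration by parts via the divergence theorem on $\mathbb{R}^d$ in one shot, whereas the paper does it coordinate-by-coordinate using a Fubini-type interchange (Lemma~\ref{lemma:fubini}) and a one-dimensional product-rule lemma (Lemma~\ref{lemma:hyv_ibp}); these are equivalent.
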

\begin{proof}
    The proof is given in Appendix~\ref{appendix:proofs}.
\end{proof}

\begin{lemma}[Diffusion Drifts]
\label{lemma:drifts}
For distributions $P,Q$:
\begin{align}
\label{eq:drifts}
    \mathbb{E}_P[\mathcal{S}_m(X, P) - \mathcal{S}_m(X, Q)] &= -\mathbb{D}_m(P \| Q) ,\\
    \mathbb{E}_Q[\mathcal{S}_m(X, P) - \mathcal{S}_m(X, Q)] &= \;\;\mathbb{D}_m(Q \| P).
\end{align}
\end{lemma}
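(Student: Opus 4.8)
The plan is to obtain both drift identities directly from Lemma~\ref{lemma:hyvarinen}, applied to a generic pair of distributions and then to the two ``diagonal'' pairs. The starting observation is that the left-hand side of Lemma~\ref{lemma:hyvarinen} is, by Definition~\ref{def:diffusion_divergence}, exactly $\mathbb{D}_m(P\|Q)$, so that lemma can be read as
\[
\mathbb{D}_m(P \| Q) = \mathbb{E}_P\Big[\tfrac{1}{2}\,\|m^T(X)\nabla\log p(X)\|^2\Big] + \mathbb{E}_P\big[\mathcal{S}_m(X, Q)\big],
\]
and moreover the quadratic term appearing here is precisely the first summand of $\mathcal{S}_m(X,P)$ in Definition~\ref{def:diffusion_hscore}.

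For the first identity I would specialize the displayed equation to $Q=P$ (density $q=p$). Since $\mathbb{D}_m(P\|P)=0$, this gives $\mathbb{E}_P[\mathcal{S}_m(X,P)] = -\,\mathbb{E}_P[\tfrac12\|m^T(X)\nabla\log p(X)\|^2]$. Subtracting this from the displayed equation with a general $Q$ cancels the quadratic term and yields $\mathbb{E}_P[\mathcal{S}_m(X,P) - \mathcal{S}_m(X,Q)] = -\,\mathbb{D}_m(P\|Q)$, the first claim. For the second, I would apply Lemma~\ref{lemma:hyvarinen} with the two distributions interchanged: taking expectation under $Q$ with target $P$ gives $\mathbb{D}_m(Q\|P) = \mathbb{E}_Q[\tfrac12\|m^T(X)\nabla\log q(X)\|^2] + \mathbb{E}_Q[\mathcal{S}_m(X,P)]$, while target $Q$ gives $\mathbb{E}_Q[\mathcal{S}_m(X,Q)] = -\mathbb{E}_Q[\tfrac12\|m^T(X)\nabla\log q(X)\|^2]$; subtracting the two again cancels the quadratic term and produces $\mathbb{E}_Q[\mathcal{S}_m(X,P) - \mathcal{S}_m(X,Q)] = \mathbb{D}_m(Q\|P)$.

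The only point requiring attention is verifying that the hypotheses of Lemma~\ref{lemma:hyvarinen} (Assumptions~\ref{assumption:hyvarinen_regularity} and~\ref{assumption:p_inf_p_1}) are satisfied for each pair to which it is invoked, in particular the diagonal cases $(P,P)$ and $(Q,Q)$, where one needs the differentiability, decay, and square-integrability conditions on $m^T(X)\nabla\log p(X)$ (resp. $m^T(X)\nabla\log q(X)$) on their own; these are subsumed by the standing assumptions, so no new hypotheses arise. I expect this bookkeeping — rather than any computation — to be the only obstacle, and a minor one. If a self-contained argument were preferred, the same two identities follow from an integration-by-parts computation identical in structure to the proof of Lemma~\ref{lemma:hyvarinen}, but citing that lemma is the cleaner route.
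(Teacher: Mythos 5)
Your proof is correct and follows essentially the same route as the paper's: both derive the drift identities by applying Lemma~\ref{lemma:hyvarinen} in the form $\mathbb{D}_m(P\|Q) = \mathbb{E}_P[\tfrac12\|m^T(X)\nabla\log p(X)\|^2] + \mathbb{E}_P[\mathcal{S}_m(X,Q)]$ and observing that the quadratic term does not depend on $Q$, so it cancels in the difference. The paper phrases this by introducing the shorthand $C_m(R)$ and cancelling $\pm C_m$ explicitly, whereas you specialize to $Q=P$ and use $\mathbb{D}_m(P\|P)=0$; these are the same computation.
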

\begin{proof}
The proof is given in Appendix \ref{appendix:proofs}.
\end{proof}

\subsection{Diffusion-based Detection}

We begin by imposing mild regularity conditions on $m(X)$, which mirror the regularity conditions presented in \cite{hyvarinen2005estimation}.  We shall assume Assumption \ref{assumption:hyvarinen_regularity} everywhere in this paper.

\begin{assumption}[Diffusion-Hyv\"arinen Regularity Conditions \cite{hyvarinen2005estimation}]
\label{assumption:hyvarinen_regularity}
    For $(R,S) \in \{(P_\infty,P_1),(P_1,P_\infty)\}$, we assume that:
    \begin{enumerate}
        \item $m(X)$ and $\nabla \log s(X)$ are differentiable with continuous derivatives,
        \item $\mathbb{E}_R[\| m^T(X) \nabla \log s(X)\|^2] < \infty$ and $\mathbb{E}_R[\| m^T(X) \nabla \log r(X)\|^2] < \infty$, and
        \label{item:finite}
        \item $s(X) m(X) m^T(X) \nabla \log r(X) \rightarrow 0$ as $\|X\| \rightarrow \infty$. \label{item:goes_to_zero_large_x}
    \end{enumerate}
\end{assumption}
\begin{lemma}[Finite Divergences]
\label{lemma:finite_diffusion_divergence}
Part~\ref{item:finite} of Assumption~\ref{assumption:hyvarinen_regularity}  implies that $\mathbb{D}_m(P_\infty \| P_1)< \infty$ and $\mathbb{D}_m(P_1 \| P_\infty) < \infty$.
\end{lemma}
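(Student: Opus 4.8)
The goal is to show that Part~\ref{item:finite} of Assumption~\ref{assumption:hyvarinen_regularity}, i.e.\ finiteness of the two expectations $\mathbb{E}_R[\|m^T(X)\nabla\log s(X)\|^2]$ and $\mathbb{E}_R[\|m^T(X)\nabla\log r(X)\|^2]$ for $(R,S)\in\{(P_\infty,P_1),(P_1,P_\infty)\}$, forces $\mathbb{D}_m(P_\infty\|P_1)<\infty$ and $\mathbb{D}_m(P_1\|P_\infty)<\infty$. The natural tool is the elementary inequality $\|a-b\|^2\le 2\|a\|^2+2\|b\|^2$ applied pointwise in $X$ with $a=m^T(X)\nabla\log p(X)$ and $b=m^T(X)\nabla\log q(X)$ (using linearity of $v\mapsto m^T(X)v$).

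The plan is as follows. First I would write, for the case $(P,Q)=(P_\infty,P_1)$,
\begin{equation*}
\mathbb{D}_m(P_\infty\|P_1)=\mathbb{E}_{X\sim P_\infty}\Big[\tfrac12\big\|m^T(X)\big(\nabla\log p_\infty(X)-\nabla\log p_1(X)\big)\big\|^2\Big].
\end{equation*}
Next, applying $\|a-b\|^2\le 2\|a\|^2+2\|b\|^2$ inside the expectation and using monotonicity of $\mathbb{E}_{P_\infty}$, I would bound this by
\begin{equation*}
\mathbb{D}_m(P_\infty\|P_1)\le \mathbb{E}_{P_\infty}\big[\|m^T(X)\nabla\log p_\infty(X)\|^2\big]+\mathbb{E}_{P_\infty}\big[\|m^T(X)\nabla\log p_1(X)\|^2\big].
\end{equation*}
Both terms on the right are exactly the quantities assumed finite in Part~\ref{item:finite} of Assumption~\ref{assumption:hyvarinen_regularity} for the pair $(R,S)=(P_\infty,P_1)$ (the first with $r=p_\infty$, the second with $s=p_1$), so the right-hand side is finite and hence $\mathbb{D}_m(P_\infty\|P_1)<\infty$. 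The case $(P,Q)=(P_1,P_\infty)$ is symmetric: it uses the assumed finiteness of $\mathbb{E}_{P_1}[\|m^T(X)\nabla\log p_1(X)\|^2]$ and $\mathbb{E}_{P_1}[\|m^T(X)\nabla\log p_\infty(X)\|^2]$, which is precisely Part~\ref{item:finite} with $(R,S)=(P_1,P_\infty)$.

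There is no real obstacle here; this is a one-line consequence of the triangle/quadratic inequality combined with the exact form of the hypothesis, so the only care needed is bookkeeping to confirm that both $(R,S)$ orderings in the assumption supply the two integrals required for each of the two divergences. I would present this as a short direct argument rather than anything more elaborate.
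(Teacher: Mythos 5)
Your proof is correct and reaches the same conclusion via a slightly more direct route. The paper proceeds by the triangle inequality $\|\overline{r}(X)-\overline{s}(X)\|\le\|\overline{r}(X)\|+\|\overline{s}(X)\|$, squares to obtain the cross term $2\|\overline{r}(X)\|\,\|\overline{s}(X)\|$, and then invokes Cauchy--Schwarz to control $\mathbb{E}_R[\|\overline{r}(X)\|\,\|\overline{s}(X)\|]$. You instead apply the pointwise quadratic bound $\|a-b\|^2\le 2\|a\|^2+2\|b\|^2$, which eliminates the cross term before integrating and thus avoids Cauchy--Schwarz entirely; taking expectations then reduces the claim immediately to the two finite integrals supplied by Part~\ref{item:finite} of Assumption~\ref{assumption:hyvarinen_regularity}, with the two orderings of $(R,S)$ covering the two divergences as you note. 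Both arguments are short and correct; yours buys a marginally cleaner derivation at no cost in generality, while the paper's version makes the Cauchy--Schwarz dependence explicit, which is stylistically consistent with the other proofs in the appendix.
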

\begin{proof}
    The proof is given in Appendix \ref{appendix:proofs}.
\end{proof}

As an expectation over a norm, the diffusion divergence is strictly nonnegative for all distributions.  We next provide an assumption that the diffusion divergence is nonzero, which we shall assume everywhere this paper:
\begin{assumption}[Positive Divergences]
\label{assumption:positive_diffusion_divergence}
    We assume that $m(X)$ is chosen such that    $$\mathbb{D}_m(P_\infty \| P_1) > 0 \text{ and } \mathbb{D}_m(P_1 \| P_\infty) > 0.$$
\end{assumption}
\begin{remark}
\label{remark:m_invertible}
The work of \cite{barp2022minimumsteindiscrepancyestimators} uses the diffusion divergence to generalize score matching, which requires the diffusion divergence to satisfy $\mathbb{D}_m(P\|Q) = 0 \iff P = Q$ for \textit{all} $P,Q$.  For noninvertible $m(X)$, there might exist $P,Q$  for which $\nabla \log p(X) - \nabla \log q(X)$ lies in the kernel of $m(X)$ almost everywhere (causing $\mathbb{D}_m(P \|Q) = 0$ for $P \neq Q$), and hence \cite{barp2022minimumsteindiscrepancyestimators} imposed a condition of invertability on $m(X)$ for all $X \in \mathbb{R}^d$.  
Throughout this paper, all diffusion divergences take arguments only from $\{ P_\infty, P_1 \}$.  Therefore, we require only Assumption~\ref{assumption:positive_diffusion_divergence} and can therefore consider $m(X)$ which is not invertible. Hence, Assumption~\ref{assumption:positive_diffusion_divergence} permits $m(X)$ to be non-square ($w \neq d$) and the theorems of this paper are given for arbitrary $w$.
\end{remark}

Next, we generalize the test statistic of \cite{wu2022score} and the instantaneous detection score of \cite{wu_IT_2024} to include $m(X)$.  
\begin{definition}[Diffusion Test Statistic and Instantaneous Detection Score] 
\label{def:diffusion_inst_det_score}
For any choice of $m(X)$ that satisfies Assumption \ref{assumption:hyvarinen_regularity}, we define
\begin{equation}
    Z_m(X) \de \mathcal{S}_m(X, P_\infty) - \mathcal{S}_m(X, P_1),
\end{equation}
where $\mathcal{S}_m(\cdot, P)$ follows Definition \ref{def:diffusion_hscore}.  
\end{definition}
Note that $Z_m(\cdot)$ is a function of its subscript unlike $Z_{\texttt{\textup{KL}}}(\cdot)$ and $Z_{\texttt{\textup{F}}}(\cdot)$.  
We next propose a diffusion-based hypothesis test.
\begin{definition}[Diffusion Hypothesis Test] \label{definition:diffusion_test} 
We define
\begin{equation}
T_m^c (( X_i)_{i=1}^n) = \begin{cases}
      0 & \text{ if } \;\sum_{i=1}^n Z_m(X_i) < c \\ 1 & \text{ else} \end{cases} \end{equation}
for some choice of stopping threshold $c \in \mathbb{R}$.
\end{definition}

We further propose the diffusion change-point detection stopping rule.
\begin{definition}[Diffusion Change-Point Detection Stopping Rule]
\label{definition:diffusion_stopping_rule}
We define 
\begin{equation}
Y_m(t) = \begin{cases} 
      0 & t=0 \\
      \max(0, Y_m(t-1) + Z_{m}(X_t)) & t > 0,
\end{cases}
\end{equation}
and
\begin{equation}
\label{eq:diffusion_stopping_rule}
    \tau_m^c \de \min \{t \in \mathbb{N} : Y_m(t) \geq c \}
\end{equation}
for some choice of stopping threshold $c > 0$.
\end{definition}

\subsection{Properties of the Proposed Hypothesis Test}
We next present a theorem which bounds the performance of the diffusion hypothesis test.

\begin{theorem}[Error Exponent]
\label{theorem:stein_diffusion}
    Fix $P_\infty, P_1$, and $\overline{\alpha} \in (0, 1)$.   Let $m(X) : \mathbb{R}^d \mapsto \mathbb{R}^{d \times w}$ be some diffusion matrix function which satisfies both Assumption \ref{assumption:hyvarinen_regularity} and 
    \begin{equation}
        \mathbb{E}_1 [\exp (-Z_m(X))] = 1.
    \end{equation}
 
    Then, there exists some $c$ such that 
    \begin{enumerate}
        \item $\lim_{n \rightarrow \infty} \alpha_n(T^c_m) \leq \overline{\alpha}$ and 
        \item $\mathcal{B}(T^c_m) \geq \mathbb{D}_m(P_\infty \| P_1)$,
    \end{enumerate}
    with $T^c_m$ given by Definition~\ref{definition:diffusion_test} and with $\mathcal{B}(\cdot)$ given by Definition~\ref{def:error_exponent}.
\end{theorem}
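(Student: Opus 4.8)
The plan is to run a Stein's-lemma-style argument: choose the threshold $c$ of the test $T_m^c$ (Definition~\ref{definition:diffusion_test}) to sit just above the pre-change drift of the cumulative statistic $\sum_{i=1}^n Z_m(X_i)$, so that the type~I error stays below $\overline{\alpha}$, while the hypothesis $\mathbb{E}_1[\exp(-Z_m(X))]=1$ forces the type~II error to decay at rate $\mathbb{D}_m(P_\infty\|P_1)$. First I would record the two facts that carry the argument. By Lemma~\ref{lemma:drifts}, $\mathbb{E}_\infty[Z_m(X)] = -\mathbb{D}_m(P_\infty\|P_1)$; this is finite by Lemma~\ref{lemma:finite_diffusion_divergence} and strictly negative by Assumption~\ref{assumption:positive_diffusion_divergence}, so by the law of large numbers $\frac1n\sum_{i=1}^n Z_m(X_i) \to -\mathbb{D}_m(P_\infty\|P_1)$ under $P_\infty$. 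Second, since the samples are i.i.d., the hypothesis gives $\mathbb{E}_1\big[\exp(-\sum_{i=1}^n Z_m(X_i))\big] = \prod_{i=1}^n \mathbb{E}_1[\exp(-Z_m(X_i))] = 1$ for every $n$.

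Next I would take $c = c_n$ to be the Neyman--Pearson cutoff at level $\overline{\alpha}$, i.e.\ the smallest $c$ with $\mathbb{P}_\infty\big[\sum_{i=1}^n Z_m(X_i) \ge c\big] \le \overline{\alpha}$. By construction $\alpha_n(T_m^{c_n}) \le \overline{\alpha}$ for every $n$, which gives claim~(1). At the same time, for any $\epsilon > 0$ the law of large numbers makes $\mathbb{P}_\infty\big[\sum_{i=1}^n Z_m(X_i) \ge n(\epsilon - \mathbb{D}_m(P_\infty\|P_1))\big] \to 0 \le \overline{\alpha}$, so eventually $c_n \le n(\epsilon - \mathbb{D}_m(P_\infty\|P_1))$; letting $\epsilon \downarrow 0$ yields $\limsup_n \tfrac{c_n}{n} \le -\mathbb{D}_m(P_\infty\|P_1)$.

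For claim~(2) I would bound the type~II error by the elementary pointwise inequality $\mathbf{1}\{\sum_{i=1}^n Z_m(X_i) < c_n\} \le \exp\big(c_n - \sum_{i=1}^n Z_m(X_i)\big)$, so that
\[
\beta_n(T_m^{c_n}) = \mathbb{P}_1\!\left[\sum_{i=1}^n Z_m(X_i) < c_n\right] \le e^{c_n}\,\mathbb{E}_1\!\left[\exp\!\left(-\sum_{i=1}^n Z_m(X_i)\right)\right] = e^{c_n},
\]
using the mean-one identity above. Hence $\tfrac1n\log\beta_n(T_m^{c_n}) \le \tfrac{c_n}{n}$, so $\limsup_n \tfrac1n\log\beta_n(T_m^{c_n}) \le \limsup_n \tfrac{c_n}{n} \le -\mathbb{D}_m(P_\infty\|P_1)$, and therefore $\mathcal{B}(T_m^{c_n}) = -\limsup_n \tfrac1n\log\beta_n(T_m^{c_n}) \ge \mathbb{D}_m(P_\infty\|P_1)$, which is claim~(2) by Definition~\ref{def:error_exponent}.

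The main point requiring care --- and the closest thing to an obstacle --- is the threshold placement, which must be pushed as close as possible to, but not below, the $P_\infty$-drift level $-n\mathbb{D}_m(P_\infty\|P_1)$: too low breaks the $\overline{\alpha}$ constraint, too high costs exponent, and in particular a constant threshold gives only a Chernoff-type exponent strictly below $\mathbb{D}_m(P_\infty\|P_1)$. The Neyman--Pearson cutoff together with the $o(n)$ fluctuations of $\sum_{i=1}^n Z_m(X_i)$ under $P_\infty$ is exactly what achieves both requirements at once. One minor technicality, atoms in the law of $\sum_{i=1}^n Z_m(X_i)$ under $P_\infty$ (relevant only to the precise definition of $c_n$), is handled using the absolute continuity in Assumption~\ref{assumption:p_inf_p_1}; and the role of the hypothesis $\mathbb{E}_1[\exp(-Z_m(X))]=1$ is confined to the displayed change-of-measure bound, where it is indispensable: any other value would leave a residual factor $(\mathbb{E}_1[\exp(-Z_m(X))])^{n}$ that would shift the achievable exponent.
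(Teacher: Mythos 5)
Your proof is correct and takes essentially the same approach as the paper's: both use the law of large numbers together with the pre-change drift $\mathbb{E}_\infty[Z_m(X)] = -\mathbb{D}_m(P_\infty\|P_1)$ from Lemma~\ref{lemma:drifts} to control the type~I error, and both leverage the normalization $\mathbb{E}_1[\exp(-Z_m(X))]=1$ in a Markov/Chernoff change-of-measure bound to obtain $\beta_n \le e^{c_n}$. The only cosmetic difference is the threshold choice: you use the Neyman--Pearson cutoff $c_n$ directly, while the paper fixes an explicit threshold $c = n(\delta - \mathbb{D}_m(P_\infty\|P_1))$ and lets $\delta \downarrow 0$ afterwards; both deliver the same exponent.
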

\begin{proof}
    The proof is given in Appendix \ref{appendix:proofs}.
\end{proof}

\subsection{Properties of the Proposed Stopping Rule}

We next present theorems quantifying the mean time to false alarm and detection delay of the diffusion change-point detection stopping rule, which are generalizations of corresponding results for the Fisher divergence presented in  \cite{wu_IT_2024}.

\begin{theorem}[Average Run Length]
\label{theorem:arl}
For $\tau_m^c$ following \eqref{eq:diffusion_stopping_rule}, if $m(X)$ satisfies
\begin{equation}
\label{eq:cond_arl}
\mathbb{E}_\infty [\exp Z_m(X)] \leq 1,
\end{equation}
then
\begin{equation}
\label{eq:diffusion_arl}
      \mathbb{E}_\infty [\tau_m^c] \geq e^{c}.
\end{equation}
\end{theorem}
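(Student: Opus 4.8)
The plan is to exhibit a nonnegative supermartingale built from the cumulative detection score and apply the optional stopping theorem together with a maximal inequality. First I would define $W(t) \de \exp(Y_m(t))$ where $Y_m$ is the cumulative detection score of Definition~\ref{definition:diffusion_stopping_rule}, under the pre-change measure $\mathbb{P}_\infty$ (equivalently, treat the whole stream as i.i.d.\ from $P_\infty$, since $\text{ARL}$ is an expectation under $\mathcal{H}_\infty$). The key structural observation is the recursion $Y_m(t) = \max(0, Y_m(t-1) + Z_m(X_t))$, so that $W(t) = \max(1, W(t-1)\exp(Z_m(X_t)))$. Using the elementary bound $\max(1, ab) \le a\max(1,b) + (\text{correction})$ is awkward, so instead I would work with the more standard device: show that $\exp(Y_m(t))$ is dominated by a genuine exponential martingale. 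Concretely, condition~\eqref{eq:cond_arl}, $\mathbb{E}_\infty[\exp Z_m(X)] \le 1$, is exactly the statement that $M(t) \de \exp\big(\sum_{i=1}^{t} Z_m(X_i)\big)$ is a nonnegative supermartingale with $M(0)=1$ under $\mathbb{P}_\infty$, because $X_i$ are i.i.d.\ and $\mathbb{E}_\infty[M(t) \mid \mathcal{F}_{t-1}] = M(t-1)\,\mathbb{E}_\infty[\exp Z_m(X_t)] \le M(t-1)$.

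Next I would relate the CUSUM stopping rule to a maximum of sums. A classical fact (the "one-sided" representation of CUSUM) is that $\tau_m^c = \min\{t : Y_m(t) \ge c\}$ coincides with the first time the running statistic started from the most recent renewal exceeds $c$; more usefully, one has the sample-path identity $Y_m(t) = \max_{0 \le k \le t}\big(\sum_{i=k+1}^{t} Z_m(X_i)\big)$. Therefore $\tau_m^c \le t$ if and only if $\max_{0 \le k < \ell \le t}\big(\sum_{i=k+1}^{\ell} Z_m(X_i)\big) \ge c$, i.e.\ some increment of the walk $S_t \de \sum_{i=1}^t Z_m(X_i)$ over some interval reaches height $c$. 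The cleanest route from here is to bound $\mathbb{P}_\infty(\tau_m^c \le t)$ and then sum. Fix the starting index $k$; by Doob's maximal inequality applied to the supermartingale $M^{(k)}(\ell) \de \exp\big(\sum_{i=k+1}^{\ell} Z_m(X_i)\big)$ (for $\ell \ge k$, with $M^{(k)}(k)=1$), we get $\mathbb{P}_\infty\big(\max_{k \le \ell \le t} M^{(k)}(\ell) \ge e^c\big) \le e^{-c}\,\mathbb{E}_\infty[M^{(k)}(k)] = e^{-c}$. This is the heart of the argument and I expect it to be the main obstacle: one must be careful that the event $\{\tau_m^c \le t\}$ is captured by a \emph{single} interval-maximum event rather than a union over all $k$, since naively union-bounding over $k$ would lose a factor of $t$ and break the bound. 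The resolution is that the CUSUM statistic at its first passage equals the increment over exactly one interval — namely the interval since the last zero of $Y_m$ — so $\{\tau_m^c \le t\} = \{\exists\, \ell \le t : M^{(\kappa(\ell))}(\ell) \ge e^c\}$ where $\kappa(\ell)$ is that last-renewal index; and because $M^{(k)}$ started at the renewal time is still a supermartingale, the maximal inequality applies along the actual path of $Y_m$, giving $\mathbb{P}_\infty(\tau_m^c \le t) \le e^{-c}$ uniformly in $t$. Equivalently, and perhaps most transparently, one checks directly that $V(t) \de \exp(Y_m(t) \wedge c)$ stopped at $\tau_m^c$ satisfies the supermartingale property up to a harmless boundary term, so that $e^{-c}\,\mathbb{E}_\infty[\mathbbm{1}\{\tau_m^c \le t\}] \le e^{-c}\mathbb{E}_\infty[V(t \wedge \tau_m^c)] \le 1$.

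Finally, letting $t \to \infty$ gives $\mathbb{P}_\infty(\tau_m^c < \infty) \le e^{-c}$, and then the tail-sum formula yields
\begin{equation*}
\mathbb{E}_\infty[\tau_m^c] = \sum_{t=0}^{\infty} \mathbb{P}_\infty(\tau_m^c > t) \ge \sum_{t=0}^{\lceil e^c \rceil - 1} \mathbb{P}_\infty(\tau_m^c > t) \ge \lceil e^c\rceil\big(1 - e^{-c}\big),
\end{equation*}
which after a slightly sharper accounting (using $\mathbb{P}_\infty(\tau_m^c > t) \ge 1 - e^{-c}$ for all $t$ and summing, or the standard Wald/renewal refinement that $\mathbb{P}_\infty(\tau_m^c \le t)$ grows at rate at most $e^{-c}$ per unit time) gives $\mathbb{E}_\infty[\tau_m^c] \ge e^c$. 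I would follow the argument structure of the corresponding ARL bound for SCUSUM in \cite{wu_IT_2024}, which handles exactly this step for the special case $m(X) = I$; the only change needed is to replace $Z_\texttt{\textup{F}}$ by $Z_m$ throughout and invoke condition~\eqref{eq:cond_arl} in place of the analogous condition there. The assumptions on $m(X)$ (Assumption~\ref{assumption:hyvarinen_regularity}) are not needed beyond ensuring $Z_m$ is well-defined and integrable so that the supermartingale manipulations are legitimate.
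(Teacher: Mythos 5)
There is a genuine gap in your argument, and it is located exactly at the step you flagged as delicate. Your intermediate claim that $\mathbb{P}_\infty(\tau_m^c \le t) \le e^{-c}$ uniformly in $t$ (and hence $\mathbb{P}_\infty(\tau_m^c < \infty) \le e^{-c}$) is false. Under $P_\infty$ the CUSUM statistic $Y_m$ repeatedly returns to zero and restarts afresh, and each independent renewal cycle carries its own nonzero chance of crossing the threshold; consequently $\mathbb{P}_\infty(\tau_m^c \le t) \to 1$ as $t \to \infty$ in any non-degenerate case. The device you propose to justify the uniform bound, namely that $V(t) = \exp(Y_m(t)\wedge c)$ is ``a supermartingale up to a harmless boundary term,'' also fails: the recursion $Y_m(t)=\max(0,Y_m(t-1)+Z_m(X_t))$ can only increase $\exp(Y_m(t))$ relative to $\exp(Y_m(t-1)+Z_m(X_t))$, so at $Y_m(t-1)=0$ one gets $\mathbb{E}_\infty[\exp(Y_m(t))\mid\mathcal{F}_{t-1}]=\mathbb{E}_\infty[\max(1,\exp Z_m(X_t))]\ge 1=\exp(Y_m(t-1))$, which is the \emph{submartingale} direction, and strictly so unless $Z_m\le 0$ a.s. This boundary effect is not harmless; it is precisely the mechanism by which crossings accumulate across cycles.

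The ingredients you correctly identify -- the exponential supermartingale $\exp(\sum_{i\le t}Z_m(X_i))$, the one-sided CUSUM representation $Y_m(t)=\max_{0\le k\le t}\sum_{i=k+1}^t Z_m(X_i)$, Doob's maximal inequality giving a per-cycle crossing probability at most $e^{-c}$ -- are exactly those used in the paper. The missing idea is to \emph{count renewal cycles} rather than to bound the tail of $\tau_m^c$ directly. The paper introduces the renewal epochs $\eta_k$ at which the running sum drops below its previous floor, lets $M$ be the number of completed cycles before a crossing, and shows by iterating the per-cycle Doob bound that $\mathbb{P}_\infty(M>k)\ge(1-e^{-c})^k$, so $\mathbb{E}_\infty[M]\ge\sum_k(1-e^{-c})^k=e^c$. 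Since every cycle consumes at least one time step, $\tau_m^c\ge M$ and the claim follows. (The paper also works with the shifted increment $\tilde Z_m=Z_m+\delta$, where $\delta=-\log\mathbb{E}_\infty[\exp Z_m]\ge0$, so that the exponential process is an exact mean-one martingale rather than merely a supermartingale, and uses $\tilde Z_m\ge Z_m$ to compare stopping times; this is a technical convenience rather than an additional idea.) Your ``slightly sharper accounting'' sketch at the end gestures at a hazard-rate bound of the form $\mathbb{P}_\infty(\tau_m^c=t\mid\tau_m^c\ge t)\le e^{-c}$, which would indeed give the geometric tail and hence $\mathbb{E}_\infty[\tau_m^c]\ge e^c$; the renewal-cycle argument is how one actually proves such a per-step (or per-cycle) bound, and the uniform tail bound you wrote down does not imply it.
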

\begin{proof}
    The proof is given in Appendix \ref{appendix:proofs}.
\end{proof}

\begin{theorem}[Detection Delay]
\label{theorem:edd}
For $\tau_m^c$ following \eqref{eq:diffusion_stopping_rule}, the worst-case average detection delay is given by
\begin{equation}
\label{eq:diffusion_edd}
\quad   \mathcal{L}_{\texttt{\textup{WADD}}}(\tau_m^c) \sim \frac{c}{\mathbb{D}_m(P_1 \| P_\infty)} \text{ as } c \rightarrow \infty,
\end{equation}
where $f(c) \sim g(c)$ as $c \rightarrow \infty$ means that $\lim_{c \rightarrow \infty } {f(c)}/{g(c)} = 1$.
\end{theorem}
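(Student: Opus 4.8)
The plan is to recognize $\tau_m^c$ as a CUSUM-type stopping rule driven by the i.i.d.\ increments $Z_m(X_t)$, and to invoke the classical asymptotic theory for such procedures (in the spirit of Lorden and of the renewal-theoretic arguments used in \cite{wu_IT_2024}). The key structural facts are: under $P_1$ the increments have positive mean $\mathbb{E}_1[Z_m(X)] = \mathbb{D}_m(P_1 \| P_\infty)$ by Lemma~\ref{lemma:drifts}, and this mean is strictly positive and finite by Assumption~\ref{assumption:positive_diffusion_divergence} and Lemma~\ref{lemma:finite_diffusion_divergence}. First I would reduce the worst-case average detection delay $\mathcal{L}_{\texttt{\textup{WADD}}}(\tau_m^c)$ to the delay from a change at time $\nu = 1$ with an empty reset: a standard property of the CUSUM recursion is that the worst-case (over $\nu$ and $\mathcal{F}_{\nu-1}$) post-change delay is achieved when $Y_m(\nu-1) = 0$, so $\mathcal{L}_{\texttt{\textup{WADD}}}(\tau_m^c) = \mathbb{E}_1[\tilde\tau_m^c]$, where $\tilde\tau_m^c = \min\{t : \tilde Y_m(t) \geq c\}$ with $\tilde Y_m(0)=0$ and all data drawn from $P_1$.

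Next I would handle the one-sided walk $S_t = \sum_{j=1}^t Z_m(X_j)$ and the first passage time $\sigma_c = \min\{t : S_t \geq c\}$. Since $Z_m$ has positive drift under $P_1$, the reflecting (max-with-zero) recursion and the plain random walk agree asymptotically: the running minimum $\min_{j\le t} S_j$ is a.s.\ bounded, so $\tilde Y_m(t) = S_t - \min_{0 \le j \le t} S_j$ differs from $S_t$ by an $O_P(1)$ quantity, giving $\tilde\tau_m^c = \sigma_c + O_P(1)$ and hence $\mathbb{E}_1[\tilde\tau_m^c] \sim \mathbb{E}_1[\sigma_c]$ as $c \to \infty$. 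For $\sigma_c$ I would apply Wald's identity: $\mathbb{E}_1[S_{\sigma_c}] = \mathbb{E}_1[\sigma_c]\,\mathbb{E}_1[Z_m(X)]$. Because $S_{\sigma_c} = c + (\text{overshoot})$ and the overshoot is $O(1)$ uniformly in $c$ by renewal theory (the increments $Z_m(X)$ are non-arithmetic and integrable), dividing through yields $\mathbb{E}_1[\sigma_c] = c/\mathbb{D}_m(P_1\|P_\infty) + o(c)$, which is exactly the claimed $\mathcal{L}_{\texttt{\textup{WADD}}}(\tau_m^c) \sim c/\mathbb{D}_m(P_1\|P_\infty)$.

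The main obstacle is making the two reduction steps rigorous without additional moment hypotheses. The reduction to $\nu = 1$ requires the monotonicity/pathwise-domination property of the CUSUM statistic (larger starting value gives an earlier alarm), which is routine; more delicate is controlling the overshoot $S_{\sigma_c} - c$ and the gap between the reflected and unreflected walks uniformly in $c$, so that these contribute only $o(c)$ to the expectation. Here I would lean on the fact that the analogous theorem in \cite{wu_IT_2024} for the Fisher-divergence SCUSUM is proved under exactly these conditions — $Z_m$ plays the same role as $Z_{\texttt{\textup{F}}}$, with $\mathbb{D}_m(P_1\|P_\infty)$ replacing $\mathbb{D}_{\texttt{\textup{F}}}(P_1\|P_\infty)$ — and the proof transfers essentially verbatim once positivity and finiteness of the post-change drift are in hand, both of which are guaranteed by our standing Assumptions~\ref{assumption:hyvarinen_regularity} and \ref{assumption:positive_diffusion_divergence}. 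I would also note that the lower bound $\mathcal{L}_{\texttt{\textup{WADD}}}(\tau_m^c) \gtrsim c/\mathbb{D}_m(P_1\|P_\infty)$ follows from the general information-theoretic lower bound of Lorden (a change cannot be reliably detected before the accumulated log-likelihood-like evidence reaches $c$), while the matching upper bound comes from the Wald computation above; the two together give the stated asymptotic equivalence.
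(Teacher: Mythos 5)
Your proposal follows essentially the same route as the paper: reduce $\mathcal{L}_{\texttt{\textup{WADD}}}$ to the delay from a change at $\nu=1$ with $Y_m(\nu-1)=0$, compare the reflected CUSUM statistic to the plain random walk $\hat Y(n)=\sum_{i\le n}Z_m(X_i)$ with first-passage time $\hat R$, and apply Wald's identity together with an overshoot bound, exactly mirroring the SCUSUM argument in \cite{wu_IT_2024}. Two small differences from the paper's writeup are worth noting. First, the paper uses the clean one-sided domination $\hat Y(n)\le Y_m(n)$, hence $\tau_m^c\le\hat R$, and then only needs an \emph{upper} bound $\mathbb{E}_1[\hat R]\le c/\mu_m + O(1)$; your ``$\tilde\tau_m^c=\sigma_c+O_P(1)$'' is a two-sided reduction that is morally the same but needs a little more care to justify (bounded running minimum does not by itself give $O_P(1)$ \emph{difference} of stopping times with a uniform expectation bound). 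Second, and more substantively, the paper does not get its overshoot control ``for free'' from integrability and non-arithmeticity as you suggest: it explicitly introduces an additional second-moment condition $\mathbb{E}_1[\mathcal{S}_m(X,P_\infty)^2],\,\mathbb{E}_1[\mathcal{S}_m(X,P_1)^2]<\infty$ precisely so that Lorden's uniform overshoot inequality $\sup_{c\ge0}Q_c\le\mathbb{E}_1[(Z_m(X)^+)^2]/\mathbb{E}_1[Z_m(X)]$ applies. If you invoke only renewal theory for the limiting overshoot as $c\to\infty$ you can avoid the uniform bound, but you should then say that explicitly rather than claim no extra moment hypothesis is needed. Finally, you point out that a matching lower bound is also needed for the $\sim$ claim; the paper's written proof in fact only establishes the upper bound and then asserts the asymptotic equivalence, so your observation that the lower bound should come from Lorden's information-theoretic argument is a useful addition rather than a deviation.
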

\begin{proof}
The proof is given in Appendix \ref{appendix:proofs}.
\end{proof}

For each of our detection tasks, we have generalized one unique score-based detection algorithm into a collection of diffusion-based algorithms, each of which is different in both its choice of $m(X)$ and in its performance.  We next turn our attention to making a good choice of diffusion matrix function so that our detection algorithms can perform well.

\section{Optimization of the Diffusion-Matrix Function}
\label{sec:optimization}

In this section, we propose a method of finding the optimal $m(X)$.

\subsection{Objective Function: Change-Point Detection}
\label{sec:optimization_objective_cpd}
Under the objective presented in \eqref{eq:cpd_objective}, we wish to minimize the detection delay \eqref{eq:wadd} subject to a constraint on the average run length \eqref{eq:def_arl}.  Denoting our constraint on the average run length as $\overline{\gamma}$, we can set a stopping threshold to $c = \log \overline{\gamma}$; by Theorem~\ref{theorem:arl} the constraint of \eqref{eq:cpd_objective} (that ARL$(\tau) \geq \overline{\gamma}$) will be satisfied so long as $\mathbb{E}_\infty[\exp Z_m(X)] \leq 1$.  

The expected detection delay is a function of stopping threshold and $\mathbb{D}_m(P_1 \| P_\infty)$.  Since we have already set the stopping threshold $c = \log \overline{\gamma}$, we minimize \eqref{eq:diffusion_edd} by maximizing $\mathbb{D}_m(P_1 \| P_\infty)$.  This produces the objective: 
\begin{equation}
\label{eq:cpd_lagrangian}
    \max_m \;\mathbb{D}_m(P_1 \| P_\infty)\; \text{ s.t. } \mathbb{E}_\infty[\exp Z_m(X)] = 1.
\end{equation}

For any $m(X)$ satisfying Assumption~\ref{assumption:hyvarinen_regularity}, and any $k > 0$, it is easy to see that $Z_{km}(X) = k^2 Z_m(X)$ for all $X \in \mathbb{R}^d$ and that $\mathbb{D}_{km}(P_1 \| P_\infty) = k^2 \mathbb{D}_m (P_1 \| P_\infty)$.  Thus, if we were to optimize by simply maximizing $\mathbb{D}_m (P_1 \| P_\infty)$, then for any $m(X)$, $\mathbb{D}_{km}(P_1 \| P_\infty) > \mathbb{D}_m(P_1 \| P_\infty)$ whenever $k > 1$.  Thus, some constraint is necessary for a maximum of \eqref{eq:cpd_lagrangian} to exist.  Furthermore, in change-point detection (hypothesis testing), a stopping rule (test) using instantaneous detection score (statistic) $Z_m(\cdot)$ and threshold $k$ is identical to another stopping rule (test) using instantaneous detection score (statistic) $k^2Z_m(\cdot)$ and threshold $k^2c$ -- adjusting the scale $k$ does not actually change the algorithm (scale-invariance over the densities of  $P_\infty, P_1$ is an advantage of score-based methods but scale-invariance over $m$ frustrates this optimization).  Optimization over only $\mathbb{D}_m(P_1 \| P_\infty)$ may simply lead to a larger scaling on $m(X)$, and this optimization would neither improve the test nor converge to any limit.
Thus, some constraint on the scale of $m(X)$ is necessary, and the condition $\mathbb{E}_\infty[\exp Z_m(X)] \leq 1$ plays this role. Though the constraint was introduced as a proof technique, it is in fact essential to the optimization process.

\subsection{Objective Function: Hypothesis Testing}
\label{sec:optimization_objective_hypo}

We wish to find a diffusion hypothesis test between $P_\infty, P_1$ which performs reasonably well \textit{for all batch sizes $n$}.  The popular objective function of \eqref{eq:objective_hypothesis_testing}, however, is explicitly a function of a particular batch size $n$.  We modify the objective of \eqref{eq:objective_hypothesis_testing} to better account for the variable nature of the batch size:
\begin{equation}
\label{eq:objective_hypo_asymptotic}
    \max_T \; \mathcal{B}(T) \; \text{ subject to } \; \lim_{n \rightarrow \infty} \alpha_n (T) \leq \overline{\alpha},
\end{equation}
where $\mathcal{B}(\cdot)$ is defined by Definition~\ref{def:error_exponent}.  Though the objective of \eqref{eq:objective_hypo_asymptotic} considers the performance of a test for large $n$, it does not depend upon any one particular choice of $n$.

For any $m(X)$ following Assumption~\ref{assumption:hyvarinen_regularity} and satisfying $\mathbb{E}_1[\exp (-Z_m(X))] \leq 1$,  Theorem~\ref{theorem:stein_diffusion} provides that $\mathbb{D}_m(P_\infty \| P_1)$ is a lower-bound on the error exponent of some particular $T_m^c$ which obeys $\textit{any constraint}$ on asymptotic type I error probability.  
For any choice of $\overline{\alpha}$, we maximizing the type II error exponent of a test by maximizing its lower bound, $\mathbb{D}_m(P_\infty \| P_1)$, while enforcing  $\mathbb{E}_1[\exp (-Z_m(X))]=1$ to satisfy the condition of the Theorem \ref{theorem:stein_diffusion}.

Thus, we propose that the following objective optimizes $m(X)$ for the problem of hypothesis testing:
\begin{equation}
\label{eq:hypo_lagrangian}
    \max_m \;\mathbb{D}_m(P_\infty \| P_1)\; \text{ s.t. } \mathbb{E}_1[\exp (-Z_m(X))] = 1.
\end{equation}

\begin{remark}
    Recalling Definition~\ref{def:diffusion_inst_det_score}, we observe that the objective of \eqref{eq:cpd_lagrangian} takes the form of   \eqref{eq:hypo_lagrangian} when the roles of $P_\infty, P_1$ are interchanged.
\end{remark}

\subsection{Optimality of Diffusion-Based Hypothesis Testing and Change-Point Detection}
\label{sec:optimization_analytical}

We first demonstrate that for a special case of two Gaussian distributions with a common covariance matrix,  $Z_{\texttt{\textup{KL}}}(\cdot) = Z_m(\cdot)$ for a particular choice of $m(X)$:  

\begin{theorem}
\label{theorem:gaussian_analytical}
    Consider vectors $\mu_\infty, \mu_1 \in \mathbb{R}^d$ and positive definite matrix $V \in \mathbb{R}^{d \times d}$.  Let $P_\infty = \mathcal{N}(\mu_\infty, V)$ and  $P_1 = \mathcal{N}(\mu_1, V)$, and define $M^* \de V^{\frac{1}{2}}$.  
    Then,
    \begin{equation}
        \forall X \in \mathbb{R}^d,\;Z_{M^*}(X) = Z_{\texttt{\textup{KL}}}(X).
    \end{equation}
\end{theorem}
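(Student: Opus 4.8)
The plan is to compute both sides of the claimed identity explicitly and check they coincide pointwise. First I would write down the scores: since $p_\infty(X) \propto \exp(-\tfrac12 (X-\mu_\infty)^T V^{-1}(X-\mu_\infty))$, we have $\nabla \log p_\infty(X) = -V^{-1}(X-\mu_\infty)$ and likewise $\nabla \log p_1(X) = -V^{-1}(X-\mu_1)$. Hence $Z_{\texttt{\textup{KL}}}(X) = \log p_1(X) - \log p_\infty(X) = -\tfrac12(X-\mu_1)^T V^{-1}(X-\mu_1) + \tfrac12 (X-\mu_\infty)^T V^{-1}(X-\mu_\infty)$, which expands to a quantity affine in $X$, namely $(\mu_1-\mu_\infty)^T V^{-1} X + \text{const}$.

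Next I would compute $Z_{M^*}(X) = \mathcal{S}_{M^*}(X,P_\infty) - \mathcal{S}_{M^*}(X,P_1)$ using Definition~\ref{def:diffusion_hscore} with the constant matrix $m(X) \equiv M^* = V^{1/2}$. Because $M^*$ is constant, the divergence term $\nabla \cdot (M^* M^{*T} \nabla \log p(X)) = \nabla \cdot (V \nabla \log p(X)) = \nabla\cdot(-V V^{-1}(X-\mu)) = \nabla \cdot (-(X-\mu)) = -d$, which is the \emph{same} constant for $P_\infty$ and $P_1$ and so cancels in the difference. The quadratic term contributes $\tfrac12\|M^{*T}\nabla\log p_\infty(X)\|^2 - \tfrac12\|M^{*T}\nabla\log p_1(X)\|^2$; since $\|M^{*T} v\|^2 = v^T V v$, and $\nabla\log p_\infty = -V^{-1}(X-\mu_\infty)$, this term equals $\tfrac12 (X-\mu_\infty)^T V^{-1} V V^{-1}(X-\mu_\infty) - \tfrac12(X-\mu_1)^T V^{-1} V V^{-1}(X-\mu_1) = \tfrac12(X-\mu_\infty)^T V^{-1}(X-\mu_\infty) - \tfrac12(X-\mu_1)^T V^{-1}(X-\mu_1)$. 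This is exactly the expression obtained for $Z_{\texttt{\textup{KL}}}(X)$ above, so the two agree for every $X \in \mathbb{R}^d$.

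There is no real obstacle here — the result follows from the fact that for a Gaussian the score is affine, so $\|M^{*T}\nabla\log p\|^2$ is quadratic with Hessian $V^{-1} V V^{-1} = V^{-1}$, matching the Hessian of $\log p$ up to sign, while the Laplacian-type correction term is a distribution-independent constant that cancels in the difference. The only thing worth being careful about is bookkeeping: confirming that the cross terms and the purely quadratic-in-$X$ terms match, not merely the coefficients of $X$, and noting that any additive constant discrepancy would be immaterial for the test and stopping rule but here in fact vanishes identically. I would also remark that $M^*$ satisfies Assumptions~\ref{assumption:hyvarinen_regularity} and \ref{assumption:positive_diffusion_divergence} for these Gaussians (the scores are smooth, the relevant second moments are finite, and $\mathbb{D}_{M^*}(P_\infty\|P_1) = \tfrac12 \mathbb{E}_\infty\|M^{*T}V^{-1}(\mu_1-\mu_\infty)\|^2 = \tfrac12(\mu_1-\mu_\infty)^T V^{-1}(\mu_1-\mu_\infty) > 0$ since $\mu_\infty \neq \mu_1$), so the diffusion test statistic is well-defined.
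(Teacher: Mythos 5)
Your proof is correct and takes essentially the same route as the paper's: a direct computation of both $Z_{\texttt{\textup{KL}}}(X)$ and $Z_{M^*}(X)$ for the Gaussian pair, observing that the constant divergence-type correction cancels and that $M^* M^{*T} = V$ collapses $V^{-1} M^* M^{*T} V^{-1}$ to $V^{-1}$. The paper merely streamlines the bookkeeping by translating so $\mu_\infty = 0$ before expanding; your version keeps both means explicit and adds the (welcome) remark that $M^*$ satisfies the standing assumptions, but the underlying argument is the same.
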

\begin{proof}
    The proof is given in Appendix \ref{appendix:proofs}.
\end{proof}

In general, we do not expect any particular choice of $m(X)$ to exactly recover the LLR-based tests.  We demonstrate using another pair of Gaussians that there does not always exist an $m(X)$ such that $Z_m(\cdot) = Z_{\texttt{\textup{KL}}}(\cdot)$ with probability one.

\begin{theorem}
\label{theorem:ode}
There exists a pair of distributions $P_\infty, P_1$ such that for all $m(X)$,
$$\mathbb{P}_\infty[Z_m(X) \neq Z_{\texttt{\textup{KL}}}(X)] > 0 \;\text{ and }\; \mathbb{P}_1[Z_m(X) \neq Z_{\texttt{\textup{KL}}}(X)] > 0.$$
\end{theorem}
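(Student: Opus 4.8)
The plan is to take the pair to be two centered one-dimensional Gaussians with distinct variances and to reduce the requirement $Z_m(\cdot)=Z_{\texttt{\textup{KL}}}(\cdot)$ to a first-order linear ODE for the scalar field $m(X)m^T(X)$, which will turn out to admit no solution compatible with Assumption~\ref{assumption:hyvarinen_regularity}. Fix $d=1$ (only existence is asserted, so one coordinate suffices), $P_\infty=\mathcal{N}(0,\sigma_\infty^2)$, $P_1=\mathcal{N}(0,\sigma_1^2)$ with $\sigma_\infty\neq\sigma_1$; for concreteness $\sigma_\infty=1$, $\sigma_1=1/\sqrt{2}$. Since both densities are strictly positive, $\mathbb{P}_\infty$ and $\mathbb{P}_1$ are each equivalent to Lebesgue measure, so it suffices to show that no $m$ satisfying Assumption~\ref{assumption:hyvarinen_regularity} has $Z_m(x)=Z_{\texttt{\textup{KL}}}(x)$ for Lebesgue-a.e.\ $x$; and since $m$ is $C^1$ and the densities are $C^2$, both $Z_m$ and $Z_{\texttt{\textup{KL}}}$ are continuous, so a.e.\ equality is equivalent to equality for every $x\in\mathbb{R}$.

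First I would note that $\mathcal{S}_m$, hence $Z_m$, depends on $m$ only through the nonnegative $C^1$ scalar $a(x):=m(x)m^T(x)=\|m(x)\|_2^2$: in one dimension $\mathcal{S}_m(x,P)=\tfrac12 a(x)\big((\log p)'(x)\big)^2+\big(a(x)(\log p)'(x)\big)'$. Writing $\kappa:=\sigma_\infty^{-2}-\sigma_1^{-2}\neq0$, $\mu:=\sigma_\infty^{-2}+\sigma_1^{-2}>0$, $c_0:=\log(\sigma_\infty/\sigma_1)$, and using $(\log p_\infty)'(x)=-x/\sigma_\infty^2$ and $(\log p_1)'(x)=-x/\sigma_1^2$, a short computation gives
\[
Z_{\texttt{\textup{KL}}}(x)=\tfrac{\kappa}{2}x^2+c_0,\qquad Z_m(x)=\tfrac{\kappa\mu}{2}a(x)x^2-\kappa\big(a(x)+xa'(x)\big).
\]
Equating these and dividing by $\kappa$ yields the ODE $xa'(x)=\big(\tfrac{\mu}{2}x^2-1\big)a(x)-\tfrac12 x^2-c_0/\kappa$, whose value at $x=0$ forces $a(0)=b:=-c_0/\kappa$; one checks $b>0$ in all cases, so the obstruction is not at the origin.

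Next I would solve this ODE on $x>0$ using the integrating factor $x\,e^{-\mu x^2/4}$. Continuity, hence boundedness, of $a$ at the origin forces the integration constant to vanish, so
\[
a(x)=\frac{e^{\mu x^2/4}}{x}\int_0^x\Big(b-\tfrac{t^2}{2}\Big)e^{-\mu t^2/4}\,dt,\qquad x>0,
\]
is the unique admissible candidate. As $x\to\infty$ the integral tends to $I:=\int_0^\infty(b-t^2/2)e^{-\mu t^2/4}\,dt=\sqrt{\pi/\mu}\,(b-1/\mu)$, which is nonzero for the chosen variances, so $a(x)\sim I\,x^{-1}e^{\mu x^2/4}$. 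If $I<0$ this makes $a(x)<0$ for all large $x$, contradicting $a=\|m\|_2^2\ge0$. If $I>0$ then $a$ grows faster than any Gaussian, and since $\mu/4>1/(2\sigma_\infty^2)$ whenever $\sigma_1<\sigma_\infty$, the integrand of $\mathbb{E}_\infty\big[\|m^T(X)\nabla\log p_\infty(X)\|_2^2\big]=\int_{\mathbb{R}}\frac{x^2}{\sigma_\infty^4}\,a(x)\,p_\infty(x)\,dx$ is asymptotically a positive constant times $|x|\,e^{(\mu/4-1/(2\sigma_\infty^2))x^2}$, so this expectation is infinite, contradicting part~\ref{item:finite} of Assumption~\ref{assumption:hyvarinen_regularity}. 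Either way no admissible $m$ exists, proving the claim.

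I expect the main obstacle to be the analysis as $x\to\infty$: one has to pin down the integration constant from the regularity of $a$ at the origin and then argue that the resulting $a$ must violate Assumption~\ref{assumption:hyvarinen_regularity}, either by becoming negative (so that $mm^T$ fails to be positive semidefinite) or by growing faster than any Gaussian (so that the integrability in part~\ref{item:finite} fails), while also checking that the specific variance pair avoids the single degenerate value $b=1/\mu$ at which $a$ stays bounded and the argument would not close.
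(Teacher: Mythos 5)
Your proof is correct and follows the same high-level strategy as the paper's: pick two centered one-dimensional Gaussians, observe that $Z_m$ depends on $m$ only through the nonnegative scalar $a(x)=m(x)m^T(x)$, equate $Z_m=Z_{\texttt{\textup{KL}}}$ to obtain a first-order linear ODE for $a$, solve it with an integrating factor of the form $x\,e^{-\mu x^2/4}$, and show that no admissible $a$ exists. Where you diverge is in the details that close the argument. The paper takes $P_\infty=\mathcal{N}(0,1/4)$, $P_1=\mathcal{N}(0,1)$, solves the ODE on $(-\infty,0)$, pins the integration constant by arguing that the finite-divergence requirement forces $D(X)u(X)\to0$ at infinity, and then shows the resulting closed form changes sign by numerically evaluating it at two points. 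You instead take $P_\infty=\mathcal{N}(0,1)$, $P_1=\mathcal{N}(0,1/2)$, solve on $(0,\infty)$, pin the constant more cleanly by continuity of $a$ at the origin (where the undivided ODE itself forces $a(0)=b=-c_0/\kappa>0$), and extract the contradiction from the large-$x$ asymptotics of the unique candidate: $a(x)\sim I\,x^{-1}e^{\mu x^2/4}$ with $I=\sqrt{\pi/\mu}\,(b-1/\mu)\ne0$, so either $a$ is eventually negative ($I<0$) or $\mathbb{E}_\infty[\|m^T\nabla\log p_\infty\|^2]$ diverges ($I>0$, using $\mu/4>1/(2\sigma_\infty^2)$ when $\sigma_1<\sigma_\infty$), violating part~\ref{item:finite} of Assumption~\ref{assumption:hyvarinen_regularity}. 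Your version buys a fully analytic argument with no numerical spot-checks and makes explicit exactly which regularity condition fails; the modest cost is having to verify $b\ne1/\mu$ for the chosen variances, which you do. Both proofs also agree on the reduction from $\mathbb{R}^{1\times w}$-valued $m$ to a scalar $a=\|m\|_2^2$ and on upgrading a single pointwise failure $Z_m(X^*)\ne Z_{\texttt{\textup{KL}}}(X^*)$ to positive $P_\infty$- and $P_1$-probability via continuity and the full support of the Gaussians.
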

\begin{proof}
    The proof is given in Appendix  \ref{appendix:proofs}.
\end{proof}
\begin{remark}
\label{remark:fisher_acheivable}
    For any $P_\infty, P_1$, $Z_m(\cdot) = Z_{\texttt{\textup{F}}}(\cdot)$ if $m(X) = I$.
\end{remark}

It is possible to bound the performance of the \textit{best} diffusion-based algorithm -- the algorithm corresponding to the \textit{best} choice of $m(X)$ -- in terms of the performance of the score-based and LLR-based algorithms.  By Remark~\ref{remark:fisher_acheivable}, we observe that the best diffusion-based algorithms can perform no worse than their score-based peers (though diffusion algorithms corresponding to a poorly-chosen $m(X)$ can perform arbitrarily poorly). The diffusion-based algorithms can never outperform their LLR-based counterparts, however, as the latter are  provably optimal under \eqref{eq:objective_hypothesis_testing} and \eqref{eq:cpd_objective}.

While we know that CUSUM \eqref{eq:cusum_stopping_rule} is optimal under the metric of \eqref{eq:cpd_objective}, in general there can be stopping rules distinct from CUSUM which match the performance (\cite{moustakides_private_comms, tartakovsky_personal_communications}) of CUSUM. Therefore, while for some choices of $P_\infty, P_1$ we do not know whether LLR-like performance is attainable  (such as  for the $P_\infty, P_1$ of Theorem~\ref{theorem:ode}), we do know that this optimal performance is attainable for other choices of $P_\infty, P_1$ (such as those of Theorem~\ref{theorem:gaussian_analytical}).

In general, and especially for high-dimensional $P_\infty, P_1$, the best possible $m(X)$ will not be analytically identifiable.  In the general case, we simply wish to find an $m(X)$ that approximates a solution to \eqref{eq:cpd_lagrangian}, \eqref{eq:hypo_lagrangian}.  In the next section, we turn to machine learning to help with this approximation.

\subsection{Numerical Optimization}
\label{sec:optimization_numerical}

We propose a multi-layer perceptron (MLP) network $m : \mathbb{R}^d \mapsto \mathbb{R}^{d \times w}$ to approximate the best possible matrix-valued function for a given $P_\infty, P_1$.  
The MLP $m(X)$ can be trained to approximate a solution of \eqref{eq:cpd_lagrangian} or \eqref{eq:hypo_lagrangian}.

While we wish to optimize the MLP via gradient descent, the method of gradient descent requires differentiability of the loss function.  The constraints ($\mathbb{E}_\infty [\exp Z_m(X)]=1$ of \eqref{eq:cpd_lagrangian} and $\mathbb{E}_1[\exp (-Z_m(X))]=1$ of  \eqref{eq:hypo_lagrangian}) are not differentiable, so gradient descent cannot be applied directly to these objectives.  We turn to the use of regularization to circumvent this issue.  
We propose the loss functions $\mathcal{L}_{\texttt{\textup{CPD}}}$ and $\mathcal{L}_{\texttt{\textup{HT}}}$ to numerically optimize $m(X)$ for change-point detection and hypothesis testing, respectively:
\begin{align}
\label{eq:cpd_loss_function}
    \mathcal{L}_{\texttt{\textup{CPD}}}(m) &= -\mathbb{D}_m(P_1 \| P_\infty) + \alpha \big[ \log \mathbb{E}_\infty (\exp  Z_m(X)) \big]^2, \\
\mathcal{L}_{\texttt{\textup{HT}}}(m) &= -\mathbb{D}_m(P_\infty \| P_1) + \alpha \big[ \log \mathbb{E}_1 (\exp (- Z_m(X))) \big]^2,
\end{align}
where $\alpha > 0$ is a training hyperparameter.  
The condition $\mathbb{E}_\infty[\exp Z_m(X)] = 1$ of \eqref{eq:cpd_lagrangian} ($\mathbb{E}_1[\exp(-Z_m(X))]=1$ of \eqref{eq:hypo_lagrangian}) is satisfied if and only if $[\log \mathbb{E}_\infty(\exp Z_m(X))]^2 = 0$ ($[\log \mathbb{E}_1(\exp (-Z_m(X)))]^2 = 0$).  Therefore, though it does not strictly enforce the exact conditions, gradient descent over $\mathcal{L}_{\texttt{\textup{CPD}}}(m)$ or $\mathcal{L}_{\texttt{\textup{HT}}}(m)$ causes the neural network $m(X)$ to approximately satisfy the conditions of  \eqref{eq:cpd_lagrangian} or \eqref{eq:hypo_lagrangian}.

\section{Numerical Simulations}
\label{sec:simulations}

\begin{figure*}               
  \centering
  \subfloat[Gaussian Distributions]{\includegraphics[width=0.33\linewidth]{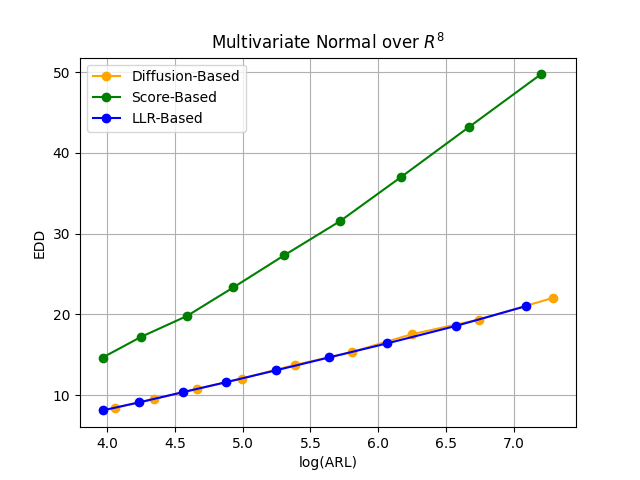}\label{fig:cpd_gauss}}
  \subfloat[GB-RBM Distributions]{\includegraphics[width=0.33\linewidth]{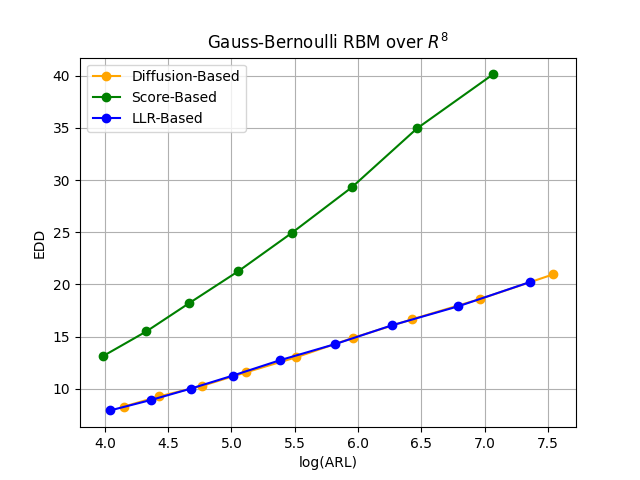}}\label{fig:cpd_trbm}
  \subfloat[Quartic Exponential Distributions]{\includegraphics[width=0.33\linewidth]{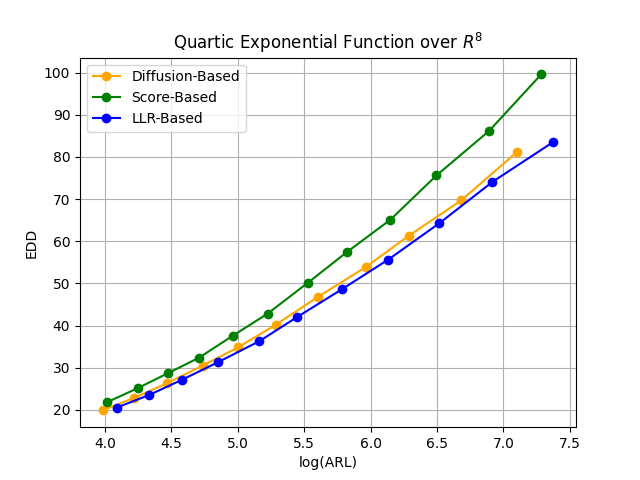}\label{fig:cpd_quar}} 
\caption{Performance of LLR-based, Fisher-based, and diffusion-based Change-Point Detection Stopping Rules.  In this simulation, $P_\infty, P_1$ were chosen to be (a) Gaussian Distributions, (b) Gauss-Bernoulli Restricted Boltzmann Machine Distributions, and (c) Quartic Exponential Distributions.  ARL and EDD are averaged over 10,000 sample paths.}
\label{fig:arl_edd}
\end{figure*}

\newcommand{\imgwid}{0.33\linewidth}
\begin{figure*}               
  \centering
  \subfloat[n = 1]{\includegraphics[width=\imgwid]{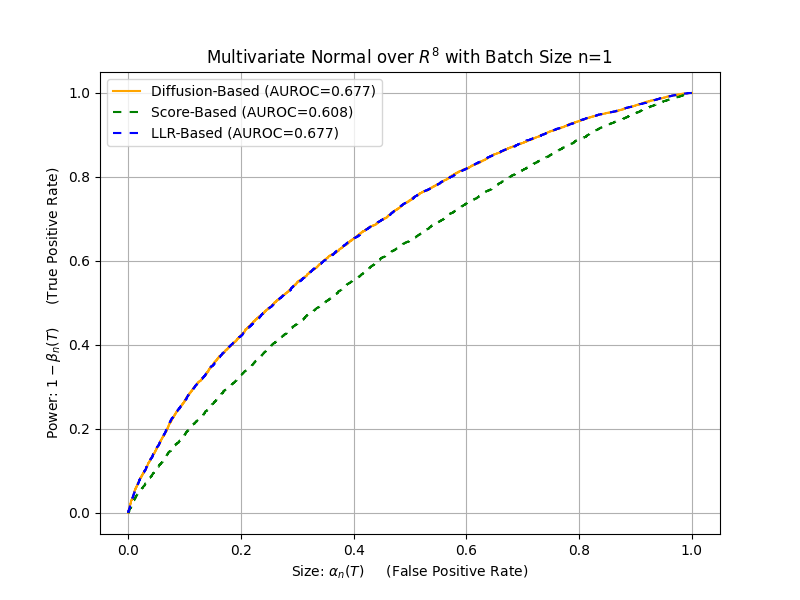}\label{fig:roc_gauss_1}}
  \subfloat[n = 5]{\includegraphics[width=\imgwid]{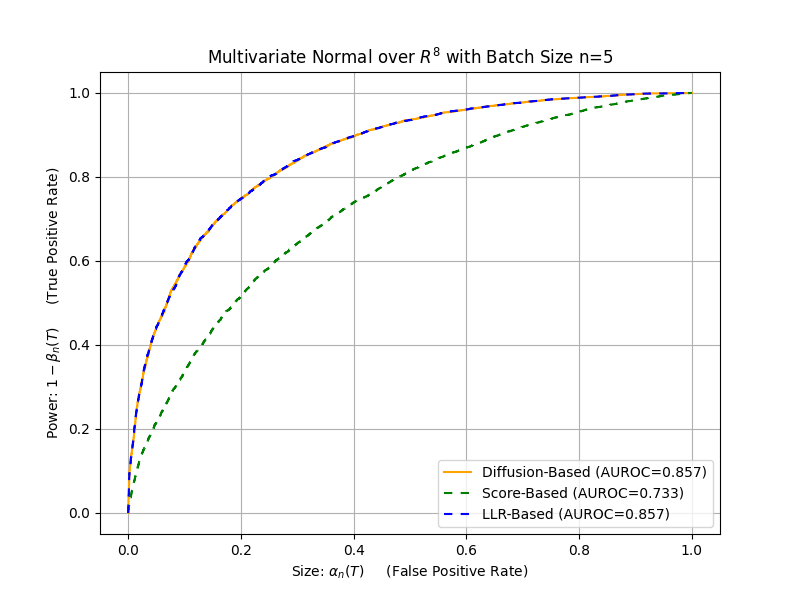}}\label{fig:roc_gauss_5}
  \subfloat[n = 10]{\includegraphics[width=\imgwid]{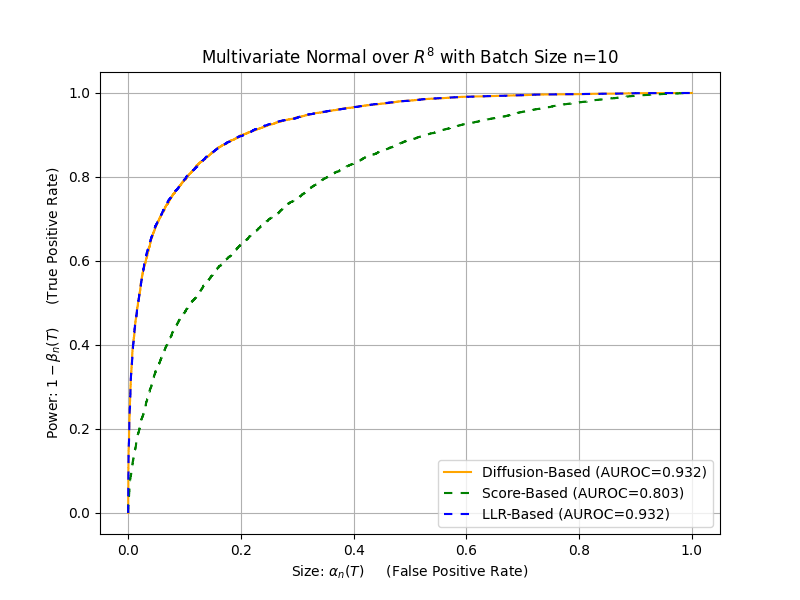}\label{fig:roc_gauss_10}} \\
  \subfloat[n = 25]{\includegraphics[width=\imgwid]{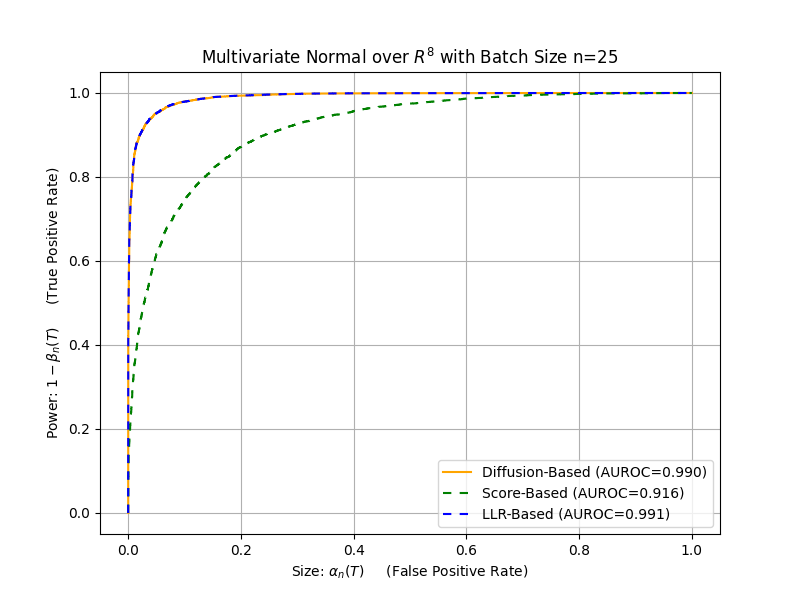}}\label{fig:roc_gauss_25}
  \subfloat[n = 50]{\includegraphics[width=\imgwid]{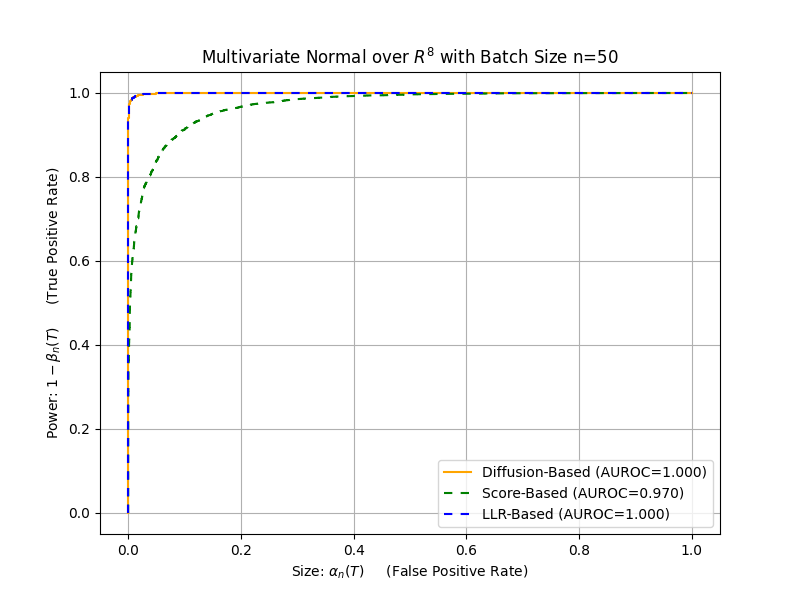}\label{fig:roc_gauss_50}}
  \subfloat[n = 100]{\includegraphics[width=\imgwid]{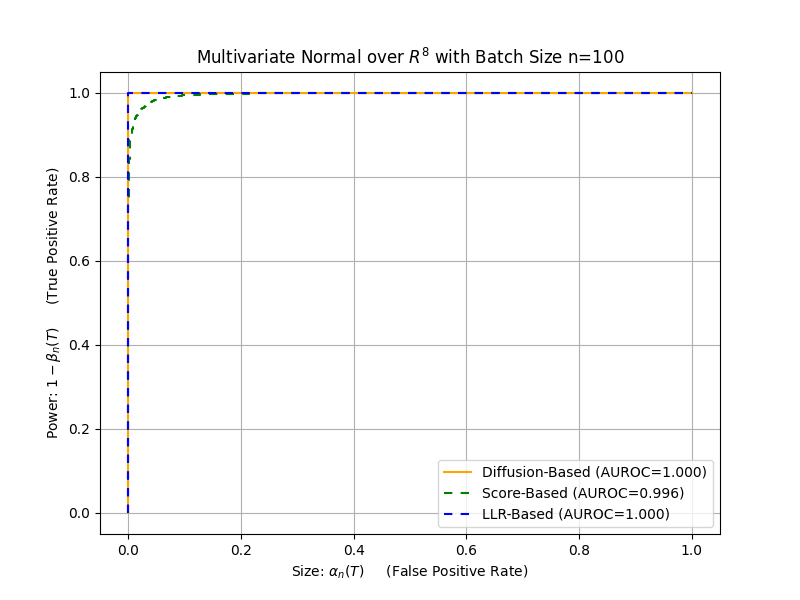}}\label{fig:roc_gauss_100}
\caption{ROC curves for diffusion-based, score-based, and LLR-based hypothesis tests when $P_\infty, P_1$ are chosen to be Gaussian distributions, plotted for several batch sizes $n$.  Each ROC curve uses 10,000 batches.}
\label{fig:roc_gauss}
\end{figure*}

\begin{figure*}            
  \centering
  \subfloat[n = 1]{\includegraphics[width=\imgwid]{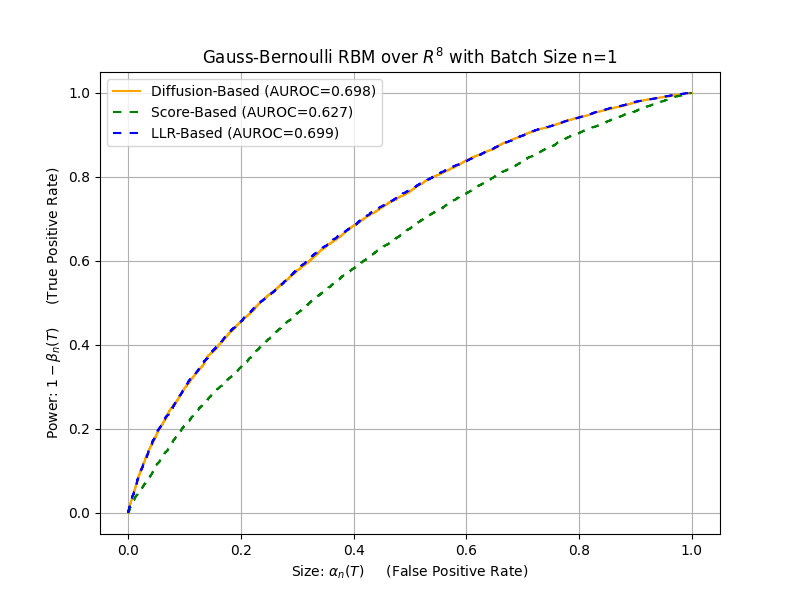}\label{fig:roc_trbm_1}}
  \subfloat[n = 5]{\includegraphics[width=\imgwid]{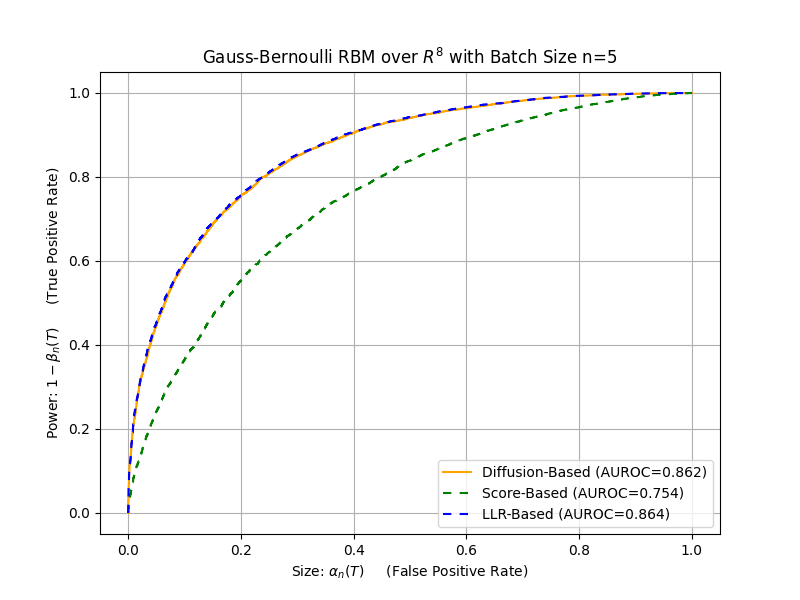}}\label{fig:roc_trbm_5}
  \subfloat[n = 10]{\includegraphics[width=\imgwid]{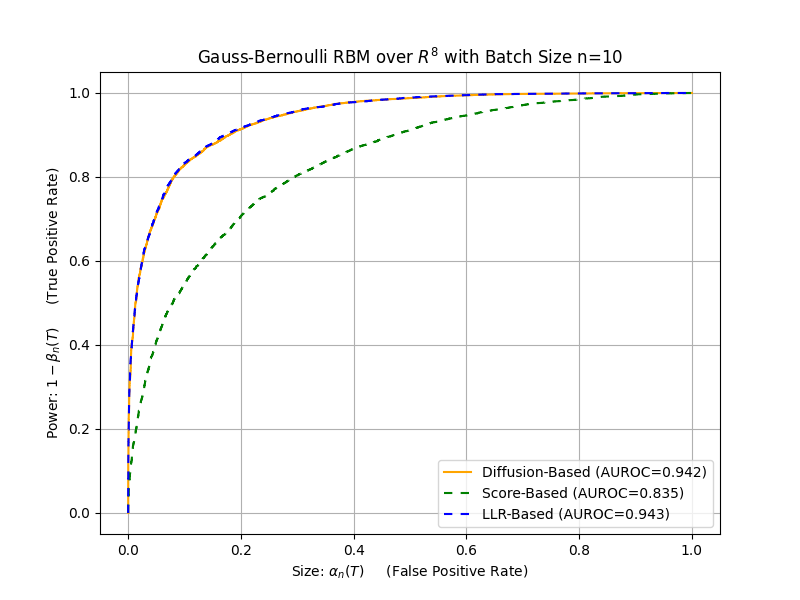}\label{fig:roc_trbm_10}}\\
  \subfloat[n = 25]{\includegraphics[width=\imgwid]{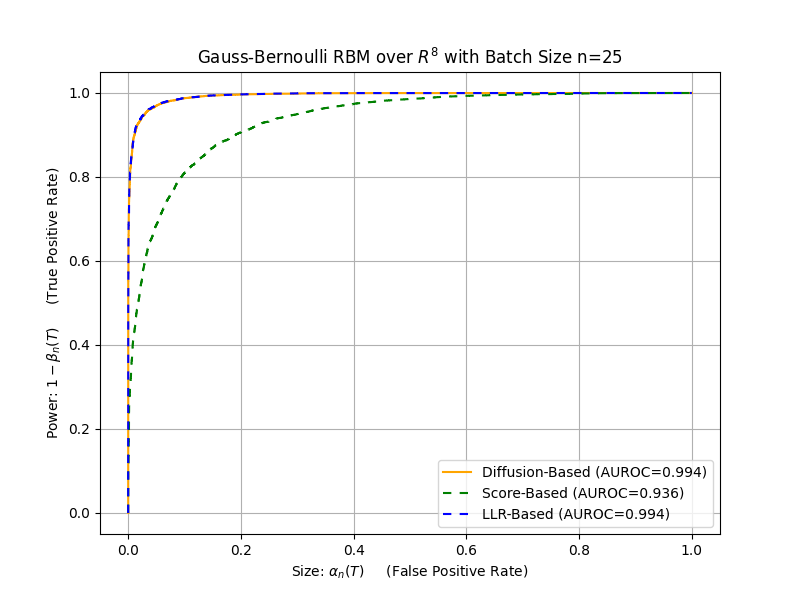}}\label{fig:roc_trbm_25}
  \subfloat[n = 50]{\includegraphics[width=\imgwid]{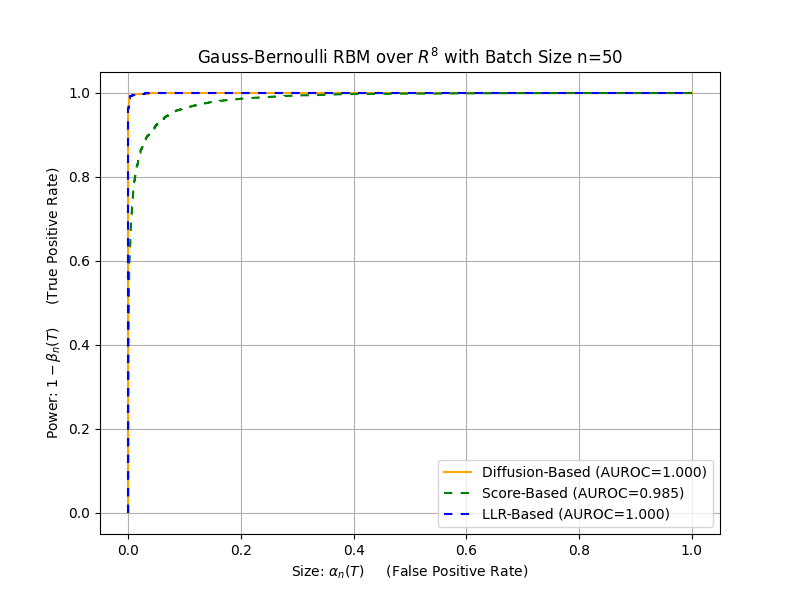}\label{fig:roc_trbm_50}}
  \subfloat[n = 100]{\includegraphics[width=\imgwid]{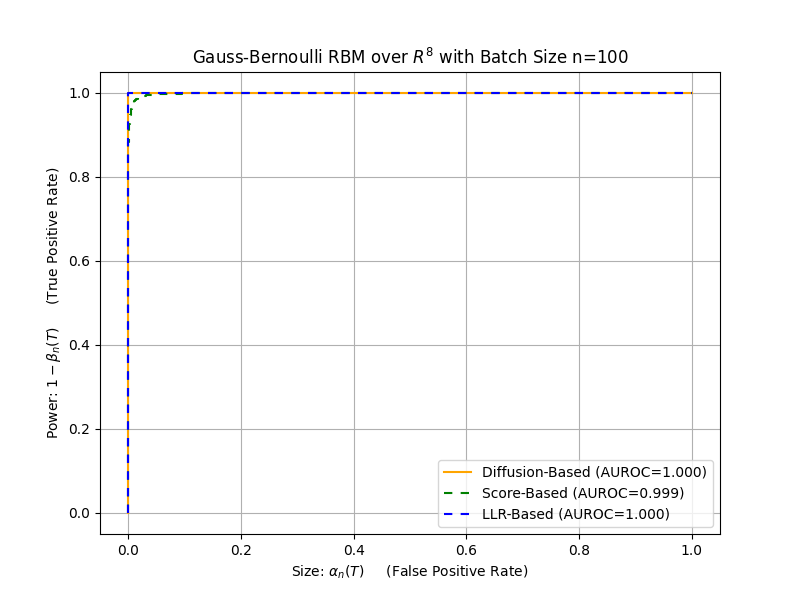}}\label{fig:roc_trbm_100}
\caption{ROC curves for diffusion-based, score-based, and LLR-based hypothesis tests when $P_\infty, P_1$ are chosen to be Gauss-Bernoulli Restricted Boltzmann Machine distributions, plotted for several batch sizes $n$.  Each ROC curve uses 10,000 batches.}
\label{fig:roc_trbm}
\end{figure*}

\begin{figure*}            
  \centering
  \subfloat[n = 1]{\includegraphics[width=\imgwid]{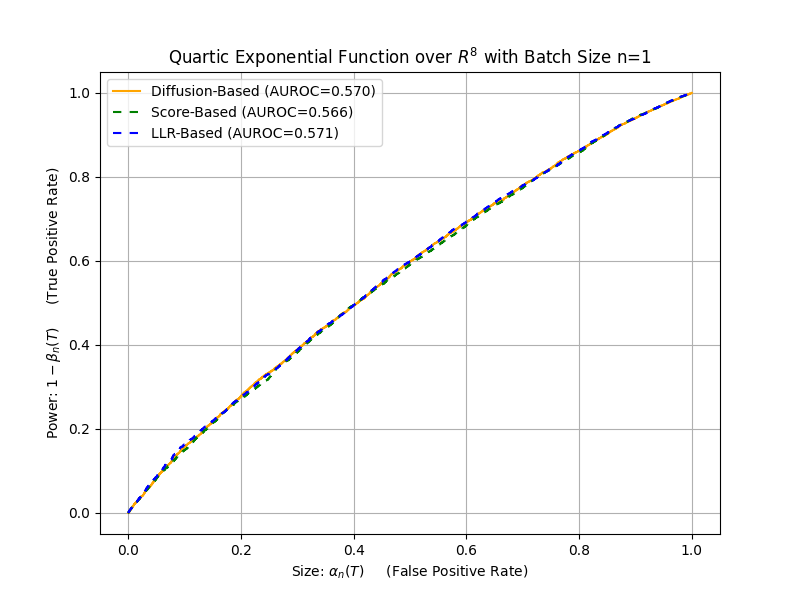}\label{fig:roc_quar_1}}
  \subfloat[n = 5]{\includegraphics[width=\imgwid]{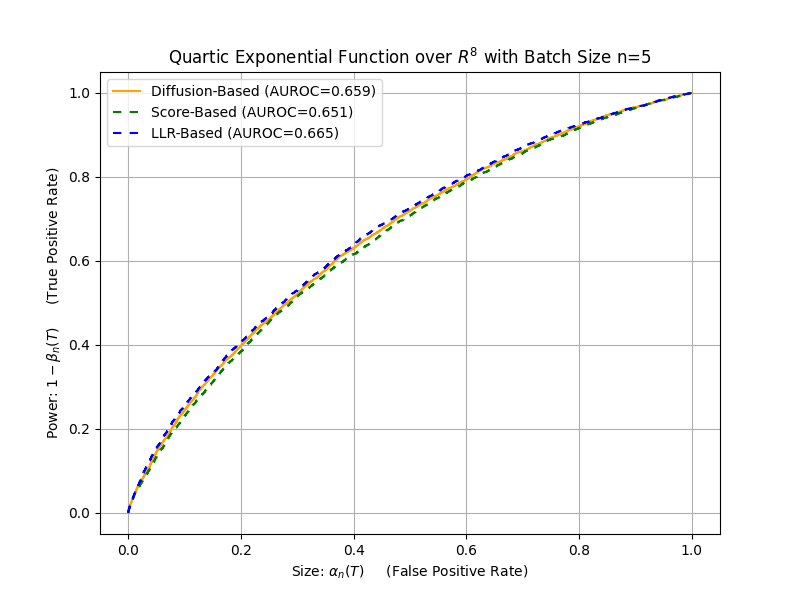}}\label{fig:roc_quar_5}
  \subfloat[n = 10]{\includegraphics[width=\imgwid]{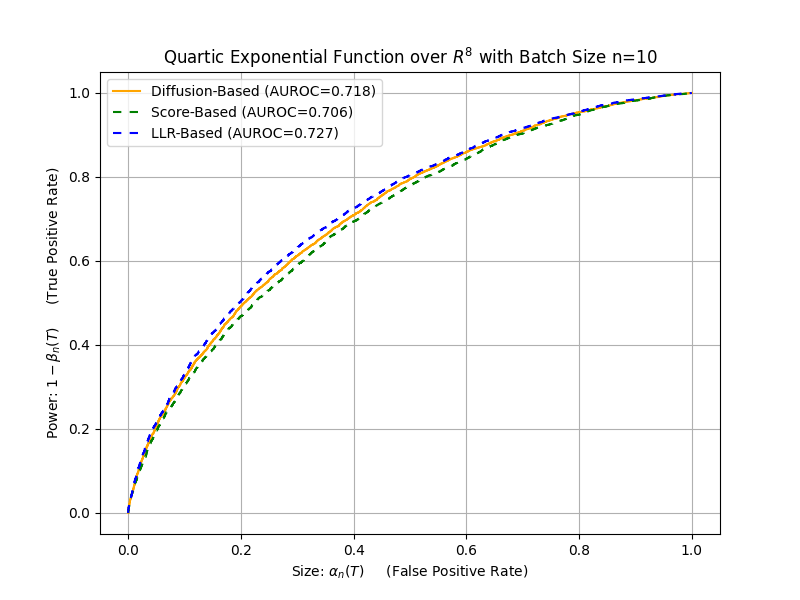}\label{fig:roc_quar_10}}\\
  \subfloat[n = 25]{\includegraphics[width=\imgwid]{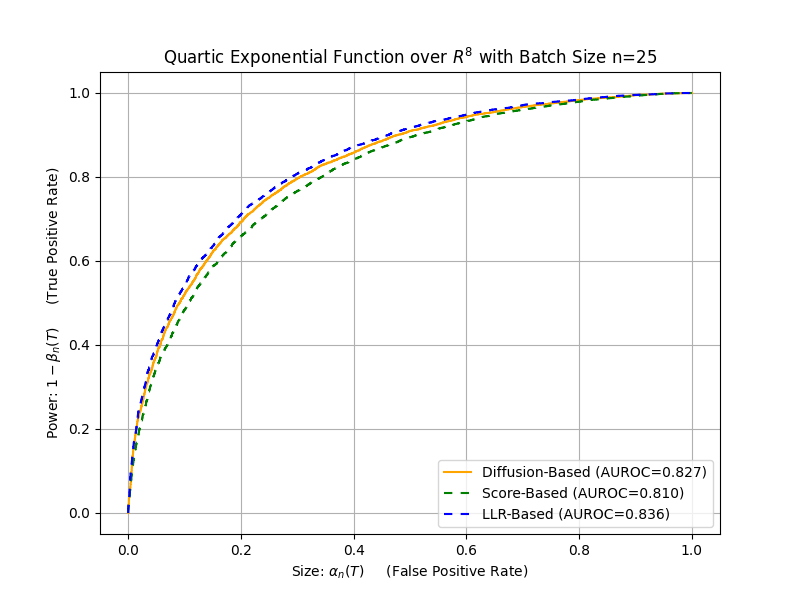}}\label{fig:roc_quar_25}
  \subfloat[n = 50]{\includegraphics[width=\imgwid]{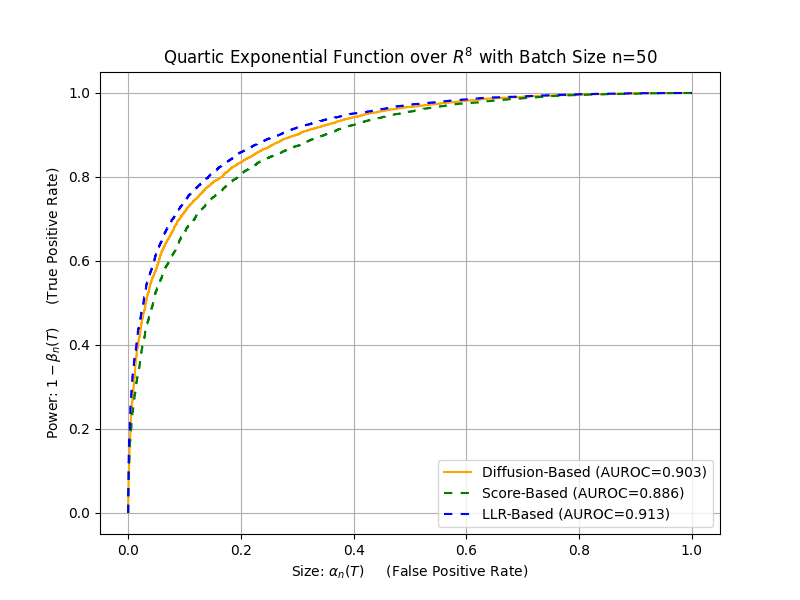}\label{fig:roc_quar_50}}
  \subfloat[n = 100]{\includegraphics[width=\imgwid]{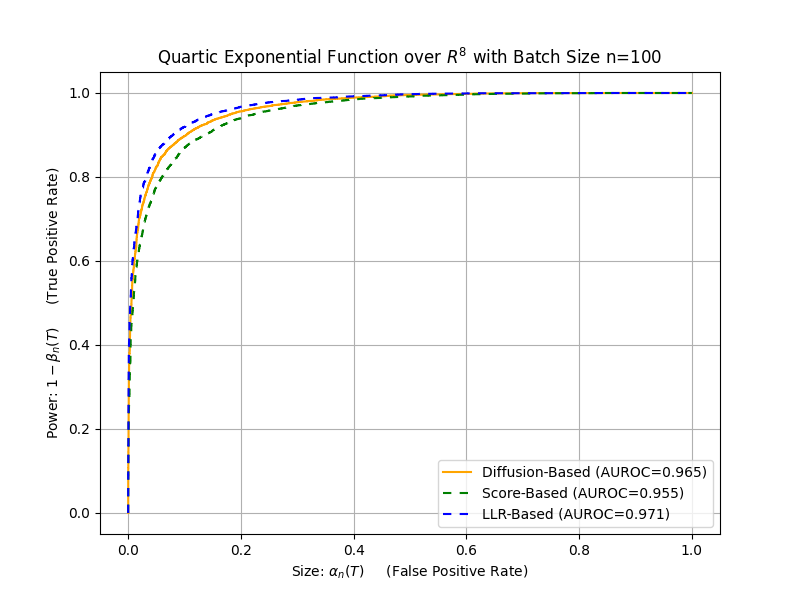}}\label{fig:roc_quar_100}
\caption{ROC curves for diffusion-based, score-based, and LLR-based hypothesis tests when $P_\infty, P_1$ are chosen to be Quartic Exponential distributions, plotted for several batch sizes $n$.  Each ROC curve uses 10,000 batches.}
\label{fig:roc_quar}
\end{figure*}

In this section, we implement and simulate the diffusion-based methods and compare their performances against those of their LLR-based and score-based analogs for both hypothesis testing and change-point detection.  
We begin by introducing three model classes with analytical score functions.

\subsection{Model Classes}

In this section, we shall use the notation of $p(X)$ to refer to the density of some distribution $P$, and shall use the notation of $\tilde{p}$ to refer to the \textit{unnormalized} density of some distribution $P$.  Two of our three model classes provide only an unnormalized density.

For each model class below, the simulations are performed with parameters chosen in light of \cite[Proposition 5]{wu_IT_2024} to demonstrate cases where score-based methods noticeably underperform LLR-based methods.  We provide details regarding parameters of our model classes in Appendix~\ref{appendix:details_sims}.

\subsubsection{Gaussian Distribution}
The multivariate Gaussian distribution is parameterized by a vector $\mu \in \mathbb{R}^d$ and a positive-definite matrix $\Sigma \in \mathbb{R}^{d \times d}$. Its density ands score function are given by:
\begin{equation}
    p(X) = \frac{\exp \bigg(-\frac{1}{2}(X-\mu)^T \Sigma^{-1} (X-\mu) \bigg)}{\sqrt{(2\pi)^d |\Sigma|}} ,
\end{equation}

\begin{equation}
    \nabla \log p(X) = -\Sigma^{-1}(X-\mu).
\end{equation}

\subsubsection{Gauss-Bernoulli Restricted Boltzmann Machine}

The Gauss-Bernoulli Restricted Boltzmann Machine (GB-RBM) (\cite{gbrbm_without_tears}) is simultaneously an energy-based model and a score-based model.  Thus, it is a natural candidate for comparisons between LLR-based and score-based algorithms.

As presented in \cite[Equation 3]{gbrbm_without_tears}, the conditional density of visible variable conditioned upon latent variable is a Gaussian distribution with diagonal covariance; here, we relax this condition and require only that it be positive definite.  For this formulation of the GB-RBM with latent dimension $h$, the model is parameterized by vectors $\mu \in \mathbb{R}^d, \phi \in \mathbb{R}^h, W \in \mathbb{R}^{d \times h}$, and positive definite matrix $\Sigma \in \mathbb{R}^{d \times d}$.

We first provide the free-energy of the GB-RBM
\begin{equation} 
    F(X) = \frac{1}{2} (X-\mu)^T \Sigma^{-1}(X-\mu) 
    \maybeNewline
    - \mathbf{1}^T \text{Softplus}(\phi + W^T \Sigma^{-1}X),
\end{equation} 
from which the unnormalized density and score function of the GB-RBM can be expressed:
\begin{equation}
    \tilde{p}(X) = \exp(-F(X)),
\end{equation}
\begin{equation} 
    \nabla \log \tilde{p} (X) = -\nabla F(X) = -\Sigma^{-1} (X-\mu) 
    \maybeNewline
    + \Sigma^{-1} W \text{ Sigmoid}(\phi + W^T \Sigma^{-1} X).
\end{equation} 

\subsection{Quartic Exponential Model}
We finally consider an exponential model class that is quartic (fourth-order) in $X$.  This model is parameterized by $\mu \in \mathbb{R}^d$ and positive-definite matrix $\Sigma \in \mathbb{R}^{d \times d}$.  The unnormalized density and score function of this model are:
\begin{equation}
\label{eq:quartic_density}
    \tilde{p}(X) = \exp\bigg( -(X^{\odot 2} - \mu)^T \Sigma^{-1}(X^{\odot 2} - \mu) \bigg),
\end{equation}

\begin{equation}
\label{eq:quartic_score}
    \nabla \log \tilde{p}(X) = -4 \Sigma^{-1} (X^{\odot 2}-\mu) \odot X,
\end{equation}
where $\odot$ denotes element-wise multiplication and $X^{\odot 2}$ denotes element-wise squaring.  We note that if $\mu = 0$, the density of \eqref{eq:quartic_density} can be rewritten as
\begin{equation}
    \tilde{p}(X) = \exp\bigg(-\sum_i \sum_j \Sigma_{i,j} X_i^2 X_j^2 \bigg).
\end{equation}

\subsection{Implementation of Algorithms}

\subsubsection{LLR-Based Algorithms}
For model classes that provide normalized densities, we implement the LLR-based algorithms following their description in Section~\ref{sec:score_background}.  For model classes involving unnormalized densities, we must modify the test statistic and instantaneous detection score $Z_{\texttt{\textup{KL}}}(\cdot)$ to account for the lack of normalization.  Define
\begin{equation}
    C_\infty \de \int_{\mathbb{R}^d} \tilde{p}_\infty(X) dX \quad \text{ and } \quad C_1 \de \int_{\mathbb{R}^d} \tilde{p}_1(X) dX.
\end{equation}
Then, the LLR-based test statistic and instantaneous detection score is
\begin{align*}
    Z_{\texttt{\textup{KL}}}(X) &= \log\frac{p_1(X)}{p_\infty(X)} = \log\frac{\tilde{p}_1(X)}{\tilde{p}_\infty(X)}\frac{C_\infty}{C_1}  \\&= \log\frac{\tilde{p}_1(X)}{\tilde{p}_\infty(X)} + \log \frac{C_\infty}{C_1}, \eqnum
\end{align*}
which present a problem, as $C_\infty / C_1$ is not known.  
It can, however, be calculated from an expectation:
\begin{align*}
    \mathbb{E}_1 \bigg[\frac{\tilde{p}_\infty(X)}{\tilde{p}_1(X)}\bigg] 
    &= \mathbb{E}_1\bigg[\frac{{p}_\infty(X
    )}{{p}_1(X)} \frac{C_\infty}{C_1}\bigg] \\
    &=\int_{\mathbb{R}^d} p_1(X)\frac{p_\infty(X)}{{p}_1(X)} \frac{C_\infty}{C_1} dX \\
    &=\frac{C_\infty}{C_1} \int p_\infty(X) dX = \frac{C_\infty}{C_1},
\end{align*}
which we can approximate with a sample mean:
\begin{equation}
    \mathcal{R}_{\tilde{P}_\infty, \tilde{P}_1}^n \de \frac{1}{n} \sum_{i=1}^n \frac{\tilde{p}_\infty(X_i)}{\tilde{p}_1(X_i)}
\end{equation}
for a dataset $(X_i)_{i=1}^n$ of i.i.d. samples of $P_1$.  
Thus, for our LLR-based tests, we let 
\begin{equation}
    \hat{Z}_{\texttt{\textup{KL}}}(X) \de \log \frac{\tilde{p}_1(X)}{\tilde{p}_\infty(X)} + \log \mathcal{R}_{\tilde{P}_\infty, \tilde{P}_1}^n 
\end{equation}
play the role of $Z_{\texttt{\textup{KL}}}(X)$.  We note that the estimation of $\mathcal{R}_{\tilde{P}_\infty, \tilde{P}_1}^n$ requires a significant quantity $n$ of data and that this estimation becomes generally intractable as the data dimension becomes large.

\subsubsection{Score-Based Algorithms}
Score-based algorithms follow the implementation given in Section~\ref{sec:score_background}.  We note that for any unnormalized density $\tilde{p}$ with normalizing constant $Z$:
\begin{align*}
    \nabla \log p(X) &= \nabla \log \frac{\tilde{p}(X)}{Z} 
    \\
    &= \nabla \log \tilde{p}(X) - \nabla \log Z = \nabla \log \tilde{p}(X) \eqnum
\end{align*}
and hence the score-based algorithms are invariant to the scale of $\tilde{p}_\infty(\cdot), \tilde{p}_1(\cdot)$.

\subsubsection{Diffusion-Based Algorithms}
Like the score-based algorithms, diffusion-based algorithms are also scale invariant; hence, the diffusion-based algorithms do not need to account for scaling constants $C_\infty, C_1$.

We simulate square matrix-valued functions.  We let a feed-forward neural network play the role of $m(X)$, mapping data in $\mathbb{R}^d$ to matrices in $\mathbb{R}^{d \times d}$.  We optimize this choice of $m(X)$ following the loss function of \eqref{eq:cpd_loss_function}.  Details regarding the architecture of this neural network and its training are provided in Appendix~\ref{appendix:details_sims}.

\subsection{Results}

For both hypothesis testing and change-point detection, and for each of the three proposed model classes, we compare the performances of the diffusion-based algorithms against the performances of their score-based and LLR-based counterparts.

In Figures~\ref{fig:roc_gauss}, \ref{fig:roc_trbm}, and \ref{fig:roc_quar}, we compare the performances of each method via Receiver Operating Characteristic (ROC) curves, considering the performance of each test under all possible choices of threshold $c$.  We observe that for the Gaussian (Figure~\ref{fig:roc_gauss}) and GB-RBM (Figure~\ref{fig:roc_trbm}) distributions, the performance of the diffusion-based hypothesis tests nearly match the performance of their LLR-based counterparts, and that for the quartic exponential (Figure~\ref{fig:roc_quar}) distribution, the diffusion-based hypothesis test outperforms the score-based test.

In Figure~\ref{fig:arl_edd}, we compare the performance of each change-point detection stopping rule.  Each algorithm is run with many choices of threshold $c$.  We recall the average run length (ARL) and define the expected detection delay (EDD) of a stopping rule $R$:
$$\text{ARL}(\tau) = \mathbb{E}_\infty[\tau],\quad \text{EDD}(\tau) = \mathbb{E}_1[\tau].$$
We plot the ARL and EDD for the LLR-based, score-based, and diffusion-based stopping rules.  Each curve illustrates the ARL and EDD of an algorithm for several choices of stopping threshold $c$.
Again, we observe that the diffusion-based stopping rule matches the LLR-based one in performance for the Gaussian and GB-RBM simulations and surpasses in performance the score-based algorithm for the simulation involving the quartic exponential distribution.

Across all three model classes and for each of our two detection tasks, we observe that the diffusion-based algorithms perform no worse than the score-based methods and no better than the LLR-based methods, and that they occasionally match the LLR-based methods in performance.

\section{Conclusion}
In this paper, we have proposed a hypothesis test and a change-point detection stopping rule which utilize the diffusion divergence.  We have studied the properties of these methods, calculating and bounding their performance metrics.  We have developed an objective over the diffusion matrix function for hypothesis testing and change-point detection applications.  
We have demonstrated that the performances of the best-possible diffusion algorithms are no worse than the performances of the score-based algorithms and no better than the performances of the LLR algorithms.  
We  have proposed a loss function for the training of $m(X)$ and demonstrated the stability of this process by way of simulation.  Finally, we have demonstrated the implementability of the diffusion-based algorithms in numerical simulations.

\section*{Acknowledgments}
This work was supported by the U.S. National Science Foundation under awards numbers 2334898 and 2334897.
The authors thank Prof. George V. Moustakides and Prof. Alexander G. Tartakovsky for insightful discussions.

\FloatBarrier

\appendices

\section{Appendix A: Details of Numerical Simulations}
\label{appendix:details_sims}

\subsection{Model Class Parameters}
We begin by defining the model classes for which simulation results are provided.  We provide $V^*, \mu_1^*$, and $\mu_\infty^*$ in \eqref{eq:param_V}, \eqref{eq:param_mu_1}, and \eqref{eq:param_mu_infty}, respectively.

\subsubsection{Gaussian Distribution}
We let $P_\infty = \mathcal{N}(\mu_\infty^*, V^*)$ and $P_1 = \mathcal{N}(\mu_1^*, V^*)$.
\subsubsection{Gauss-Bernoulli Restricted Boltzmann Machine}

We first create $W_\infty \in \mathbb{R}^{8 \times 6}$ where each element of $W_\infty$ is drawn i.i.d. from a $\mathcal{N}(0,1)$ distribution, and generate $h_\infty \in \mathbb{R}^6$ in the same way. 
We then sample $W_+ \in \mathbb{R}^{8 \times 6}$ and $h_+ \in \mathbb{R}^6$ such that each element is drawn i.i.d. from $\mathcal{N}(0, 0.1^2)$.  We then let $W_1 = W_\infty + W_+$ and $h_1 = h_\infty + h_+$.

We let $P_\infty$ be the GB-RBM with $\mu = \mu_\infty^*, \Sigma = V^*, W=W_\infty$ and $h = h_\infty$.  We then let $P_1$ be the GB-RBM with $\mu=\mu_1^*, \Sigma=V^*, W=W_1$, and $h=h_1$.

\subsubsection{Quartic Exponential Distribution}
We let $P_\infty$ be parameterized by $\mu = \mu_\infty^*$ and $\Sigma = V^*$.  We let $P_1 $ be parameterized by $\mu = \mu_1^*$ and $\Sigma = V^*$.

\subsection{Sampling}
We sample from $P_\infty, P_1$ to create a training dataset of $100,000$ samples and a test dataset of $10,000$ samples.  We sample from all distributions via the Metropolis-Hastings algorithm (\cite{roberts_rosenthal}) except for the Gaussian distribution, where we perform direct Gaussian sampling.  We sample new data for the creation of the curves of Figures~\ref{fig:arl_edd}, \ref{fig:roc_gauss}, \ref{fig:roc_trbm}, and \ref{fig:roc_quar}.

\subsection{Training}

We create a feed-forward neural network with 8-dimensional input, a single 36-dimensional hidden layer, and a 64-dimensional output.  We place a Sigmoid activation function after all non-output layers.  For each image $X \in \mathbb{R}^8$, we reshape the network's output $\overline{m}(X) \in \mathbb{R}^{64}$ into a matrix  $m(X) \in \mathbb{R}^{8 \times 8}$ and multiply this output by a constant $0.1$, which has been found to sometimes improve the stability of the training process during the first few epochs.

Training was performed via Adam (\cite{adam_optimizer}) with learning rates $0.035$ (Quartic Exponential Distribution), $0.04$ (Gaussian Distribution), and $0.01$ (Gauss-Bernoulli RBM).  Across all distributions, training was performed with L2-regularization of $1\cdot 10^{-5}$ and $\alpha = 10$.  
Hyperparameters were tuned via inspection of loss history; it should be noted that the competing models (Fisher-Based test, LLR-Based test) have no tunable parameters.

\begin{table*}[!t]
\begin{equation}
\label{eq:param_V}
V^* = 
\begin{bmatrix}
6.94357  &   -3.41203  &   -2.15460  &   -0.48852  &   -0.21851  &      -0.39300  &   -0.93257  &   -0.75584 \\
-3.41203  &    3.78724  &    0.5144   &   -0.30651  &    1.64793  &      0.06043  &    0.71543  &   -1.44385 \\
-2.15460  &    0.5144   &    3.75500  &    2.00786  &   -1.22796  &      -0.94496  &   -2.25916  &    0.8728 \\ 
-0.48852  &   -0.30651  &    2.00786  &    2.93120  &   -1.57410  &      -1.91590  &   -1.7714   &    0.02425 \\
-0.21851  &    1.64793  &   -1.22796  &   -1.57410  &    5.37965  &      2.21935  &   -1.66047  &   -2.40907 \\
-0.39300  &    0.06043  &   -0.94496  &   -1.91590  &    2.21935  &      6.24591  &   -0.93225  &    3.02939 \\
-0.93257  &    0.71543  &   -2.25916  &   -1.7714   &   -1.66047  &      -0.93225  &    8.12932  &    0.29485 \\
-0.75584  &   -1.44385  &    0.87281  &    0.02425  &   -2.40907  &      3.02939  &    0.29485  &    6.82808 \\
\end{bmatrix}
\end{equation}

\begin{equation}
\label{eq:param_mu_1}
\mu_1^* = 
\begin{bmatrix}
    0, 0, 0, 0, 0, 0, 0, 0
\end{bmatrix}^T
\end{equation}
\begin{equation}
\label{eq:param_mu_infty}
\mu_\infty^* = 
\begin{bmatrix}
    0.99974, -1.11210, -0.11677,  0.1231 , -0.55111,
    0.29397, -0.71772,  0.93254
\end{bmatrix}^T
\end{equation}
\end{table*}

\section{Appendix B: Proofs of Theorems and Lemmas}
\label{appendix:proofs}

We begin by presenting Lemmas which shall assist in the proofs of the Lemmas and Theorems of this paper.
\begin{lemma}
\label{lemma:hyv_ibp}
For completeness, we reproduce this Lemma and proof from \cite{hyvarinen2005estimation}[Lemma 4].

For differentiable $g,h : \mathbb{R}^d \mapsto \mathbb{R}$:
\begin{align*}
     \int^\infty_{-\infty} h(X) & \frac{\partial g(X)}{\partial X_1}  dX_1 \\
     = & \lim_{a \rightarrow \infty,  b\rightarrow -\infty} g(a, X_2, \cdots X_n)h(a, X_2, \cdots, X_n) \\
     &\quad \quad\quad\quad - g(b, X_2, \cdots, X_n)h(b, X_2, \cdots, X_n) \\
     &\; - \int^\infty_{-\infty} g(X) \frac{\partial h(X)}{\partial X_1}  dX_1 \eqnum
\end{align*}
The same follows for all $X_i \neq X_1$.
\end{lemma}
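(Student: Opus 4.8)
The plan is to reduce the claim to the one-variable fundamental theorem of calculus applied to the product $gh$ along the first coordinate, with $X_2, \dots, X_n$ held fixed. First I would fix $(X_2, \dots, X_n)$ and view $t \mapsto g(t, X_2, \dots, X_n)$ and $t \mapsto h(t, X_2, \dots, X_n)$ as differentiable functions of the single real variable $t$. The product rule then gives $\frac{d}{dt}\big[ g(t, X_2, \dots, X_n)\, h(t, X_2, \dots, X_n) \big] = h\,\frac{\partial g}{\partial X_1} + g\,\frac{\partial h}{\partial X_1}$, with each factor on the right evaluated at $(t, X_2, \dots, X_n)$.

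Next I would integrate this identity over $t \in [b, a]$ and apply the fundamental theorem of calculus to the left-hand side, obtaining $g(a, X_2, \dots, X_n) h(a, X_2, \dots, X_n) - g(b, X_2, \dots, X_n) h(b, X_2, \dots, X_n) = \int_b^a \big( h\,\frac{\partial g}{\partial X_1} + g\,\frac{\partial h}{\partial X_1} \big)\, dX_1$. Letting $a \to \infty$ and $b \to -\infty$ and rearranging to isolate $\int_{-\infty}^\infty h\,\frac{\partial g}{\partial X_1}\, dX_1$ then yields the stated identity.

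The only subtlety, and the part worth flagging explicitly, is convergence: the identity as written presupposes that the two improper integrals $\int_{-\infty}^\infty h\,\frac{\partial g}{\partial X_1}\, dX_1$ and $\int_{-\infty}^\infty g\,\frac{\partial h}{\partial X_1}\, dX_1$ exist and that the boundary term $gh$ has a limit as the first coordinate tends to $\pm\infty$; these hold under the standing regularity assumptions, and given them nothing beyond the product rule and the fundamental theorem of calculus is needed. Finally, the claim for an arbitrary coordinate $X_i$ in place of $X_1$ follows by the identical argument after permuting the coordinates, so no separate work is required.
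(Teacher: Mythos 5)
Your proof is correct and takes essentially the same approach as the paper: apply the product rule to $gh$, rearrange to isolate $h\,\partial g/\partial X_1$, and integrate along the $X_1$ coordinate, invoking the fundamental theorem of calculus on the total-derivative term. The paper's version is terser (it skips the explicit one-variable reduction and the $[b,a]\to(-\infty,\infty)$ limiting step, which you spell out), but the underlying argument is the same.
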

\begin{proof}
By the product rule:
\begin{equation}
    \frac{\partial g(X) h(X)}{\partial X_1} = g(X) \frac{\partial h(X)}{\partial X_1} + h(X) \frac{\partial g(X)}{\partial X_1}
\end{equation}
Rearranging:
\begin{equation}
\label{eq:lemma_hyv_final}
    h(X) \frac{\partial g(X)}{\partial X_1} = \frac{\partial g(X) h(X)}{\partial X_1} - g(X) \frac{\partial h(X)}{\partial X_1} 
\end{equation}
The result follows from integrating both sides of \eqref{eq:lemma_hyv_final} over $\mathbb{R}^d$.
\end{proof}

\begin{lemma}
\label{lemma:fubini}
Let $s(X) : \mathbb{R}^d \mapsto \mathbb{R}^d$ be a differentiable function with continuous derivatives and let $P$ be a probability distribution with density $p$ absolutely continuous with respect to the Lebesgue measure.  Let $(v)_i$ denote the $i$-th element of vector $v$. 

If
\begin{equation}
    \mathbb{E}_P[\| \nabla \log p(X) \|^2] < \infty \; \text{ and } \; \mathbb{E}_P[\| s(X) \|^2] < \infty,
\end{equation}
then
\begin{equation}
    \mathbb{E}_P[| (\nabla \log p(X))_i (s(X))_i |^2] < \infty
\end{equation}
for all $1 \leq i \leq d$.
\end{lemma}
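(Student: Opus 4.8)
The plan is to reduce the claim to a pair of one-dimensional fourth-moment bounds on the two factors of the product. Fix an index $i$ and write $a(X) = (\nabla\log p(X))_i$ and $b(X) = (s(X))_i$, so the target is $\mathbb{E}_P[(a(X)b(X))^2] = \mathbb{E}_P[a(X)^2 b(X)^2]$. First I would note that both functions are defined $P$-almost everywhere: $a = (\partial_{X_i}p)/p$ makes sense on $\{p>0\}$, which carries full $P$-mass since $p$ is absolutely continuous with respect to Lebesgue measure, and $b$ is continuous because $s$ is differentiable. The hypotheses give $a,b\in L^2(P)$ at once, via the pointwise bounds $a^2\le\|\nabla\log p\|^2$ and $b^2\le\|s\|^2$. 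To pass from $L^2$-control of each factor to $L^1$-control of the squared product, the natural device is to separate the factors with Young's inequality, $a^2b^2\le\tfrac12(a^4+b^4)$ (equivalently, Cauchy--Schwarz gives $\mathbb{E}_P[a^2b^2]\le(\mathbb{E}_P[a^4])^{1/2}(\mathbb{E}_P[b^4])^{1/2}$).

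Carrying this out, the proof is three short steps: (i) record the a.e.\ definedness of $a$ and $b$; (ii) apply $a^2b^2\le\tfrac12(a^4+b^4)$ to get $\mathbb{E}_P[(ab)^2]\le\tfrac12\mathbb{E}_P[(\nabla\log p)_i^4]+\tfrac12\mathbb{E}_P[(s)_i^4]$; (iii) conclude from finiteness of the two component fourth moments. Step (iii) is the one not literally covered by the hypotheses as printed, which supply only second moments, so the clean fix is to strengthen the hypotheses to $\mathbb{E}_P[\|\nabla\log p\|^4]<\infty$ and $\mathbb{E}_P[\|s\|^4]<\infty$; with those in hand, steps (i)--(iii) prove the statement verbatim, and one would then propagate the same fourth-moment integrability into the places this lemma is used, i.e.\ the Fubini and integration-by-parts manipulations behind Lemmas~\ref{lemma:hyvarinen_fisher} and \ref{lemma:hyvarinen}.

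The main obstacle, then, is that the conclusion with the square on the product genuinely fails under second moments alone, so some strengthening really is necessary. A counterexample: let $p$ be a smooth Gaussian background plus bump functions of mass $m_n$ and width $w_n$ centered at $x=2^n$, and let $s$ be a smooth, slowly increasing function equal to $s_n$ near the $n$th bump. With $m_n=2^{-n}$, $w_n=n\,2^{-n/2}$, $s_n=n$ one checks $\sum_n m_n<\infty$ (so $p$ is a valid density), $\sum_n m_n/w_n^2=\sum_n n^{-2}<\infty$ (so $\mathbb{E}_P[\|\nabla\log p\|^2]<\infty$), and $\sum_n s_n^2 m_n=\sum_n n^2 2^{-n}<\infty$ (so $\mathbb{E}_P[\|s\|^2]<\infty$), yet $\sum_n s_n^2 m_n/w_n^2=\sum_n 1=\infty$, whence $\mathbb{E}_P[(\nabla\log p(X))_i^2(s(X))_i^2]=\infty$. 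So the substantive point of the lemma is exactly the bound on the squared product, which requires fourth moments of the two factors; once those are assumed, the rest is routine bookkeeping.
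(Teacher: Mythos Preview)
Your diagnosis is correct: the conclusion as printed, with the exponent $2$ on the product, does not follow from second-moment hypotheses, and your counterexample shows this cleanly. But your proposed remedy---strengthening to fourth moments---is not what the paper does, and it is not what is needed.

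Look at the paper's own argument. It extracts componentwise second moments $\mathbb{E}_P[((\nabla\log p)_i)^2]<\infty$ and $\mathbb{E}_P[((s)_i)^2]<\infty$ from the hypotheses, then applies Cauchy--Schwarz once:
\[
\mathbb{E}_P\bigl[\,|(\nabla\log p(X))_i\,(s(X))_i|\,\bigr]\;\le\;\sqrt{\mathbb{E}_P[((\nabla\log p)_i)^2]}\;\sqrt{\mathbb{E}_P[((s)_i)^2]}\,.
\]
The displayed inequality in the paper carries the same stray ``$^2$'' on the left-hand side, but the right-hand side is the Cauchy--Schwarz bound for the \emph{first} moment of the product, and the paper's concluding sentence explicitly states the first-moment conclusion $\mathbb{E}_P[|(\nabla\log p)_i(\nabla\log q)_i|]<\infty$. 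That is also exactly what the downstream application requires: the lemma is invoked only to license Fubini in the proof of Lemma~\ref{lemma:hyvarinen}, and Fubini needs absolute integrability of the integrand $p(X)(\nabla\log p(X))_i(f(X))_i$, not square-integrability.

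So the exponent in the statement (and in one displayed line of the proof) is a typographical error. The correct fix is to delete the square from the conclusion, after which the paper's two-line Cauchy--Schwarz argument is complete and the hypotheses are already sufficient. Your fourth-moment route would prove the stronger (squared) claim, but that claim is never used, and importing fourth-moment assumptions into Assumption~\ref{assumption:hyvarinen_regularity} would be an unnecessary strengthening.
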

\begin{proof}
By the definition of the norm, we can say that:
\begin{align*}
    &\mathbb{E}_P[ \| \nabla \log p(X) \|^2] \\
    &= \mathbb{E}_P \bigg[ \sum_i( (\nabla \log p(X))_i)^2 \bigg] \eqnum
\end{align*}
so $\mathbb{E}_P[\| \nabla \log p(X)\|^2 ] < \infty$ implies that $\mathbb{E}_P [ (( \nabla \log p(X))_i )^2] < \infty$ for all $1 \leq i \leq d$.  By similar reasoning, the assumptions of this Lemma imply that $\mathbb{E}_P[((s(X))_i)^2] < \infty$ for all $1 \leq i \leq d$.

By the Cauchy-Schwartz inequality, we have that:
\begin{align*}
    &\mathbb{E}_P[|(\nabla \log p(X))_i (s(X))_i|^2] \\
    &\leq \sqrt{\mathbb{E}_P [((\nabla \log p(X))_i)^2] } \sqrt{\mathbb{E}_P[((s(X))_i)^2]} \eqnum
\end{align*}
and thus the quantity of interest is bounded from above by finite terms.

For the special choice of $s(X) = \nabla \log q(X)$, then Assumption~\ref{assumption:hyvarinen_regularity} satisfies the assumptions of this Lemma, and this Lemma demonstrates that
\begin{equation}
    \mathbb{E}_P[|(\nabla \log p(X))_i (\nabla \log q(X))_i|] < \infty
\end{equation}
for all $1 \leq i \leq d$.
\end{proof}

\begin{proof}[Proof of Lemma~\ref{lemma:hyvarinen_fisher}]
The assumptions of this Lemma are sufficient to guarantee Assumption~\ref{assumption:hyvarinen_regularity} when $m(X)=I$, and the proof follows from the result of Lemma~\ref{lemma:hyvarinen} when $m(X) = I$.
\end{proof}

\begin{proof}[Proof of Lemma~\ref{lemma:hyvarinen}]
This proof follows the arguments of  \cite{hyvarinen2005estimation}[Theorem 1].  For completeness, we reproduce a modified version.  For generality, we prove this theorem for arbitrary $s(X)$, but note that when we let $s(X) = \nabla \log q(X)$, then this theorem calculates a new form for  $\mathbb{D}_m(P \| Q)$.

Let $s(X) : \mathbb{R}^d \mapsto \mathbb{R}^d$ be differentiable with a continuous derivative.    We calculate:
\begin{align*}
\label{eq:hyvarinen_expanded}
&\mathbb{E}_P\bigg[\frac{1}{2}\| m^T(X) \big( \nabla \log p(X) - s(X) \big)\|^2\bigg] \\
&= \mathbb{E}_P\bigg[\frac{1}{2}\| m^T(X) \nabla \log p(X) \|^2 + \frac{1}{2} \| m^T(X) s(X) \|^2 
- \underbrace{(m^T(X) \nabla \log p(X))^T m^T(X) s(X)}_{\text{Term 1}}
    \bigg]. \eqnum
\end{align*}

We examine Term 1 in more detail:
\begin{align*} 
&\mathbb{E}_P[(m^T(X) \nabla \log p(X))^T m^T(X) s(X)] \\
\label{eq:hyv_full_expr}
&=\mathbb{E}_P[(\nabla \log p(X))^T m(X) m^T(X) s(X)]. \eqnum
\end{align*}
We define
\begin{equation}
    f(X) = m(X)m^T(X) s(X).
\end{equation}
Letting $v_i$ and $(v)_i$ denote the $i$-the element of a vector $v$, we can simply the expression of \eqref{eq:hyv_full_expr}:
\begin{align*}
&\mathbb{E}_P[(\nabla \log p(X))^T f(X)] \\
&=\int_{\mathbb{R}^d} p(X) (\nabla \log p(X))^T f(X) dX \\
&= \int_{\mathbb{R}^d} p(X) \sum_{i=1}^d (\nabla \log p(X))_i (f(X))_i dX \\
\label{eq:hyv_integrand}
&= \sum_{i=1}^d \int_{\mathbb{R}^d} p(X)  (\nabla \log p(X))_i (f(X))_i dX \eqnum
\end{align*}

We next choose to calculate the integrand of \eqref{eq:hyv_integrand} for the case of $i=1$:
\begin{align*}
&\int_{\mathbb{R}^d} p(X) (\nabla \log p(X))_1 (f(X))_1 dX \\
&= \int_{\mathbb{R}^d} p(X) \frac{\partial \log p(X)}{\partial X_1} (f(X))_1 dX \\
&= \int_{\mathbb{R}^d} p(X) \frac{\frac{\partial p(X)}{\partial X_1}}{p(X)} (f(X))_1 dX\\
&= \int_{\mathbb{R}^d} \frac{\partial p(X)}{\partial X_1} (f(X))_1 dX \eqnum
\label{eq:hyv_all_X}
\end{align*}

By the result of Lemma~\ref{lemma:fubini}, we can invoke Fubini's Theorem to expand the integral of \eqref{eq:hyv_all_X} into a double integral:
\begin{align}
\int_{\mathbb{R}^d} \frac{\partial p(X)}{\partial X_1} (f(X))_1 dX 
&= \int_{\mathbb{R}^{d-1}} \underbrace{\bigg( \int^{\infty}_{-\infty} 
\frac{\partial p(X)}{\partial X_1} (f(X))_1 dX_1\bigg)}_{\text{Term 2}} d(X_2, \cdots X_n).
\end{align}
Next, we apply Lemma~\ref{lemma:hyv_ibp} to Term 2, letting $p(X)$ play the role of $g(X)$ and $(f(X))_1$ play the role of $h(X)$.  The limit of Lemma~\ref{lemma:hyv_ibp} evaluates to zero by Part~\ref{item:goes_to_zero_large_x} of Assumption~\ref{assumption:hyvarinen_regularity}.
\begin{align*}
& \int_{\mathbb{R}^{d-1}} \bigg(0 - \int^{\infty}_{-\infty} p(X) \frac{\partial (f(X))_1}{\partial X_1} dX_1 \bigg)d(X_2, \cdots, X_n) \\
&= -\int_{\mathbb{R}^d}    p(X) \frac{\partial (f(X))_1}{\partial X_1} dX \eqnum
\end{align*}
where in the last step we collapse the double integral into one single integral over $X$.  

Alltogether, we have that:
\begin{align*}
& \mathbb{E}_P[(\nabla \log p(X))^T f(X)] \\
&=\sum_{i=1}^d \int_{\mathbb{R}^d} p(X) (\nabla \log p(X))_i (f(X))_i dX \\
&= - \sum_{i=1}^d \int_{\mathbb{R}^d} p(X) \frac{\partial (f(X))_i}{\partial X_i} dX \\
&= -\int_{\mathbb{R}^d} p(X) \sum_{i=1}^d \frac{\partial (f(X))_i}{\partial X_i} dX \\
&= -\mathbb{E}_P[\nabla \cdot f(X)] \eqnum
\end{align*}

Returning to \eqref{eq:hyvarinen_expanded}, we substitute and arrive at:
\begin{align*}
&\mathbb{E}_P\bigg[\frac{1}{2} \| m^T(X) (\nabla \log p(X) -s(X)) \bigg] \\
&= \mathbb{E}_P \bigg[ \frac{1}{2} \|m^T(X) \nabla \log p(X) \|^2 + \frac{1}{2} \| m^T(X) s(X)\|^2 - \nabla \cdot m(X)m^T(X) s(X)\bigg]. \eqnum
\end{align*}

For the special case where $s(X) = \nabla \log q(X)$, we have that:
\begin{align*}
\mathbb{D}_m(P \|Q) &= \mathbb{E}_P\bigg[\frac{1}{2} \| m^T(X) (\nabla \log p(X) -\nabla \log q(X)) \bigg] \\
&= \mathbb{E}_P \bigg[ \frac{1}{2} \|m^T(X) \nabla \log p(X) \|^2 + \frac{1}{2} \| m^T(X) \nabla \log q(X)\|^2 - \nabla \cdot m(X)m^T(X) \nabla \log q(X)\bigg] \\
&= \mathbb{E}_P\bigg[ \frac{1}{2} \| m^T(X) \nabla \log p(X) \|^2 + \mathcal{S}_m(x, Q)\bigg]  \eqnum
\end{align*}

\end{proof}

\begin{proof}[Proof of Lemma \ref{lemma:drifts}]
This proof closely follows the arguments presented in \cite[Lemma 1]{wu_IT_2024}.  For completeness, we reproduce a modified version of this proof.
\nocite{lai1998information}

We know that 
\begin{equation}
    \mathbb{D}_m(P \| Q) = \mathbb{E}_{ P} \bigg[ \frac{1}{2} \| m^T(X) \nabla \log p(X) \|^2 + \mathcal{S}_m(X, Q) \bigg]
\end{equation}
We observe that the first term inside the expectation does not depend upon $Q$.  We denote 
\begin{equation}
C_m(R) \de \mathbb{E}_R\bigg[\frac{1}{2} \| m^T(X) \nabla \log r(X)\|^2\bigg]
\end{equation}
for any distribution $R$ with density $r$.  Then:
\begin{align*}
&\mathbb{E}_\infty[\mathcal{S}_m(X, P_\infty) - \mathcal{S}_m(X, P_1)] \\
&\quad= \mathbb{D}_m(P_\infty \| P_\infty) - C_m(P_\infty) - \mathbb{D}_m(P_\infty \| P_1) + C_m(P_\infty) \\
&\quad = -\mathbb{D}_m(P_\infty \| P_1)
\end{align*}
and 
\begin{align*}
&\mathbb{E}_1 [\mathcal{S}_m(X, P_\infty) - \mathcal{S}_m(X, P_1)] \\
&\quad  = \mathbb{D}_m(P_1 \| P_\infty) - C_m(P_1) - \mathbb{D}_m(P_1 \| P_1) + C_m(P_1) \\
&\quad  = \mathbb{D}_m(P_1 \| P_\infty)
\end{align*}
\end{proof}

\begin{proof}[Proof of Lemma \ref{lemma:finite_diffusion_divergence}]
Letting 
$$\overline{r}(X) \de m^T(X) \nabla \log r(X),\quad \overline{s}(X) \de m^T(X) \nabla \log s(X),$$
then $\mathbb{D}_m(R \| S) = \frac{1}{2}\mathbb{E}_R[\|\overline{r}(X) - \overline{s}(X)\|^2]$.  
Then, by the triangle inequality,
$$\|\overline{r}(X)-\overline{s}(X)\| \leq \|\overline{r}(X)\| + \|\overline{s}(X)\|$$
and as the norm is nonnegative,
\begin{equation} 
\|\overline{r}(X)+\overline{s}(X)\|^2 \leq (\|\overline{r}(X)\| + \|\overline{s}(X)\|)^2  
\maybeNewline
= \|\overline{r}(X)\|^2 + \|\overline{s}(X)\|^2 + 2\|\overline{r}(X)\|\| \overline{s}(X)\|.
\end{equation} 
Finally,
$$\mathbb{E}_R[\|\overline{r}(X) \| \|\overline{s}(X)\|] \leq \sqrt{\mathbb{E}_R[\|\overline{r}(X)\|^2] \mathbb{E}_R[\| \overline{s}(X)\|^2]}$$
by the Cauchy-Schwartz Inequality.  The Lemma follows from Part~\ref{item:finite} of Assumption~\ref{assumption:hyvarinen_regularity}.
\end{proof}

\begin{proof}[Proof of Theorem \ref{theorem:stein_diffusion}]

This proof follows the arguments of the Stein Diffusion Lemma as presented in \cite[Theorem 11.1]{mit_hypo_test} (forward direction only).

Fix some arbitrarily small $\delta > 0$ and set $c = n(\mathbb{D}_m(P_\infty \| P_1) -\delta)$.  We recall Definition \ref{definition:diffusion_test}:
\begin{equation}
\label{eq:proof_test}
T_m^c(( X_i)_{i=1}^n) = \begin{cases} 
      0 & \text{ if } \sum_{i=1}^n Z_m(X_i) < c \\
      1 & \text{ else. }
\end{cases}
\end{equation}
Recalling \eqref{eq:t1eprob}, the type I error probability of $T_m^c$ is given by
\begin{equation}
    \alpha_n(T_m^c) = \mathbb{P}_\infty \bigg[ \sum_{i=1}^n Z_m(X_i) \geq c \bigg].
\end{equation}
By the law of large numbers, we know that
\begin{equation}
    \frac{1}{n} \sum_{i=1}^n Z_m(X_i)\bigg\vert_{X_i \sim P_\infty} \rightarrow \mathbb{E}_\infty [Z_m(X)]  \text{ as } n \rightarrow \infty ,
\end{equation}
and we have demonstrated in \eqref{eq:drifts} that
\begin{equation}
    \mathbb{E}_\infty [Z_m(X)] = -\mathbb{D}_m(P_\infty \| P_1).
\end{equation}
For the $\delta > 0$ previously chosen, the definition of convergence guarantees that for all $\epsilon > 0$, there exists some $N$ for which $n > N$ implies that
\begin{equation}
    \mathbb{P}_\infty \bigg[ \frac{1}{n}\sum_{i=1}^n Z_m(X_i) \geq (\delta - \mathbb{D}_m(P_\infty \|P_1)) \bigg] < \epsilon.
\end{equation}
Rearranging the inequality and substituting $c = n(\delta - \mathbb{D}_m(P_1 \| P_\infty))$, we can say that for all $\epsilon > 0$, there exists an $N$ such that $n > N$ implies:
\begin{equation}
        \mathbb{P}_\infty \bigg[ \sum_{i=1}^n Z_m(X_i) \geq c \bigg] < \epsilon.
\end{equation}
Letting $\epsilon = \overline{\alpha}$ establishes that $\lim_{n \rightarrow \infty } \alpha_n(T_m^c) \leq \overline{\alpha}$ for all $\overline{\alpha} \in (0, 1)$.

Next, we calculate the type II error probability. Recalling \eqref{eq:t2eprob}, the type II error probability of $T_m^c$ is given by:
\begin{equation}
    \beta_n(T_m^c) \de \mathbb{P}_1 \bigg[ \sum_{i=1}^n Z_m(X_i) < c \bigg].
\end{equation}
We apply the Chernoff Bound:
\begin{align*}
    &\mathbb{P}_1 \bigg[ \exp \bigg( -\sum_{i=1}^n Z_m(X_i) \bigg) > \exp -c \bigg] \\
    &\quad\quad\leq  \inf_{\theta > 0} e^{-\theta (-c)} \prod_{i=1}^n \mathbb{E}_1[\theta \exp -Z_m(X_i)] \\
    &\quad\quad\leq e^{c} \prod_{i=1}^n \mathbb{E}_1 [\exp - Z_m(X_i)]. \eqnum
\end{align*}
We assume that $\mathbb{E}_1 [\exp -Z_m(X_i)] \leq 1$.  Thus:
\begin{equation}
    e^{c} \prod_{i=1}^n \mathbb{E}_1 [\exp -Z_m(X_i)] \leq e^{c},
\end{equation}
and
\begin{equation}
    \beta_n(T_m^c) \leq e^{c} =  \exp\bigg( n(\delta-\mathbb{D}_m(P_\infty \| P_1))\bigg).
\end{equation}

This result holds for all $\delta > 0$.  By Definition~\ref{def:error_exponent}, it follows that $\mathbb{D}_m(P_\infty \| P_1) \leq \mathcal{B}(T_m^c)$.
\end{proof}

\begin{proof}[Proof of Theorem \ref{theorem:arl}]
This proof closely follows the arguments of Theorem 3 in \cite[Theorem 3]{wu_IT_2024}. 
We reproduce this proof with modification for completeness.
We prove in many steps:

\textit{Construction of a Random Walk and Martingale:}
We first define 
\begin{equation}
    \delta \de -\log(\mathbb{E}_\infty[\exp Z_m(X)]).
\end{equation}
From the condition of Theorem \ref{theorem:arl}, $\delta \geq 0$.  Next, define 
\begin{equation}
    \tilde{Z}_m(X) \de Z_m(X) + \delta
\end{equation}
where $Z_m(X)$ follows Definition \ref{def:diffusion_inst_det_score}.
We further define
\begin{align*}
    \tilde{W}_a^b &\de \sum_{i=a}^b \tilde{Z}_m(X_i) \\
    \tilde{G}(n) &\de \exp \tilde{W}_1^n = \exp \bigg( \sum_{i=1}^n \tilde{Z}_m(X_i) \bigg)
\end{align*}
We observe that $\tilde{W}$ is a random walk that can take negative values.
Using Lemma \ref{lemma:drifts} and Jensen's Inequality with respect to the convex function $-\log(\cdot)$:
\begin{equation}
\label{eq:jensen}
      \mathbb{D}_m(P_\infty \| P_1) = -\mathbb{E}_\infty [Z_m(X)] > -\log E_\infty[\exp Z_m(X)] = \delta.
\end{equation}
We present Jensen's Inequality as a strict inequality (it could achieve equality only if $Z_m(X)$ is almost surely constant, but this would imply that $D_m(P_\infty \| P_1) = \mathbb{D}_m(P_1 \| P_\infty) = 0$, violating Assumption~\ref{assumption:positive_diffusion_divergence}).  
Using \eqref{eq:jensen}, we observe:
\begin{align*}
    \mathbb{E}_\infty [\tilde{Z}_m(X)] &= \mathbb{E}_\infty[Z_m(X) + \delta] 
 \\
 &< -\mathbb{D}_m(P_\infty \| P_1) + \mathbb{D}_m(P_\infty \| P_1) = 0. \eqnum
\end{align*}
As $\mathbb{E}_\infty [\tilde{Z}_m(X)] < 0$, we note that $\tilde{W}$ is a random walk with a strictly negative drift.

We observe that
\begin{equation} 
    \mathbb{E}_\infty[\tilde{G}(n+1) | \mathcal{F}_n] = \tilde{G}(n) \mathbb{E}_\infty [\exp (\tilde{Z}_m(X_{n+1})] \\
    = \tilde{G}(n) e^\delta \mathbb{E}_\infty[\exp Z_m(X_{n+1})] = \tilde{G}(n),
\end{equation} 
and that
\begin{align*}
    \mathbb{E}_\infty[\tilde{G}(n)] &= \mathbb{E}_\infty \bigg[ \exp\bigg( 
\sum_{i=1}^n (Z_m(X_i) + \delta)  \bigg) \bigg] 
\\
&= e^{n\delta} \prod_{i=1}^n \mathbb{E}_\infty [\exp Z_m(X_i)] = 1, \eqnum
\end{align*}
so $\tilde{G}$ is a non-negative martingale with mean one under $P_\infty$.

\textit{Construction of Stopping Rule $\tilde{T}$:}
Next, we define a stopping time:
\begin{equation}
    \tilde{T} = \inf \bigg\{ n \geq 1 : \bigg(\max_{1 \leq k \leq n} \tilde{W}_{k}^n \bigg) \geq c \bigg\}
\end{equation}
We note that $\tilde{T}$ cannot be trivially calculated due to the strictly negative drift of the random walk given by $\tilde{W}_k^n$.

\textit{Construction of Stopping Rule $M$:}
We define a sequence of stopping times:
\begin{align*}
    \eta_0 &\de 0, \\
    \eta_1 &\de \inf \bigg\{ t : \tilde{W}_1^t < 0 \bigg\}, \\
    \eta_{k+1}  &\de \inf \bigg\{ t > \eta_k : \tilde{W}_{\eta_k + 1}^t < 0 \bigg\},
\end{align*}
and
\begin{equation} 
    M \de \inf \bigg\{ k \geq 0 : \eta_k < \infty 
    \maybeNewline
    \text{ and } \tilde{W}_{\eta_k + 1}^n >c \text{ for some } n > \eta_k \bigg\}.
\end{equation} 
We can see that $M \leq \tilde{T}$.  Since $\tilde{Z}_m(X) \geq Z_m(X)$, we know that $\tilde{T} \leq T$.  Hence, $\mathbb{E}_\infty [T] \geq \mathbb{E}_\infty[M]$.  

\textit{Calculation of $\mathbb{P}_\infty(M > k)$:}
As an intermediate step in the calculation of $\mathbb{E}_\infty[M]$, we first calculate $\mathbb{P}_\infty(M > k)$:
\begin{align*}
    \mathbb{P}_\infty [M > k] &= \mathbb{E}_\infty[\mathbbm{1}_{\{M > k\}}] = \mathbb{E}_\infty[\mathbbm{1}_{\{M > k\}} \mathbbm{1}_{\{M \geq k\}}] \\
    &= \mathbb{E}_\infty[\mathbb{E}_\infty[\mathbbm{1}_{\{M > k\}} \mathbbm{1}_{\{M \geq k\}} | \mathcal{F}_{\eta_k} ]]\\
    &= \mathbb{E}_\infty[\mathbb{P}(M \geq k + 1 | \mathcal{F}_{\eta_k}) \mathbbm{1}_{\{M \geq k\}} ]. \\
\end{align*}
We next consider the probability $\mathbb{P}_\infty (M \geq k + 1 | \mathcal{F}_{\eta_k})$. 
\begin{equation}
\label{eq:m_k_1}
    \mathbb{P}_\infty(M \geq k + 1 | \mathcal{F}_{\eta_k}) 
     =1-\mathbb{P}_\infty (M \leq k | \mathcal{F}_{\eta_k}).
\end{equation}
We calculate $\mathbb{P}(M \leq k | \mathcal{F}_{\eta_k})$:
\begin{align*}
    &\mathbb{P}_\infty \bigg( \tilde{W}_{\eta_k + 1}^n > c \text{ for some } n > \eta_k \bigg| \mathcal{F}_{\eta_k} \bigg) \\
    &\quad = \mathbb{P}_\infty \bigg( \tilde{W}_1^n \text{ for some } n \bigg) \\
    &\quad = \lim_{t \rightarrow \infty} \mathbb{P}_\infty \bigg( \bigg( \max_{n : n \leq t} \tilde{W}_1^n\bigg) \geq c \bigg) \\
    &\quad = \lim_{t \rightarrow \infty} \mathbb{P}_\infty \bigg( \bigg(\max_{n: n \leq t} \exp(\tilde{W}_1^n) \bigg) > e^c \bigg) \\
    &\quad = \lim_{t \rightarrow \infty} \mathbb{P}_\infty \bigg(\bigg(\max_{n:n\leq t} \tilde{G}(n)\bigg) > e^c \bigg) \\
    &\quad \leq \frac{\mathbb{E}_\infty[\tilde{G}(t)]}{e^c} = e^{-c},
\end{align*}
where in the first step we rely upon the i.i.d. (and hence stationary) nature of $X_i$ and in the last step we invoke Doob's submartingale inequality (\cite{doob1953stochastic}), noting that $\tilde{G}(t)$ is a nonnegative martingale with mean one under $P_\infty$.

From \eqref{eq:m_k_1}, we know that $\mathbb{P}(M \geq k + 1 | \mathcal{F}_{\eta_k}) $ $= 1-e^{-c}$ and hence that
\begin{align*}
    & \mathbb{P}_\infty[M > k] = \mathbb{E}_\infty [ \mathbb{P}_\infty [M  \geq k + 1| \mathcal{F}_{\eta_k}] \mathbbm{1}_{ \{ M \geq k \} } ] \\
    & \quad \geq  (1-e^{-c}) \mathbb{P}_\infty [M > k -1] \\
    & \quad \geq (1-e^{-c})^2 \mathbb{P}_\infty [M > k -2] \\
    & \quad \geq \cdots \geq (1-e^{-c})^k .
\end{align*}
We next sum a geometric series:
\begin{equation} 
    \mathbb{E}_\infty [M] = \sum_{k=0}^\infty \mathbb{P}(M > k) 
    \geq \sum_{k=0}^\infty (1-e^{-c})^k 
    \maybeNewline
    = \frac{1}{1-(1-e^{-c})}= e^c
\end{equation} 
and we conclude that
\begin{equation}
    \mathbb{E}_\infty[T] \geq \mathbb{E}_\infty[\tilde{T}] \geq \mathbb{E}_\infty [M] \geq e^c.
\end{equation}
\end{proof}

\begin{proof}[Proof of Theorem \ref{theorem:edd}]
This proof closely follows the arguments presented in \cite[Theorem 4]{wu_IT_2024}.  For completeness, we reproduce a modified version of this proof.
\nocite{woodroofe1982nonlinear}

Define a random walk
\begin{equation}
    \hat{Y}(n) = \sum_{i=1}^n Z_m(X_i), \; n \geq 1
\end{equation}
and a stopping time
\begin{equation}
    \hat{R} \de \inf\{ n \geq 1 : \hat{Y}(n) \geq c \}.
\end{equation}
We define the overshoot of the random walk over threshold $c$ by:
\begin{equation}
    Q_c \de \hat{Y}(\hat{R}) - c.
\end{equation}
Define $\mu_m \de \mathbb{E}_1 [Z_m(X)]$ and $\sigma_m^2 = \text{Var}_1[Z_m(X)]$.  We define
\begin{equation}
    \mu_m \de \mathbb{E}_1 [Z_m(X)] = \mathbb{D}_m(P_1 \| P_\infty),
\end{equation}
where we use the result of Lemma \ref{lemma:drifts}.  We further define
\begin{equation}
    \sigma_m^2 \de \text{Var}_1[Z_m(X)^2]= \mathbb{E}_1[Z_m(X)^2] - \mu_m^2.
\end{equation}
Under the mild assumption that
\begin{equation}
    \mathbb{E}_1[\mathcal{S}_m(X, P_\infty)^2],\; \mathbb{E}_1[\mathcal{S}_m(X, P_1)^2] < \infty,
\end{equation}
we can use Theorem 1 of \cite{lorden1970excess} to conclude that:
\begin{equation}
    \sup_{c \geq 0} Q_c \leq \frac{\mathbb{E}_1[(Z_m(X)^+)^2]}{\mathbb{E}_1[Z_m(X)]} \leq \frac{\mathbb{E}_1[Z_m(X)^2]} {\mathbb{E}_1[Z_m(X)]} = \frac{\mu_m^2 + \sigma_m^2}{\mu_m^2},
\end{equation}
where $(\cdot)^+ = \max(0, \cdot)$.  We next use Wald's Lemma \cite{woodroofe1982nonlinear} to show that for all $c > 0$:
\begin{equation}
    \mathbb{E}_1[\hat{R}] = \frac{c}{\mu_m} + \frac{\mathbb{E}_1[Q_c]}{\mu_m} \leq \frac{c}{\mu_m} + \frac{\mu_m^2 + \sigma_m^2}{\mu_m^2}.
\end{equation}
For any $n \geq 0$, $\hat{Y}(n) \leq Y_m(n)$ and so $\hat{R} \geq \tau_m^c$ for the $Y_m(\cdot), \tau_m^c$ of Definition~\ref{definition:diffusion_stopping_rule}.  Then:
\begin{equation}
    \mathbb{E}_1[\tau_m^c] \leq \mathbb{E}_1[\hat{R}] \leq \frac{c}{\mu_m} + \frac{\mu_m^2 + \sigma_m^2}{\mu_m^2},
\end{equation}
but as $c \rightarrow \infty$, the effect of the term $(\mu_m^2 + \sigma_m^2) / \mu_m^2$ approaches zero.  
Thus,
\begin{equation}
\mathcal{L}_{\texttt{\textup{WADD}}}(\tau_m^c) \sim \frac{c}{\mathbb{D}_m(P_1 \| P_\infty)}.
\end{equation}
where $f(c) \sim g(c)$ as $c \rightarrow \infty$ means that $\lim_{c \rightarrow \infty } {f(c)}/{g(c)} = 1$.

\end{proof}

\begin{proof}[Proof of Theorem \ref{theorem:gaussian_analytical}]
Direct calculation gives:
\begin{align*}
    Z_{\texttt{\textup{KL}}}(X) =& \log p_1(X) - \log p_\infty(X) \\
    =& -\frac{1}{2}(X-\mu_1)^T V^{-1} (X-\mu_1) \\
    &\quad \quad + \frac{1}{2}(X-\mu_\infty)^T V^{-1} (X-\mu_\infty).
\end{align*}

Without loss of generality, we can substitute $0$ in place of  $\mu_\infty$ and substitute $\mu = \mu_1 - \mu_\infty$ in place of $\mu_1$.  With this substitution:
\begin{equation}
Z_{\texttt{\textup{KL}}}(X) =  X^T V^{-1}\mu -\frac{1}{2} \mu^T V^{-1}\mu  .   
\end{equation}
In the same way, we calculate $Z_{M}(\cdot)$ for any fixed matrix $M$:
\begin{equation}
Z_{M}(X) = X^T V^{-1} M M^T V^{-1} \mu 
-\frac{1}{2} \mu^T V^{-1} M M^T V^{-1} \mu.
\end{equation}

In the special case where $M=I$ (Fisher divergence-based setting), the formula for $Z_M(X)$ is the formula for $Z_{\texttt{\textup{KL}}}(X)$ where all instances of $V^{-1}$ are replaced by instances of $V^{-2}$.  In our test, setting $M = V^\frac{1}{2}$ lets $M M^T = V^{1}$, which makes the diffusion-based test equivalent to its LLR-based analog.
\end{proof}

\begin{proof}[Proof of Theorem \ref{theorem:ode}]
Consider $P_\infty =\mathcal{N}(0, 1/4)$ and $P_1 = \mathcal{N}(0,1)$.  Here, $X \in \mathbb{R}^d$ for $d=1$.  There does not exist any matrix-valued function $m(X) : \mathbb{R}^d \mapsto \mathbb{R}^{d \times w}$ such that $Z_m(X) = Z_{\texttt{\textup{KL}}}(X)$ across all $X \in \mathbb{R}$.  Define 
\begin{align}
    P_\infty = \mathcal{N}(0, \sigma^2),\; P_1 = \mathcal{N}(0,1).
\end{align}
These distributions are supported on $\mathbb{R}$.  Suppose for contradiction that there exists some $m(X) : \mathbb{R} \mapsto \mathbb{R}$ such that:
\begin{equation}
    Z_{\texttt{\textup{KL}}}(X) = Z_m(X) \;\; \forall X \in \mathbb{R}.
\end{equation}
where $Z_{\texttt{\textup{KL}}}(X)$ is defined in \eqref{eq:cusum_inst_det_score}.  Defining 
\begin{equation}
    u(X) \de m^2(X),
\end{equation}
we can see that $u(X) \geq 0$ for all $X \in \mathbb{R}$.  We shall demonstrate a contradiction by showing that $u(X)$ cannot be positive for all $X \in \mathbb{R}$ when $\sigma = 1/2$.

To simplify notation, we denote
\begin{align}
s_\infty(X) &\de \nabla \log p_\infty(X), \\
s_1(X) &\de \nabla \log p_1(X).
\end{align}

When $X, m(X)$ are scalars, then $s_\infty(X), s_1(X)$ are also scalars, and we can re-write the diffusion-based instantaneous detection score of Definition \ref{def:diffusion_inst_det_score} in simpler terms.  We denote the derivatives of $u(X), s_\infty(X)$, and $s_1(X)$ as $u'(X), s_\infty'(X)$, and $s_1'(X)$, respectively.
\begin{align*}
    Z_{\sqrt{u}}(X) =& \frac{1}{2} u(X) s_\infty^2(X) + \frac{\partial}{\partial X} \bigg(u(X) s_\infty (X)\bigg) \\
    &\quad- \frac{1}{2}u(X) s_1^2(X) - \frac{\partial}{\partial X} \bigg(u(X) s_1(X)\bigg) \\
    =& \frac{1}{2}u(X) \big(s_\infty^2(X) - s_1^2(X)\big) \\
    &\quad+ u'(X) \big(s_\infty(X) - s_1(X)\big)\\
    &\quad+ u(X) \big(s_\infty '(X) - s_1'(X)\big) \\
    =& u(X) \bigg( \frac{1}{2}(s_\infty^2(X) 
- s_1^2(X)) + (s_\infty'(X) - s_1'(X)\bigg) \\
&\quad + u'(X) \bigg(s_\infty(X) - s_1(X)\bigg). \eqnum
\end{align*}

We next calculate $Z_{\texttt{\textup{KL}}}(X)$.  We note that for a scalar Gaussian distribution $P=\mathcal{N}(\mu, \sigma)$: 
\begin{equation}
    \log p(X) = -\frac{1}{2}\log (2 \pi \sigma^2)-\frac{1}{2 \sigma^2}(X-\mu)^2,
\end{equation}
and so 
\begin{align*}
    &\log p_1(X) - \log p_\infty(X) \\
    &\quad\quad= -\frac{1}{2} \log (2\pi) - \frac{1}{2}X^2 + \frac{1}{2}\log (2\pi \sigma^2) + \frac{1}{2 \sigma^2}X^2 \\
    &\quad\quad= \log(\sigma) + \frac{X^2}{2}\bigg(\frac{1}{\sigma^2} - 1\bigg) .\eqnum
\end{align*}

Setting $Z_{\texttt{\textup{KL}}}(X) = Z_m(X)$, we arrive at the first-order ordinary differential equation (ODE):
\begin{equation} 
\label{eq:ode}
u(X) \bigg( \frac{1}{2}(s_\infty^2(X) 
- s_1^2(X)) + (s_\infty'(X) - s_1'(X))\bigg) 
\maybeNewline
+ u'(X) \bigg(s_\infty(X) - s_1(X)\bigg) 
\maybeNewline
= \log(\sigma) + \frac{X^2}{2}\bigg(\frac{1}{\sigma^2} - 1\bigg) .
\end{equation} 
We calculate:
\begin{align}
\label{eq:score_calculations}
    s_\infty(X) = -\frac{1}{\sigma^2}X,\; s_1(X) = -X,
\end{align}
and further observe that $s_\infty(X) - s_1(X) = (1-\sigma^{-2})X$, that $s_\infty'(X) - s_1'(X) = (1-\sigma^{-2})$, and that $s_\infty^2(X) - s_1^2(X) = (\sigma^{-4} - 1)X^2$.
Plugging into \eqref{eq:ode}:
\begin{align*}
\label{eq:substituted_ode}
&  u(X) \bigg( \frac{1}{2}X^2\bigg(\frac{1}{\sigma^4}-1\bigg) + \bigg(1-\frac{1}{\sigma^2}\bigg)\bigg) \\
&+ u'(X) \bigg(X\bigg(1-\frac{1}{\sigma^2}\bigg)\bigg) \\
&\quad\quad\quad= \log(\sigma) + \frac{X^2}{2}\bigg(\frac{1}{\sigma^2} - 1\bigg). \eqnum
\end{align*}
We next divide through by $(1-\sigma^{-2})X$, noting that this term has a zero at $X=0$.  We shall take care not to integrate the resulting ODE through $X=0$.  Furthermore, this also introduces a zero if $\sigma =1$, though in this case $p_\infty(X) = p_1(X)$ for all $X \in \mathbb{R}$, rendering detection impossible.
Performing this division, we get that
\begin{equation}
    u(X) k(X) + u'(X) = r(X),
\end{equation}
where
\begin{equation}
    k(X) \de \frac{ \frac{1}{2}X^2\bigg(\frac{1}{\sigma^4}-1\bigg) + \bigg(1-\frac{1}{\sigma^2}\bigg)}{X\bigg(1-\frac{1}{\sigma^2}\bigg)} = -\gamma X + \frac{1}{X} ,
\end{equation}
\begin{equation}
    r(X) \de \frac{\log(\sigma) + \frac{X^2}{2}\bigg(\frac{1}{\sigma^2} - 1\bigg) }{X\bigg(1-\frac{1}{\sigma^2}\bigg)} = \frac{\delta}{X}- \frac{X}{2},
\end{equation}
and where 
\begin{align}
    \gamma \de\frac{1}{2}\bigg(\frac{1-\frac{1}{\sigma^4}}{1-\frac{1}{\sigma^2}}\bigg), \quad \quad \delta \de \bigg(\frac{\log \sigma}{1-\frac{1}{\sigma^2}}\bigg).
\end{align}
We note that $\gamma, \delta$ are independent of $X$ and that they are positive for all $\sigma > 0$.

We shall attempt to solve this ODE via the integrating factor method.  We calculate that $\int k(X) dX = \int -\gamma X + X^{-1}dX = -\frac{\gamma}{2}X^2 + \log |X|$ (note that it is not necessary to include a constant of integration $C$, as any such constant would cancel in the step of \eqref{eq:ode_w_D}).  We define
\begin{align*}
\label{eq:def_D}
    D(X) &\de \exp\bigg(\int k(X) dX \bigg) \\
    &= \exp\bigg(-\frac{\gamma}{2}X^2 \bigg) |X| . \eqnum
\end{align*}
We recall the ODE of \eqref{eq:substituted_ode} and multiply through by $D(X)$:
\begin{equation}
\label{eq:ode_w_D}
    D(X) u(X) k(X) + D(X) u'(X) = D(X) r(X).
\end{equation}
We observe that $\frac{\partial}{\partial X}D(X) = D(X) k(X)$ and that $\frac{\partial}{\partial X} D(X) u(X) = D(X)k(X)u(X) + D(X)u'(X)$, which is the left-hand-side of \eqref{eq:ode_w_D}.  We make this substitution to the left-hand-side and apply the Fundamental Theorems of Calculus:
\begin{equation}
\label{eq:first_second_ftc}
    D(X)u(X) - D(a)u(a) = \int_a^X D(y)r(y) dy.
\end{equation}
We know by Lemma \ref{lemma:finite_diffusion_divergence} that the diffusion divergence is bounded.  Recalling \eqref{eq:score_calculations}, we can express $\mathbb{D}_m(P_1 \| P_\infty)$ as:
\begin{align*}
     \frac{1}{2}\mathbb{E}_1\bigg[\big\|m^T(X) (\nabla \log p_1(X) - \nabla \log p_\infty(X))\big\|^2\bigg] \\
    = \frac{1}{2\sqrt{2\pi }}\bigg(1-\frac{1}{\sigma^2}\bigg)^2 \int_{-\infty}^\infty e^{-\frac{1}{2}X^2} u(X) X^2 dX < \infty .\eqnum
\end{align*}
Consider $\sigma = \frac{1}{2}$.  For this choice of $\sigma$, we have that $\gamma = 2.5 > 1$.  Recalling that $u(X) \geq 0$ for all $X$ and noting that $e^{h(X)}, X^2, |X| \geq 0$ for all $X, h(\cdot)$, we have that:
\begin{align*}
\label{eq:convergence_impl1}
    &\int_{-\infty}^\infty e^{-\frac{1}{2}X^2} u(X) X^2 dX < \infty \\
    \implies&\int_{-\infty}^\infty e^{-\frac{\gamma}{2}X^2} u(X) X^2 dX < \infty \eqnum
\end{align*}
as $\frac{\gamma}{2} > \frac{1}{2}$.  Furthermore, for all $X \not\in [-1, 1]$, we know that $X^2 > |X|$.  Thus, the convergence of \eqref{eq:convergence_impl1} implies:
\begin{align}
\label{eq:convergence_impl2}
    \implies&\int_{-\infty}^\infty e^{-\frac{\gamma}{2}X^2} u(X) |X| dX < \infty.
\end{align}
We note that the integral of \eqref{eq:convergence_impl2} is equivalent to $ e^{-C}\int_{-\infty}^\infty D(X)u(X) dX$.  As all terms of the integrand are non-negative for all $X$, the convergence is absolute and implies that 
\begin{equation}
\label{eq:limits}
    \lim_{x\rightarrow \infty} D(X) u(X) = \lim_{X \rightarrow -\infty} D(X) u(X) = 0.
\end{equation}
Plugging \eqref{eq:limits} into \eqref{eq:first_second_ftc}, we get that for all $ X<0$:
\begin{equation}
\label{eq:integration_factor_method}
    D(X)u(X) = \int_{-\infty}^X D(y)r(y) dy.
\end{equation}
We now expand the integrand:
\begin{align*}
     &\int_{-\infty}^{X}D(y)r(y) dy \\
     &\quad=\underbrace{ \int_{-\infty}^X  \exp\bigg(-\frac{\gamma}{2} y^2 \bigg)  \frac{|y|}{y} \delta \;dy}_{\text{term 1}} \\
     &\quad- 
     \underbrace{ \int_{-\infty}^X  \exp\bigg(-\frac{\gamma}{2} y^2\bigg) |y|\frac{y}{2}\; dy}_{\text{term 2}} .
\end{align*}
If $X < 0$, then $y < 0$ and $\frac{|y|}{y} = -1$.  Thus:
\begin{equation}
    \text{term 1} = - \delta \sqrt{\frac{2\pi}{\gamma}} \Phi(X \sqrt{\gamma}) ,
\end{equation}
where $\Phi$ denotes the CDF of a standard scalar Gaussian with mean zero and standard deviation one. We next integrate term 2 using integration by parts, keeping in mind that $X < 0 \implies y < 0$, and hence $y |y| = -y^2$:
\begin{align*}
    \text{term 2} &=\int_{-\infty}^X  e^{-\frac{\gamma}{2}y^2} \frac{(- y^2)}{2} dy\\
    &=\frac{1}{2\gamma}\int_{-\infty}^X  (-\gamma y e^{-\frac{\gamma}{2}y^2})( y) dy\\
    &=\frac{1}{2\gamma}\bigg([\gamma y e^{-\frac{\gamma}{2}y^2}]_{-\infty}^X - \int_{-\infty}^X e^{-\frac{\gamma}{2}y^2} dy \bigg)\\
    &=\frac{1}{2\gamma }\bigg(X e^{-\frac{\gamma}{2}X^2} - \sqrt{\frac{2\pi}{\gamma}}\Phi(X \sqrt{\gamma})\bigg) .\eqnum
\end{align*}

We now know that 
\begin{equation}
    u(X) = \frac{1}{D(X)} \underbrace{\bigg( \zeta \Phi(X \sqrt{\gamma}) 
    - 
    \frac{1}{2\gamma }X e^{-\frac{\gamma}{2}X^2}
    \bigg)}_{\tilde{u}(X)},
\end{equation}
where $\zeta = (-\delta -(2\gamma)^{-1})\sqrt{2\pi \gamma^{-1}}$.    
We observe that $D(X) \geq 0$ for all $X$, and the sign of $u(X)$ is equal to the sign of $\tilde{u}(X)$.  Evaluating, we observe that $\tilde{u}(-1) \approx 0.054 > 0$ but $\tilde{u}(-0.05) \approx -0.013 < 0$.  As such, the $u(X) < 0$ for some $X \in \mathbb{R}$, a contradiction.  No matrix-valued function $m(X) : \mathbb{R} \mapsto \mathbb{R}$ can enforce that $Z_{\texttt{\textup{KL}}}(X)=Z_m(X)$ for all $X \in \mathbb{R}$.

Consider a more general case, where $m(X) : \mathbb{R} \mapsto \mathbb{R}^{1 \times w}$.  Letting $s_P(X) \de \nabla \log p(X)$, the diffusion-Hyv\"arinen score becomes:
\begin{equation} 
    \mathcal{S}_m(X, P) = \frac{1}{2}\| m^T(X) s_P(X) \|^2 
    \maybeNewline
    + \nabla \cdot m(X) m^T(X) s_P(X).
\end{equation} 
Expanding the first term:
\begin{align*}
    \frac{1}{2}\| m^T(X) & s_P(X) \|^2 \\
    = &\frac{1}{2} \| (m_1(X)s_P(X) , \cdots m_w(X)s_P(X) )^T \|^2 \\
    = &\frac{1}{2} (m_1^2(X) s_P^2(X) + \cdots + m_w^2(X) s_P^2(X) ) \\
    = &\frac{1}{2} \bigg(\sum_{i=1}^w m_i^2(X) \bigg) s_P^2(X), \eqnum
\end{align*}
as $m_i(X)$ is scalar for any $1 \leq i \leq w$. 
 Expanding the second term:
\begin{align*}
    \nabla \cdot & m(X)m^T(X) s_P(X) \\
    &= \nabla \cdot \| m^T(X) \|^2 s_P(X) \\
    &= \nabla \cdot \bigg( \sum_{i=1}^w m_i^2 (X) \bigg) s_P(X) .\eqnum
\end{align*}

In this case, scaling by $m(X): \mathbb{R} \mapsto \mathbb{R}^{1 \times w}$ is equivalent to scaling by $\tilde{m}(X) : \mathbb{R} \mapsto \mathbb{R}$ where
\begin{equation}
    \tilde{m}(X) = \sqrt{\sum_{i=1}^w m_i^2(X)}.
\end{equation}
We know that there cannot exist an $\tilde{m}(X)$ such that the diffusion test statistic matches the log-likelihood ratio. 
Thus, for this choice of $P_\infty, P_1$, for all matrix-valued functions $m(X) : \mathbb{R} \mapsto \mathbb{R}^{1 \times w} $ where $ w \in \mathbb{N}$, $Z_m(\cdot)$ is not equal to $Z_\texttt{\textup{KL}}(\cdot)$.

Define $h(X) = | Z_{\texttt{\textup{KL}}}(X) - Z_m(X) |$.  We know that for all $m(X)$ there exists at least one value 
$X^* \in \mathbb{R}$ for which $h(X^*) \neq 0$. 
By Assumptions~\ref{assumption:p_inf_p_1} \ref{assumption:hyvarinen_regularity}, $Z_m(X)$, $Z_\texttt{\textup{KL}}(X)$, and $h(X)$ are all continuous in $X$.  Letting $\epsilon = h(X^*) / 2$, we from the definition of continuity that there exists a $\delta > 0$ such that $h(X) > \epsilon$ for all $X \in (X^*-\delta, X^* + \delta)$.  We know from Assumption~\ref{assumption:p_inf_p_1} that $P_\infty, P_1$ are supported on $\mathbb{R}^d$; hence $\mathbb{P}_\infty[X \in (X^*-\delta, X^* + \delta)] > 0$ and $\mathbb{P}_1[X \in (X^*-\delta, X^* + \delta)] > 0$.

\end{proof}

\bibliographystyle{IEEEtran}
\bibliography{bib}

\begin{thebibliography}{10}
\providecommand{\url}[1]{#1}
\csname url@samestyle\endcsname
\providecommand{\newblock}{\relax}
\providecommand{\bibinfo}[2]{#2}
\providecommand{\BIBentrySTDinterwordspacing}{\spaceskip=0pt\relax}
\providecommand{\BIBentryALTinterwordstretchfactor}{4}
\providecommand{\BIBentryALTinterwordspacing}{\spaceskip=\fontdimen2\font plus
\BIBentryALTinterwordstretchfactor\fontdimen3\font minus \fontdimen4\font\relax}
\providecommand{\BIBforeignlanguage}[2]{{%
\expandafter\ifx\csname l@#1\endcsname\relax
\typeout{** WARNING: IEEEtran.bst: No hyphenation pattern has been}%
\typeout{** loaded for the language `#1'. Using the pattern for}%
\typeout{** the default language instead.}%
\else
\language=\csname l@#1\endcsname
\fi
#2}}
\providecommand{\BIBdecl}{\relax}
\BIBdecl

\bibitem{optimality_binning_distributed_hypo_test}
M.~S. Rahman and A.~B. Wagner, ``On the optimality of binning for distributed hypothesis testing,'' \emph{IEEE Transactions on Information Theory}, vol.~58, no.~10, pp. 6282--6303, 2012.

\bibitem{np_test_zero_rate_multiterminal_hypo_test}
S.~Watanabe, ``Neyman–pearson test for zero-rate multiterminal hypothesis testing,'' \emph{IEEE Transactions on Information Theory}, vol.~64, no.~7, pp. 4923--4939, 2018.

\bibitem{optimal_grouping_hypo_test}
K.~R. Varshney and L.~R. Varshney, ``Optimal grouping for group minimax hypothesis testing,'' \emph{IEEE Transactions on Information Theory}, vol.~60, no.~10, pp. 6511--6521, 2014.

\bibitem{distributed_hypo_test}
P.~Escamilla, M.~Wigger, and A.~Zaidi, ``Distributed hypothesis testing: Cooperation and concurrent detection,'' \emph{IEEE Transactions on Information Theory}, vol.~66, no.~12, pp. 7550--7564, 2020.

\bibitem{active_hypo_test_anomaly}
K.~Cohen and Q.~Zhao, ``Active hypothesis testing for anomaly detection,'' \emph{IEEE Transactions on Information Theory}, vol.~61, no.~3, pp. 1432--1450, 2015.

\bibitem{unnikrishnan2011minimax}
J.~Unnikrishnan, V.~V. Veeravalli, and S.~P. Meyn, ``Minimax robust quickest change detection,'' \emph{IEEE Transactions on Information Theory}, vol.~57, no.~3, pp. 1604--1614, 2011.

\bibitem{qcd_time_varying}
S.~Blostein, ``Quickest detection of a time-varying change in distribution,'' \emph{IEEE Transactions on Information Theory}, vol.~37, no.~4, pp. 1116--1122, 1991.

\bibitem{qcd_change_time}
Y.~Liu and S.~Blostein, ``Quickest detection of an abrupt change in a random sequence with finite change-time,'' \emph{IEEE Transactions on Information Theory}, vol.~40, no.~6, pp. 1985--1993, 1994.

\bibitem{byzantine_qcd}
Y.-C. Huang, Y.-J. Huang, and S.-C. Lin, ``Asymptotic optimality in byzantine distributed quickest change detection,'' \emph{IEEE Transactions on Information Theory}, vol.~67, no.~9, pp. 5942--5962, 2021.

\bibitem{ar_model_qcd}
Z.~Sun and S.~Zou, ``Quickest change detection in autoregressive models,'' \emph{IEEE Transactions on Information Theory}, vol.~70, no.~7, pp. 5248--5268, 2024.

\bibitem{song2019score}
Y.~Song and S.~Ermon, ``Generative modeling by estimating gradients of the data distribution,'' \emph{Advances in Neural Information Processing Systems (NeurIPS)}, 2019.

\bibitem{song_sliced_score_matching}
\BIBentryALTinterwordspacing
Y.~Song, S.~Garg, J.~Shi, and S.~Ermon, ``Sliced score matching: {A} scalable approach to density and score estimation,'' \emph{CoRR}, vol. abs/1905.07088, 2019. [Online]. Available: \url{http://arxiv.org/abs/1905.07088}
\BIBentrySTDinterwordspacing

\bibitem{roberts_rosenthal}
\BIBentryALTinterwordspacing
G.~O. Roberts and J.~S. Rosenthal, ``{Optimal scaling for various Metropolis-Hastings algorithms},'' \emph{Statistical Science}, vol.~16, no.~4, pp. 351 -- 367, 2001. [Online]. Available: \url{https://doi.org/10.1214/ss/1015346320}
\BIBentrySTDinterwordspacing

\bibitem{gbrbm_without_tears}
R.~Liao, S.~Kornblith, M.~Ren, D.~J. Fleet, and G.~Hinton, ``Gaussian-bernoulli rbms without tears,'' 2022.

\bibitem{aloui2025scorebasedmetropolishastingsalgorithms}
\BIBentryALTinterwordspacing
A.~Aloui, A.~Hasan, J.~Dong, Z.~Wu, and V.~Tarokh, ``Score-based metropolis-hastings algorithms,'' 2025. [Online]. Available: \url{https://arxiv.org/abs/2501.00467}
\BIBentrySTDinterwordspacing

\bibitem{wu2022score}
S.~Wu, E.~Diao, K.~Elkhalil, J.~Ding, and V.~Tarokh, ``Score-based hypothesis testing for unnormalized models,'' \emph{IEEE Access}, vol.~10, pp. 71\,936--71\,950, 2022.

\bibitem{wu_IT_2024}
S.~Wu, E.~Diao, T.~Banerjee, J.~Ding, and V.~Tarokh, ``Quickest change detection for unnormalized statistical models,'' \emph{IEEE Transactions on Information Theory}, 2023.

\bibitem{barp2022minimumsteindiscrepancyestimators}
\BIBentryALTinterwordspacing
A.~Barp, F.-X. Briol, A.~B. Duncan, M.~Girolami, and L.~Mackey, ``Minimum stein discrepancy estimators,'' 2022. [Online]. Available: \url{https://arxiv.org/abs/1906.08283}
\BIBentrySTDinterwordspacing

\bibitem{altamirano2023robustscalablebayesianonline}
\BIBentryALTinterwordspacing
M.~Altamirano, F.-X. Briol, and J.~Knoblauch, ``Robust and scalable bayesian online changepoint detection,'' 2023. [Online]. Available: \url{https://arxiv.org/abs/2302.04759}
\BIBentrySTDinterwordspacing

\bibitem{vincent2011connection}
P.~Vincent, ``A connection between score matching and denoising autoencoders,'' \emph{Neural computation}, vol.~23, no.~7, pp. 1661--1674, 2011.

\bibitem{neyman_pearson}
\BIBentryALTinterwordspacing
J.~Neyman and E.~S. Pearson, ``Ix. on the problem of the most efficient tests of statistical hypotheses,'' \emph{Philosophical Transactions of the Royal Society of London. Series A, Containing Papers of a Mathematical or Physical Character}, vol. 231, no. 694-706, pp. 289--337, 1933. [Online]. Available: \url{https://royalsocietypublishing.org/doi/abs/10.1098/rsta.1933.0009}
\BIBentrySTDinterwordspacing

\bibitem{cover_thomas}
T.~M. Cover and J.~A. Thomas, \emph{Elements of Information Theory (Wiley Series in Telecommunications and Signal Processing)}.\hskip 1em plus 0.5em minus 0.4em\relax USA: Wiley-Interscience, 2006.

\bibitem{vantrees_detection}
\BIBentryALTinterwordspacing
H.~L.~V. Trees, \emph{Classical Detection and Estimation Theory}.\hskip 1em plus 0.5em minus 0.4em\relax John Wiley and Sons, Ltd, 2001, ch.~2, pp. 19--165. [Online]. Available: \url{https://onlinelibrary.wiley.com/doi/abs/10.1002/0471221082.ch2}
\BIBentrySTDinterwordspacing

\bibitem{tartakovsky2005general}
A.~G. Tartakovsky and V.~V. Veeravalli, ``General asymptotic bayesian theory of quickest change detection,'' \emph{Theory of Probability \& Its Applications}, vol.~49, no.~3, pp. 458--497, 2005.

\bibitem{tartakovsky2014sequential}
A.~Tartakovsky, I.~Nikiforov, and M.~Basseville, \emph{Sequential analysis: Hypothesis testing and changepoint detection}.\hskip 1em plus 0.5em minus 0.4em\relax CRC Press, 2014.

\bibitem{lorden1971procedures}
G.~Lorden, ``Procedures for reacting to a change in distribution,'' \emph{Ann. Math. Stat.}, pp. 1897--1908, 1971.

\bibitem{page1955test}
E.~Page, ``A test for a change in a parameter occurring at an unknown point,'' \emph{Biometrika}, vol.~42, no. 3/4, pp. 523--527, 1955.

\bibitem{moustakides1986optimal}
G.~V. Moustakides, ``Optimal stopping times for detecting changes in distributions,'' \emph{Ann. Stat.}, vol.~14, no.~4, pp. 1379--1387, 1986.

\bibitem{hyvarinen2005estimation}
A.~Hyv{\"a}rinen, ``Estimation of non-normalized statistical models by score matching.'' \emph{J. Mach. Learn. Res.}, vol.~6, no.~4, 2005.

\bibitem{moustakides_private_comms}
G.~V. Moustakides, private Communication, May 2025.

\bibitem{tartakovsky_personal_communications}
A.~G. Tartakovsky, private Communication, May 2025.

\bibitem{adam_optimizer}
\BIBentryALTinterwordspacing
D.~P. Kingma and J.~Ba, ``Adam: A method for stochastic optimization,'' in \emph{International Conference on Learning Representations (ICLR)}, 2015. [Online]. Available: \url{https://arxiv.org/abs/1412.6980}
\BIBentrySTDinterwordspacing

\bibitem{lai1998information}
T.~L. Lai, ``Information bounds and quick detection of parameter changes in stochastic systems,'' \emph{IEEE Trans. Inf. Theory}, vol.~44, no.~7, pp. 2917--2929, 1998.

\bibitem{mit_hypo_test}
\BIBentryALTinterwordspacing
Y.~W. Y~Polyanskiy, ``Hypothesis testing asymptotics i,'' in \emph{Lecture Notes on Information Theory}, Cambridge~MA, 2015, {MIT OpenCourseWare}. [Online]. Available: \url{https://ocw.mit.edu/courses/6-441-information-theory-spring-2016/5d8f16adc3385c9ff2975b121bd620e4_MIT6_441S16_course_notes.pdf}
\BIBentrySTDinterwordspacing

\bibitem{doob1953stochastic}
J.~L. Doob, \emph{Stochastic processes}.\hskip 1em plus 0.5em minus 0.4em\relax Wiley New York, 1953, vol.~7.

\bibitem{woodroofe1982nonlinear}
M.~Woodroofe, \emph{Nonlinear renewal theory in sequential analysis}.\hskip 1em plus 0.5em minus 0.4em\relax SIAM, 1982.

\bibitem{lorden1970excess}
G.~Lorden, ``On excess over the boundary,'' \emph{Ann. Math. Stat.}, vol.~41, no.~2, pp. 520--527, 1970.

\end{thebibliography}

\balance

\end{document}